\documentclass[10pt]{article} 
\usepackage[accepted]{tmlr}

\usepackage{amsmath,amsfonts,bm}

\def\eqref#1{equation~\ref{#1}}

\def\1{\bm{1}}

\DeclareMathAlphabet{\mathsfit}{\encodingdefault}{\sfdefault}{m}{sl}
\SetMathAlphabet{\mathsfit}{bold}{\encodingdefault}{\sfdefault}{bx}{n}

\newcommand{\E}{\mathbb{E}}

\newcommand{\R}{\mathbb{R}}

\newcommand{\KL}{D_{\mathrm{KL}}}

\usepackage{hyperref}
\usepackage{url}
\usepackage{amsthm}
\newtheorem{theorem}{Theorem}
\newtheorem{corollary}{Corollary}
\newtheorem{assumption}{Assumption}
\newtheorem{lemma}{Lemma}
\newtheorem{proposition}{Proposition}

\theoremstyle{definition}
\newtheorem{remark}{Remark}
\theoremstyle{plain}
\usepackage{algorithm}
\usepackage{algpseudocode}
\usepackage{graphicx}
\usepackage{amssymb}

\usepackage{multirow}
\usepackage{booktabs}

\usepackage{titletoc}
\usepackage{placeins}
\usepackage{changepage}

\newcommand{\printappendixtoc}{%
\begingroup
\section*{Table of Contents}

\begin{adjustwidth}{0.06\textwidth}{0.008\textwidth}
\hrule

\setcounter{tocdepth}{3}

\titlecontents{section}
  [0pt]
  {\addvspace{0.45em}\bfseries}
  {\contentslabel{2.8em}}
  {}
  {\titlerule*[0.6pc]{.}\contentspage}

\titlecontents{subsection}
  [1.5em]
  {}
  {\contentslabel{3.8em}}
  {}
  {\titlerule*[0.6pc]{.}\contentspage}

\titlecontents{subsubsection}
  [3.5em]
  {}
  {\contentslabel{4.8em}}
  {}
  {\titlerule*[0.6pc]{.}\contentspage}

\printcontents[appendix]{}{1}{\setcounter{tocdepth}{3}}
\end{adjustwidth}

\endgroup
}

\definecolor{darkblue}{RGB}{0,0,100}

\hypersetup{
  colorlinks=true,
  linkcolor=darkblue,
  citecolor=darkblue,
  urlcolor=blue
}

\title{PAC-Bayesian Meta-Learning \\for Few-Shot Identification of Linear Dynamical Systems
}

\author{\name Chenfeng Huang \email chenfenghuang@ucla.edu\\
      \addr Department of Statistics \& Data Science \\
      University of California, Los Angeles
      \AND
      \name George Michailidis \email gmichail@stat.ucla.edu \\
      \addr Department of Statistics \& Data Science \\
      University of California, Los Angeles
 }

\def\month{07}  
\def\year{2026} 
\def\openreview{\url{https://openreview.net/forum?id=CiGFpSLzFv}} 

\begin{document}

\maketitle
\begin{abstract}
Identifying linear time-invariant (LTI) dynamical systems from data is especially challenging when trajectories are short, noisy, or high-dimensional. Traditional system identification methods typically treat each system in isolation and therefore fail to exploit shared structure across related systems. We propose a PAC-Bayesian meta-learning framework for
few-shot LTI system identification (PBML-LTI), which learns a transferable prior over task-specific dynamics while preserving task-level heterogeneity. Each task corresponds to an unknown LTI system, and a meta-learner uses a collection of training trajectories to learn a data-dependent prior over transition matrices. Given a new system with limited
trajectory data, PBML-LTI performs Bayesian adaptation under the learned prior to produce a task-specific posterior, yielding both accurate point estimates and principled uncertainty quantification in the few-shot regime.

A key technical challenge is temporal dependence: trajectories generated by LTI systems violate the i.i.d.\ assumptions underlying most existing PAC-Bayes analyses for meta-learning. To address this, we develop a martingale PAC-Bayes analysis for dependent trajectory losses and use it to motivate a fit--KL surrogate objective for meta-training. The resulting
support--query predictive-risk bound clarifies how empirical fit, posterior complexity, and prior quality interact in few-shot adaptation under sequential dependence. We further show how this predictive bound induces problem-specific corollaries for transition-matrix recovery and multi-step trajectory prediction. Together, these results connect uncertainty-aware meta-identification with finite-sample analysis for dependent dynamical data.
\end{abstract}

\section{Introduction}

Linear time-invariant (LTI) dynamical systems are a foundational modeling framework in control, signal processing, and
time-series analysis. A central challenge in this setting is to recover the unknown transition operator from finite
observations. Although this problem can be written in a regression form, it differs fundamentally from standard linear
regression because the covariates are generated by the system itself. The resulting temporal dependence violates the
independence assumptions underlying classical statistical learning theory and makes estimation sensitive to both trajectory
length and system stability. Recent non-asymptotic analyses have clarified these challenges and established finite-sample
error guarantees across stable, marginally stable, and unstable regimes
\citep{faradonbeh2018finite,simchowitz2018learning,sarkar2019near}.

In many modern applications, however, system identification is not performed once for a single system, but repeatedly
across a collection of related systems, such as different devices, subjects, environments, operating modes, or geographic
regions. Such multi-system settings often exhibit shared latent structure that can be exploited to improve data efficiency
when individual trajectories are short or high-dimensional. Existing joint and multi-task approaches leverage this
structure by pooling information across systems, typically through shared-basis, low-dimensional, or representation-based
assumptions, in order to improve estimation accuracy over the observed collection of tasks
\citep{lin2024a,modi2024joint,chen2023multitask}.

Yet many downstream objectives require a different capability: rapid adaptation to a previously unseen system from only a
small amount of new data. This setting arises when deploying dynamical models across populations of related but
heterogeneous entities, where each system has its own local dynamics and only a short calibration window is available. For
example, one may need to tune a controller for a new aircraft operating condition or configuration
\citep{bosworth1992linearized}, or calibrate a forecasting model for a newly observed region or economic unit in
macroeconomic or financial monitoring settings \citep{pesaran2015time,stockwatson2016dynamic}. In such cases, the primary
challenge is no longer only accurate estimation on the observed systems, but reliable few-shot adaptation to new ones
under strict data constraints.

This challenge is especially important when long trajectories are expensive, unsafe, or infeasible to collect because of
changing operating conditions, nonstationarity, or limited observation horizons
\citep{garcia2015safesurvey,ji2023timevarying}. Similar constraints arise in physiological system identification, where
only short data segments may be practically available \citep{ludvig2012shortsegments}. These considerations motivate a
shift in perspective: rather than only pooling data to improve estimation on the training cohort, we seek to extract
transferable structure that supports principled few-shot adaptation and uncertainty quantification on previously unseen
systems.

To this end, we propose a Bayesian meta-learning framework for few-shot identification of linear dynamical systems. We
treat each system as a task drawn from a shared environment and use trajectories from multiple training tasks to learn a
data-dependent prior over system dynamics. When a new task is encountered, this learned prior enables rapid adaptation
from limited observations through Bayesian inference, yielding both a point estimate and a calibrated measure of
uncertainty. Unlike joint learning approaches that primarily use shared structure to improve estimation on the observed
tasks, our objective is to learn an inductive bias that is useful for adaptation to future systems
\citep{finn2017maml,patacchiola2020dkt,muthirayan2022meta}.

This formulation raises a central theoretical question: how can we characterize generalization to new systems when
adaptation is based on limited, temporally dependent data? PAC-Bayes theory provides a natural framework for this purpose,
as it relates generalization to a tradeoff between empirical fit and a complexity penalty measured by the
Kullback--Leibler divergence to a prior \citep{mcallester1999pacbayes,seeger2002pacbayes,catoni2007pacbayes}. This is
especially appealing in meta-learning, where the quality of the learned prior directly affects the speed and reliability
of adaptation to new tasks \citep{liu2021pacbayesmeta,rezazadeh2022unified,guan2022fastrate,farid2021pacbus}. However,
most existing PAC-Bayes meta-learning analyses assume that observations within each task are independent, whereas system
identification typically provides a single temporally dependent trajectory per task.

To bridge this gap, we develop a martingale PAC-Bayes analysis tailored to dependent dynamical data and use it to justify
a practical fit--KL surrogate for meta-training. The resulting theory yields a support--query predictive-risk perspective
for few-shot adaptation under temporal dependence and clarifies how empirical fit, posterior complexity, and prior quality
interact in this setting. This connects uncertainty-aware meta-identification with finite-sample analysis for dependent
dynamical systems.

Our contributions are summarized as follows:
\begin{itemize}
    \item We propose PBML-LTI, a PAC-Bayesian meta-learning framework for few-shot identification of linear
    time-invariant dynamical systems that learns a transferable prior and enables closed-form task adaptation via
    conjugate Bayesian inference.
    \item We develop a martingale PAC-Bayes analysis for temporally dependent trajectory losses and use it to motivate a
    fit--KL surrogate objective for meta-training.
    \item We show through synthetic and real fMRI-based experiments that PBML-LTI achieves improved data efficiency and
    robust multi-step prediction in high-dimensional, few-shot regimes.
\end{itemize}

\section{Related Work}
\label{sec:related_work}

\paragraph{Finite-sample identification of LTI systems from dependent trajectories.}
Classical system identification typically treats dynamical systems in isolation, focusing on estimation from a single trajectory where covariates are endogenously generated and thus temporally dependent. A rich body of non-asymptotic analysis has established sharp finite-time guarantees across stable, marginally stable, and unstable regimes, characterizing how sample complexity scales with the time horizon, system dimension, and spectral properties \citep{faradonbeh2018finite,simchowitz2018learning,sarkar2019near}. These foundational results underscore two critical constraints that our framework must address: the necessity of rigorously handling intra-task sequential dependence, and the dominant role of stability in determining estimation error when data are scarce.

\paragraph{Joint and multi-task identification across related systems.} 
Recent advances address the identification of multiple LTI systems by exploiting shared structural assumptions, such as common bases, low-dimensional subspaces, or coupled optimization objectives \citep{modi2024joint,chen2023multitask}. These joint-learning formulations excel at improving estimation accuracy across a fixed collection of observed systems by ``borrowing strength'' between tasks, aligning with classical multi-task learning paradigms \citep{argyriou2008convex}. In contrast, our framework targets few-shot generalization to previously \textit{unseen} systems. Here, the goal is not merely to refine estimates for the training set, but to leverage the learned structure to rapidly and reliably calibrate a new system from a short trajectory at deployment time.

\paragraph{Meta-learning, hierarchical Bayes, and Bayesian few-shot adaptation.}
Meta-learning formalizes the problem of learning transferable structure that supports rapid adaptation to new tasks drawn from a shared environment. Gradient-based approaches such as MAML optimize performance after a small number of task-level updates and were studied in both supervised and reinforcement-learning settings \citep{finn2017maml}. Thus, the mere presence of non-i.i.d. data within a task is not, by itself, unique to our setting. A complementary perspective interprets meta-learning through hierarchical Bayes, where training tasks are used to learn a prior and test-time adaptation corresponds to posterior inference \citep{grant2018recasting}. Several Bayesian meta-learning methods instantiate this perspective in general few-shot supervised settings, including PLATIPUS \citep{finn2018platipus}, ABML \citep{ravi2019abml}, and iBAML \citep{zhang2023ibaml}; related uncertainty-aware meta-learning ideas have also been explored in settings such as deep kernel transfer \citep{patacchiola2020dkt} and online control \citep{muthirayan2022meta}. Our approach is complementary but tailored to few-shot LTI identification from temporally dependent trajectories. In particular, we exploit a conjugate matrix-normal/Gaussian model so that task adaptation is available in closed form, rather than through iterative gradient-based, amortized, or implicit-gradient updates.

\paragraph{PAC-Bayes generalization and meta-learning.}
PAC-Bayes theory bounds generalization error through a tradeoff between empirical fit and a complexity term measured by the KL divergence to a reference prior \citep{mcallester1999pacbayes,seeger2002pacbayes,catoni2007pacbayes}. This makes it particularly natural for meta-learning, where the quality of the learned prior directly affects adaptation to new tasks \citep{liu2021pacbayesmeta,rezazadeh2022unified,guan2022fastrate,farid2021pacbus}. Most existing PAC-Bayes meta-learning analyses, however, assume that observations within each task are independent. By contrast, LTI system identification provides a single temporally dependent trajectory per task. Relative to single-task non-asymptotic analyses for dependent LTI data \citep{simchowitz2018learning,sarkar2019near}, the additional challenge in our setting is to move from guarantees for point estimators on one system to a support--query meta-learning problem involving task-specific posterior distributions, a learned prior shared across tasks, and a PAC-Bayes change-of-measure argument under temporal dependence. Our contribution is therefore not simply to handle non-i.i.d. within-task data, but to combine few-shot LTI identification, hierarchical Bayesian adaptation, closed-form conjugate task inference, and martingale PAC-Bayes analysis in a unified framework.


\paragraph{Joint learning versus meta-learning in the experimental evaluation.}
While both joint learning and meta-learning evaluate performance on unseen tasks, they differ fundamentally in their
training objectives and adaptation mechanisms. Joint learning typically extracts a static shared structure (e.g., a common
subspace) from training tasks, which is then applied ``frozen'' or with minimal adjustment to new data
\citep{modi2024joint,chen2023multitask}. In contrast, our experimental protocol \textit{explicitly targets few-shot
adaptation}. We adopt a support--query split: for every test task, the model receives only a short support prefix to
calibrate a task-specific posterior distribution, while evaluation is conducted on a disjoint query window. This
distinction is crucial; unlike joint learning, which optimizes aggregate performance over a cohort, our meta-learning
objective is designed to maximize post-adaptation accuracy by leveraging the learned prior to recover dynamics from
minimal data.
\section{Problem Formulation}
\label{sec:problem_formulation}
We consider the problem of identifying a family of related LTI dynamical systems. The available data comprise trajectories collected from multiple tasks, indexed by $m \in [M] := \{1,2,\dots,M\}$.
For each task $m$ and time $t = 0,1,\dots,T_m-1$, the system evolves according to
\begin{equation}
x_{m,t+1} = A_m x_{m,t} + w_{m,t+1},
\label{eq:lti_dynamics}
\end{equation}
where $x_{m,t} \in \R^d$ denotes the system state, $A_m \in \R^{d\times d}$ is an unknown transition matrix, and $\{w_{m,t}\}_{t\ge 1}$ is a stochastic noise process. We assume the full state trajectory $\{x_{m,t}\}_{t=0}^{T_m}$ is observed for each task.

It is convenient to rewrite \eqref{eq:lti_dynamics} in  regression form. To this end, define the per-task data matrices
\begin{equation}
X_m := \big[x_{m,0},x_{m,1},\dots,x_{m,T_m-1}\big]\in\R^{d\times T_m},
\qquad
Y_m := \big[x_{m,1},x_{m,2},\dots,x_{m,T_m}\big]\in\R^{d\times T_m},
\label{eq:LTI}
\end{equation}
where $X_m$ collects the state vectors serving as predictors at each time step, and $Y_m$ collects the responses, corresponding to the next-step-ahead states. Let $\Xi_m := [w_{m,1},\dots,w_{m,T_m}] \in \R^{d\times T_m}$ denote the stacked process-noise matrix. Then, the
trajectory can be written compactly as
\begin{equation}
Y_m = A_m X_m + \Xi_m.
\label{eq:matrix_regression}
\end{equation}
A critical distinction from standard linear regression is that the columns of $X_m$ exhibit inherent temporal dependence and are adapted to the history of the noise process.

To complete the model formulation, we impose the following assumptions to ensure that the model is well posed and to clearly specify the relationships among the members of the family of LTIs in \eqref{eq:lti_dynamics}.

\begin{assumption}[\textbf{Controlled Growth}]\label{as1}
We assume the dynamics of each system exhibit controlled growth over the observed horizon.
Specifically, there exists a constant $c_\rho>0$ such that for all tasks $m$, 
\begin{equation}
\rho(A_m) \le 1 + c_\rho/T_m,
\label{eq:stability}
\end{equation}

where $\rho(A_m)$ denotes the spectral radius of $A_m$. This type of controlled-growth condition allows near-marginal stability while preventing rapid explosion over the finite horizon, which is standard in non-asymptotic LTI identification and is also adopted in single and multi-system learning settings
\citep{faradonbeh2018finite,modi2024joint,simchowitz2018learning,sarkar2019near}.
\end{assumption}

\begin{assumption}[\textbf{Martingale Difference Noise Process}]\label{as2}

Let $\{\mathcal{F}_{m,t}\}_{t\ge 0}$ denote the natural filtration for task $m$, capturing the history of information available up to time $t$. Formally, $\mathcal{F}_{m,t}$ is the smallest $\sigma$-algebra generated by the initial condition and the noise history up to time $t$: \[ \mathcal{F}_{m,t}:=\sigma(x_{m,0},w_{m,1},\dots,w_{m,t}). \] We assume the noise process forms a martingale difference sequence with respect to this filtration and is conditionally sub-Gaussian. That is,
\begin{equation}
    \E[w_{m,t+1} \mid \mathcal{F}_{m,t}] = 0,
\end{equation}
and for all $u \in \mathbb{R}^d$,
\begin{equation}
    \E\left[\exp(\langle u, w_{m,t+1}\rangle) \mid \mathcal{F}_{m,t}\right] \le \exp\left(\frac{\sigma_w^2}{2}\|u\|_2^2\right).
    \label{eq:subgaussian_noise}
\end{equation}
Additionally, we assume the conditional covariance is uniformly bounded by a fixed positive semidefinite matrix, satisfying $\E[w_{m,t+1}w_{m,t+1}^\top \mid \mathcal{F}_{m,t}] \preceq \Sigma_w$. These conditions place our analysis within the self-normalized martingale framework, a standard setting for analyzing sequential least squares and LTI identification from single trajectories \citep{abbasi2011online,simchowitz2018learning,sarkar2019near}.
\end{assumption}

\begin{assumption}[\textbf{Shared Environment}]\label{as3}
We posit a hierarchical generative process where tasks are related via a shared latent structure. Specifically, there exists an unknown hyper-parameter $\phi_\star$ governing the distribution of system dynamics, such that each task parameter $A_m$ is drawn independently:
\begin{equation}
    A_m \mid \phi_\star \stackrel{\mathrm{i.i.d.}}{\sim} p(\cdot\mid\phi_\star), \qquad m=1,\dots,M.
    \label{eq:task_distribution}
\end{equation}
Conditional on $A_m$, the observed trajectories are generated according to the LTI dynamics in \eqref{eq:lti_dynamics}. This assumption of i.i.d.\ task sampling from a fixed meta-distribution is standard in learning-to-learn theory and serves as the foundation for establishing generalization guarantees to novel tasks \citep{baxter2000model,maurer2005algorithmic,finn2017maml,liu2021pacbayesmeta}.
\end{assumption}

\section{PAC-Bayesian Meta-Learning for LTI Identification (PBML-LTI)}
\label{sec:bayes_meta_lti}

In this section, we introduce PBML-LTI, progressing from the hierarchical Bayesian formulation to the practical
meta-training objective and the corresponding few-shot adaptation procedure. We build directly on the setup in
Section~\ref{sec:problem_formulation}: Assumption~\ref{as3} motivates learning shared structure across tasks through a
common prior, while Assumptions~\ref{as1}--\ref{as2} provide the regularity conditions needed for the martingale
PAC-Bayes analysis developed later in this section.

\subsection{Hierarchical Bayesian Model and Task Adaptation}
\label{subsec:hb_model_adapt}

We treat each LTI system as a task with task-specific transition matrix $A_m$, and we seek shared meta-parameters $\phi$
that define a transferable prior over the family $\{A_m\}$. When a new task is encountered with limited data, adaptation
is performed via Bayesian inference under this learned prior, producing both a point estimate and a task-specific measure
of uncertainty~\citep{grant2018recasting}.

\paragraph{Hierarchical Bayes interpretation of meta-learning.}
Our formulation follows a three-level hierarchical Bayesian model:
\begin{equation}
\phi \sim p(\phi),\qquad
A_m \mid \phi \sim p(A\mid \phi),\qquad
D_m \mid A_m \sim p(D\mid A_m),
\end{equation}
where $\phi$ represents the environment-level structure shared across tasks, $A_m$ specifies the task-specific dynamics,
and $D_m$ denotes the observed trajectory for task $m$. In a fully Bayesian treatment one would place a hyper-prior on
$\phi$ and infer both $\phi$ and $\{A_m\}$ jointly. In contrast, we adopt an empirical-Bayes perspective: $\phi$ is
learned across training tasks, while task-level adaptation on a new system is carried out by exact posterior updating for
$A_m$ conditional on the learned $\phi$. This retains the statistical advantages of a shared prior while avoiding the
computational overhead of full posterior inference over $\phi$~\citep{grant2018recasting}.

\paragraph{Prior Distribution And Working Likelihood.}
Consistent with Assumption~\ref{as3}, we posit a shared prior family $P_\phi$ over task parameters and instantiate it as
a matrix-normal distribution~\citep{dawid1981matrix,gupta2018matrix,west1997bayesian}:
\begin{equation}
A_m \mid \phi ~\sim~ \mathcal{MN}\!\left(W,\; I_d,\; V\right),
\qquad \phi := (W,V,\sigma^2),
\label{eq:mn_prior}
\end{equation}
where $W\in\R^{d\times d}$ is the shared prior mean, and $V\in\R^{d\times d}$ is symmetric positive definite. Equivalently,
\[
\mathrm{vec}(A_m)\sim \mathcal{N}(\mathrm{vec}(W),\,V\otimes I_d),
\]
so $V$ controls how the entries of $A_m$ co-vary across columns. This structure provides a convenient and expressive way
to couple tasks through a shared mean and covariance, while still admitting efficient closed-form adaptation.

For tractable task-level inference, we adopt a conditional Gaussian working likelihood:
\begin{equation}
x_{m,t+1}\mid x_{m,t},A_m ~\sim~ \mathcal{N}\!\left(A_m x_{m,t},\,\sigma^2 I_d\right),
\qquad t=0,\dots,T_m-1.
\label{eq:gaussian_likelihood}
\end{equation}

This matrix-normal/Gaussian specification is adopted not merely because it is conjugate, but because it matches the
structure of the problem. The transition parameter $A_m$ is intrinsically matrix-valued, and the matrix-normal prior
preserves this structure directly rather than imposing it only after vectorization. Under the Gaussian working likelihood
in \eqref{eq:gaussian_likelihood}, it yields closed-form task adaptation and exact expressions for the posterior mean and
covariance, the posterior-expected squared loss, and the posterior-to-prior KL divergence used by PBML-LTI. At the same
time, it provides an interpretable and computationally efficient structured prior: $W$ captures average dynamics shared
across tasks, while $V$ encodes cross-column variability in a compact form. Richer conjugate or nonconjugate priors are
possible, but they would substantially complicate the inner-loop posterior updates and the resulting fit--KL objective.

\paragraph{Closed-Form Task Adaptation.}
Given task data $X_m,Y_m$ from the matrix regression model \eqref{eq:matrix_regression}, combining the Gaussian working
likelihood \eqref{eq:gaussian_likelihood} with the matrix-normal prior \eqref{eq:mn_prior} yields a conjugate
matrix-normal posterior~\citep{dawid1981matrix,gupta2018matrix,west1997bayesian}:
\begin{equation}
Q_{m,\phi}=A_m \mid D_m,\phi ~\sim~ \mathcal{MN}\!\left(M_m,\; I_d,\; V_m\right).
\label{eq:mn_posterior}
\end{equation}
The posterior covariance and mean are
\begin{align}
V_m
&= \left(V^{-1} + \frac{1}{\sigma^2} X_m X_m^\top \right)^{-1},
\label{eq:posterior_cov}
\\
M_m
&= \left(\frac{1}{\sigma^2} Y_m X_m^\top + W V^{-1}\right)V_m.
\label{eq:posterior_mean}
\end{align}
We use the posterior mean $\widehat A_m:=M_m$ as the task-level point estimate, and $V_m$ as the task-level uncertainty
measure.

The same conjugate structure also yields closed-form expressions for the posterior expectation of the Gaussian loss and
for the posterior-to-prior KL divergence. In particular,
\begin{equation}
\E_{A\sim Q_{m,\phi}}\!\left[\|Y_m-AX_m\|_F^2\right]
=
\|Y_m-M_mX_m\|_F^2
+
d\,\mathrm{tr}(V_m X_mX_m^\top),
\label{eq:posterior_sq_loss_closed_form}
\end{equation}
and
\begin{equation}
\KL(Q_{m,\phi}\|P_\phi)
=
\frac12
\left[
d\Big(\mathrm{tr}(V^{-1}V_m)-\log\det(V^{-1}V_m)-d\Big)
+
\mathrm{tr}\!\big((M_m-W)V^{-1}(M_m-W)^\top\big)
\right].
\label{eq:matrix_normal_kl_closed_form}
\end{equation}
Thus, both the posterior expectation of the loss and the KL term can be computed exactly through matrix products, linear
solves, traces, and log-determinants, without Monte Carlo approximation.

Closed-form adaptation therefore specifies how a learned prior $P_\phi$ is converted into a task-specific posterior
$Q_{m,\phi}$. The remaining question is how to learn $\phi$ from many related training tasks in a way that supports
generalization to unseen systems. To address this under temporal dependence, we next use the martingale PAC-Bayes
structure to motivate a practical fit--KL surrogate for meta-training.

\subsection{Martingale PAC-Bayes Motivation for Meta-Training}
\label{sec:mpb_motivation}

Within each task, observations form a single trajectory and are therefore not i.i.d. Classical PAC-Bayes results for
independent samples do not apply directly because the one-step losses depend on the past through the state. A natural way
to handle this dependence is to work with conditional risks and exploit the martingale structure induced by the filtration
in Assumption~\ref{as2}.

For the remainder of this subsection, we fix an arbitrary task $m$ and suppress the task index for notational simplicity.
Write $x_t:=x_{m,t}$ and $T:=T_m$. The trajectory $x_0,\dots,x_T$ induces $T$ one-step prediction losses.

We first state two supporting technical ingredients and then the main martingale PAC-Bayes result. Theorem~\ref{thm:inst_nll}
identifies the instantaneous Gaussian log-loss and the associated martingale-difference structure. Theorem~\ref{thm:exp_supermg_psi}
uses this structure together with an MGF bound to construct an exponential supermartingale. Theorem~\ref{thm:mpb_with_psi}
then applies a PAC-Bayes change-of-measure argument to obtain the predictive-risk bound that motivates the fit--KL
surrogate used by PBML-LTI.

\begin{theorem}[\textbf{Instantaneous Gaussian Log-Loss}]\footnote{The proof is included in Appendix~\ref{sec:proof1}.}
\label{thm:inst_nll}
Fix $\sigma^2>0$ and consider the Gaussian conditional likelihood model
$p_\sigma(x_t\mid x_{t-1},A)=\mathcal{N}(A x_{t-1},\sigma^2 I_d)$.
The associated instantaneous log-loss is
\begin{equation}
\ell_t(A)
:= -\log p_\sigma(x_t\mid x_{t-1},A)
= \frac{1}{2\sigma^2}\|x_t-Ax_{t-1}\|_2^2+\frac{d}{2}\log(2\pi\sigma^2),
\qquad t=1,\dots,T .
\label{eq:inst_nll}
\end{equation}
Let $\{\mathcal{F}_t\}_{t=0}^T$ be any filtration such that $\ell_t(A)$ is $\mathcal{F}_t$-measurable, and define
\begin{equation}
d_t(A):=\E\!\left[\ell_t(A)\mid \mathcal{F}_{t-1}\right]-\ell_t(A).
\label{eq:dt_def}
\end{equation}
Then for every fixed $A$, $\{d_t(A)\}_{t=1}^T$ is a martingale difference sequence:
$\E[d_t(A)\mid\mathcal{F}_{t-1}]=0$ for all $t$.
\end{theorem}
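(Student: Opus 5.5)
The statement has two parts, and I will handle them in order: first the explicit form of the log-loss, then the martingale difference property.

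\emph{Step 1: the log-loss formula.} This is a direct computation. The density of $\mathcal{N}(Ax_{t-1},\sigma^2 I_d)$ evaluated at $x_t$ is
\[
(2\pi\sigma^2)^{-d/2}\exp\!\big(-\|x_t-Ax_{t-1}\|_2^2/(2\sigma^2)\big).
\]
Taking $-\log$ gives \eqref{eq:inst_nll} immediately.

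\emph{Step 2: integrability, so that $d_t(A)$ is well defined.} I would first check that $\E[\ell_t(A)]<\infty$ for each fixed $A$. Under the model \eqref{eq:lti_dynamics}, $x_t - A x_{t-1} = (A_m - A)x_{t-1} + w_t$, so
\[
\|x_t-Ax_{t-1}\|_2^2 \le 2\|A_m-A\|_2^2\|x_{t-1}\|_2^2 + 2\|w_t\|_2^2 .
\]
Assumption~\ref{as2} makes each $w_s$ sub-Gaussian, hence square-integrable. Unrolling the recursion, $x_{t-1}$ is a finite linear combination of $x_0$ and $w_1,\dots,w_{t-1}$, so it is square-integrable as long as $x_0$ is. I would add $\E\|x_0\|_2^2<\infty$ as an implicit standing assumption, or simply note that the conditional expectation of the nonnegative variable $\ell_t(A)$ is always defined in the extended sense. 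Thus $\E[\ell_t(A)\mid\mathcal{F}_{t-1}]$ exists and $d_t(A)$ is integrable.

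\emph{Step 3: the martingale difference property.} This is the tower property. First, $d_t(A)$ is $\mathcal{F}_t$-measurable: $\ell_t(A)$ is $\mathcal{F}_t$-measurable by hypothesis, and $\E[\ell_t(A)\mid\mathcal{F}_{t-1}]$ is $\mathcal{F}_{t-1}\subseteq\mathcal{F}_t$-measurable. Then
\[
\E[d_t(A)\mid\mathcal{F}_{t-1}] = \E[\ell_t(A)\mid\mathcal{F}_{t-1}] - \E[\ell_t(A)\mid\mathcal{F}_{t-1}] = 0 ,
\]
since conditioning an $\mathcal{F}_{t-1}$-measurable integrable variable on $\mathcal{F}_{t-1}$ returns it unchanged.

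Notably, the argument never uses that the working model is correctly specified, and this is the point worth emphasizing. One could also compute the conditional mean explicitly, $\E[\|x_t-Ax_{t-1}\|^2\mid\mathcal{F}_{t-1}] = \|(A_m-A)x_{t-1}\|^2 + \mathrm{tr}\,\E[w_tw_t^\top\mid\mathcal{F}_{t-1}]$, using the MDS property of $w_t$. That is useful later, but it is not needed here.

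The only real obstacle is the integrability in Step 2, and it is mild.
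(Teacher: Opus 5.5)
Your proposal is correct and follows the paper's proof. The log-loss formula comes from taking $-\log$ of the Gaussian density, and the martingale-difference property follows because $\E[\ell_t(A)\mid\mathcal{F}_{t-1}]$ is $\mathcal{F}_{t-1}$-measurable, so conditioning on $\mathcal{F}_{t-1}$ again returns it unchanged. The one difference is that the paper simply takes integrability of $\ell_t(A)$ as given (``by assumption''), whereas you derive it from the dynamics and Assumption~\ref{as2}; that derivation needs an extra moment condition $\E\|x_0\|_2^2<\infty$ and requires treating $A_m$ as fixed, as in the paper's task-conditional analysis.
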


\begin{theorem}[\textbf{Exponential Supermartingale}]\footnote{The proof is included in Appendix~\ref{sec:proof2}.}
\label{thm:exp_supermg_psi}
Suppose that for all $\lambda_{\mathrm{PB}}\in(0,1]$ and all $t=1,\dots,T$,
\begin{equation}
\E\!\left[\exp\!\big(\lambda_{\mathrm{PB}} d_t(A)\big)\mid \mathcal{F}_{t-1}\right]
\le \exp\!\big(\psi_t(\lambda_{\mathrm{PB}})\big).
\label{eq:mgf_bound}
\end{equation}
Let $S_t(A):=\sum_{i=1}^t d_i(A)$ and define
\begin{equation}
Z_t(A;\lambda_{\mathrm{PB}})
:= \exp\!\left(\lambda_{\mathrm{PB}} S_t(A)-\sum_{i=1}^t \psi_i(\lambda_{\mathrm{PB}})\right),
\qquad t=0,1,\dots,T,
\label{eq:zt_def}
\end{equation}
with $Z_0(A;\lambda_{\mathrm{PB}})=1$. Then $\{Z_t(A;\lambda_{\mathrm{PB}})\}_{t=0}^T$ is a nonnegative supermartingale with respect to
$\{\mathcal{F}_t\}_{t=0}^T$.
\end{theorem}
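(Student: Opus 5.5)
The plan is the standard multiplicative decomposition of $Z_t$ combined with the tower property. Fix $A$ and $\lambda_{\mathrm{PB}}\in(0,1]$.

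\textbf{Adaptedness and nonnegativity.} Since $\ell_t(A)$ is $\mathcal{F}_t$-measurable by hypothesis, $d_t(A)$ is $\mathcal{F}_t$-measurable. Its first term, $\E[\ell_t(A)\mid\mathcal{F}_{t-1}]$, is $\mathcal{F}_{t-1}\subseteq\mathcal{F}_t$-measurable. Hence $S_t(A)$ is $\mathcal{F}_t$-measurable.

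We need $\psi_i(\lambda_{\mathrm{PB}})$ to be deterministic, or at least $\mathcal{F}_{i-1}$-measurable (predictable); the statement is read this way. Then $Z_t$ is $\mathcal{F}_t$-measurable. Nonnegativity is immediate because $Z_t$ is an exponential, and $Z_0=1$ by convention.

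\textbf{Integrability and the supermartingale step.} Write
\[
Z_t(A;\lambda_{\mathrm{PB}})=Z_{t-1}(A;\lambda_{\mathrm{PB}})\,\exp\!\big(\lambda_{\mathrm{PB}} d_t(A)-\psi_t(\lambda_{\mathrm{PB}})\big).
\]
Since $Z_{t-1}$ and $\psi_t$ are $\mathcal{F}_{t-1}$-measurable and everything is nonnegative, conditional expectation may be taken factor by factor without integrability caveats. This uses the generalized conditional expectation for nonnegative variables. We get
\[
\E[Z_t\mid\mathcal{F}_{t-1}]=Z_{t-1}e^{-\psi_t(\lambda_{\mathrm{PB}})}\,\E\!\left[e^{\lambda_{\mathrm{PB}}d_t(A)}\mid\mathcal{F}_{t-1}\right]\le Z_{t-1},
\]
where the inequality is the assumed MGF bound \eqref{eq:mgf_bound}.

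Integrability then follows by induction via the tower property: $\E[Z_t]\le\E[Z_{t-1}]\le\cdots\le\E[Z_0]=1<\infty$. This gives the supermartingale property on $\{0,\dots,T\}$.

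\textbf{Main obstacle.} The only delicate point is a measurability subtlety, not a computation. For $d_t(A)$ to be well defined, $\E[\ell_t(A)\mid\mathcal{F}_{t-1}]$ must exist. This holds because $\ell_t\ge \tfrac d2\log(2\pi\sigma^2)$ is bounded below and has finite conditional mean under sub-Gaussian noise (Assumption~\ref{as2}); alternatively we can work with nonnegative variables throughout.

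I would also state explicitly that $\psi_t$ is taken predictable, so that it can be pulled out of the conditional expectation. Theorem~\ref{thm:inst_nll} is not needed for the inequality itself; it only guarantees that $d_t$ is centered, which makes the MGF hypothesis natural.
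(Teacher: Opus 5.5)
Your proposal is correct and matches the paper's proof. Both write $Z_t(A;\lambda_{\mathrm{PB}})=Z_{t-1}(A;\lambda_{\mathrm{PB}})\exp\big(\lambda_{\mathrm{PB}}d_t(A)-\psi_t(\lambda_{\mathrm{PB}})\big)$, pull the $\mathcal{F}_{t-1}$-measurable factor out of the conditional expectation, and apply \eqref{eq:mgf_bound}; your added points on adaptedness, integrability via $\E[Z_t]\le 1$, and treating $\psi_t$ as deterministic or predictable fill in details the paper leaves implicit.
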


We continue to write $\ell_t(\cdot):=\ell_{m,t}(\cdot)$ and $\mathcal{F}_t:=\mathcal{F}_{m,t}$.

\begin{theorem}[\textbf{Martingale PAC-Bayes Bound}]\footnote{The proof is included in Appendix~\ref{sec:proof3}.}
\label{thm:mpb_with_psi}
Let $P$ be any prior distribution over $A$ independent of the trajectory, and let $Q$ be any posterior distribution over
$A$. Under the setting of Theorems~\ref{thm:inst_nll} and \ref{thm:exp_supermg_psi}, for any $\delta\in(0,1)$ and any
$\lambda_{\mathrm{PB}}\in(0,1]$, with probability at least $1-\delta$,
\begin{equation}
\frac{1}{T}\sum_{t=1}^T \E_{A\sim Q}\!\left[\E\!\left[\ell_t(A)\mid \mathcal{F}_{t-1}\right]\right]
\le
\frac{1}{T}\sum_{t=1}^T \E_{A\sim Q}\!\left[\ell_t(A)\right]
+
\frac{1}{\lambda_{\mathrm{PB}} T}
\Big(\KL(Q\|P)+\log\tfrac{1}{\delta}+\sum_{t=1}^T \psi_t(\lambda_{\mathrm{PB}})\Big).
\label{eq:mpb_with_psi}
\end{equation}
\end{theorem}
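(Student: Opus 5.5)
The argument is the standard Donsker--Varadhan change-of-measure applied to the exponential supermartingale of Theorem~\ref{thm:exp_supermg_psi}, evaluated at the terminal time $T$. Fix $\lambda_{\mathrm{PB}}\in(0,1]$. By Theorem~\ref{thm:exp_supermg_psi}, for every fixed $A$ the process $Z_t(A;\lambda_{\mathrm{PB}})$ is a nonnegative supermartingale with $Z_0=1$. Iterating the tower property therefore gives $\E[Z_T(A;\lambda_{\mathrm{PB}})]\le 1$. Next I integrate over the prior. Since $P$ does not depend on the trajectory and $Z_T\ge 0$, Tonelli's theorem lets me exchange the order of expectations:
\[
\E\Big[\E_{A\sim P}\, Z_T(A;\lambda_{\mathrm{PB}})\Big]=\E_{A\sim P}\Big[\E\, Z_T(A;\lambda_{\mathrm{PB}})\Big]\le 1 .
\]
Markov's inequality then yields, with probability at least $1-\delta$,
\[
\E_{A\sim P}\, Z_T(A;\lambda_{\mathrm{PB}})\le 1/\delta .
\]

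On this event I apply the Donsker--Varadhan variational formula. It states that for any measurable $h$ and any $Q$ (with $\KL(Q\|P)<\infty$; otherwise the bound is trivial),
\[
\E_{A\sim Q}[h(A)]\le \KL(Q\|P)+\log \E_{A\sim P}\big[e^{h(A)}\big].
\]
I take $h(A)=\lambda_{\mathrm{PB}} S_T(A)-\sum_{t=1}^T\psi_t(\lambda_{\mathrm{PB}})$, so that $e^{h}=Z_T$. This gives
\[
\lambda_{\mathrm{PB}}\,\E_{A\sim Q}[S_T(A)]-\sum_{t=1}^T\psi_t(\lambda_{\mathrm{PB}})\le \KL(Q\|P)+\log\tfrac1\delta .
\]
The event does not depend on $Q$, so this holds simultaneously for all posteriors $Q$, including data-dependent ones. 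That uniformity is the point of the change of measure.

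Finally I unpack $S_T(A)=\sum_{t=1}^T\big(\E[\ell_t(A)\mid\mathcal{F}_{t-1}]-\ell_t(A)\big)$, using the definition of $d_t$ from Theorem~\ref{thm:inst_nll}. By linearity of $\E_{A\sim Q}$,
\[
\E_{A\sim Q}[S_T(A)]=\sum_t \E_{A\sim Q}\E[\ell_t(A)\mid\mathcal{F}_{t-1}]-\sum_t\E_{A\sim Q}\ell_t(A).
\]
Rearranging and dividing by $\lambda_{\mathrm{PB}}T>0$ gives \eqref{eq:mpb_with_psi}.

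The main subtlety is measurability and integrability rather than any inequality. The map $(A,\omega)\mapsto Z_T(A;\lambda_{\mathrm{PB}})$ must be jointly measurable for Tonelli to apply. Also, $\E_{A\sim Q}\E[\ell_t(A)\mid\mathcal{F}_{t-1}]$ must be interpreted with $Q$ treated as fixed inside the conditional expectation: the expectation is evaluated pointwise in $A$ and only afterwards integrated against the possibly data-dependent $Q$.

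For measurability, I would note that $\ell_t$ is a continuous quadratic in $A$ and that the conditional expectation admits a version measurable in $(A,\omega)$. For integrability, I would handle the case of infinite expectations by assuming $\E_Q|\ell_t(A)|<\infty$; if this fails, the right-hand side of \eqref{eq:mpb_with_psi} is $+\infty$ and the bound holds trivially.
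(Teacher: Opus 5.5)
Your proposal is correct and follows essentially the same route as the paper. Both arguments use the exponential supermartingale at time $T$, average over the data-independent prior $P$, apply Markov's inequality, and then invoke the Donsker--Varadhan change of measure with $h=\lambda_{\mathrm{PB}}S_T-\sum_t\psi_t(\lambda_{\mathrm{PB}})$. The paper phrases the averaging step as a mixture supermartingale $\E_{A\sim P}[Z_t(A;\lambda_{\mathrm{PB}})]$, whereas you go directly through $\E[Z_T(A;\lambda_{\mathrm{PB}})]\le 1$ and Tonelli; you also spell out the measurability and integrability conditions that the paper leaves implicit.
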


Theorems~\ref{thm:inst_nll} and \ref{thm:exp_supermg_psi} are auxiliary technical ingredients for the main result,
Theorem~\ref{thm:mpb_with_psi}. Specifically, Theorem~\ref{thm:inst_nll} shows that the gap between conditional
predictive loss and realized loss forms a martingale difference sequence, while
Theorem~\ref{thm:exp_supermg_psi} turns an MGF control on this sequence into an exponential supermartingale.
Theorem~\ref{thm:mpb_with_psi} then applies a PAC-Bayes change-of-measure argument to lift this concentration statement
from a fixed $A$ to a posterior distribution $Q$ over $A$, which is exactly the object produced by Bayesian adaptation.

For later reference, define the conditional and empirical trajectory risks
\begin{equation}
L_T(Q)
:=
\frac{1}{T}\sum_{t=1}^T \E_{A\sim Q}\!\left[\E\!\left[\ell_t(A)\mid \mathcal{F}_{t-1}\right]\right],
\qquad
\widehat{L}_T(Q)
:=
\frac{1}{T}\sum_{t=1}^T \E_{A\sim Q}\!\left[\ell_t(A)\right].
\label{eq:traj_risks_def}
\end{equation}
Thus, Theorem~\ref{thm:mpb_with_psi} gives a generic predictive-risk statement for dependent trajectories: with high
probability, the conditional predictive risk is controlled by empirical in-trajectory fit, a KL-to-prior complexity term,
and a martingale concentration penalty.

In our hierarchical model, we take $P=P_\phi$ to be the learned matrix-normal prior \eqref{eq:mn_prior}, and we restrict
$Q$ to be the conjugate posterior $Q_{m,\phi}$ in \eqref{eq:mn_posterior}. At this stage, however, the result is still a
generic predictive-risk statement. In the next subsection, we use this structure to motivate a practical fit--KL
surrogate for meta-training. Later, in Section~\ref{theoretical_analysis}, we return to this result and specialize it to
the support--query setting used in few-shot evaluation.

    \subsection{Meta-Training Objective}
    \label{subsec:meta_objective}
    
    The martingale PAC-Bayes bound in Theorem~\ref{thm:mpb_with_psi} contains the term
$\sum_{t=1}^T\psi_t(\lambda_{\mathrm{PB}})$, which captures the conditional fluctuation scale of the centered loss
increments $d_t(A)$. Under a uniform conditional MGF control, this term becomes explicit and independent of the
meta-parameters that define the prior, so it influences the bound only through constants and through the global weighting
of the KL term.

\begin{assumption}[\textbf{Uniform moment generating function (MGF) control}]
\label{as4}
Recall $d_t(A):=\E[\ell_t(A)\mid\mathcal{F}_{t-1}]-\ell_t(A)$. We assume there exists a constant $v>0$ such that, for all
$t\in\{1,\dots,T\}$, all $A$, and all $\lambda_{\mathrm{PB}}\in(0,1]$,
\begin{equation}
\log \E\!\left[
\exp\!\big(\lambda_{\mathrm{PB}} d_t(A)\big)
\,\middle|\,
\mathcal{F}_{t-1}
\right]
\le
\frac{\lambda_{\mathrm{PB}}^2 v}{2}.
\label{eq:uniform_mgf_control}
\end{equation}
\end{assumption}

Equivalently, \eqref{eq:mgf_bound} holds with
$\psi_t(\lambda_{\mathrm{PB}})=\lambda_{\mathrm{PB}}^2v/2$ for
$\lambda_{\mathrm{PB}}\in(0,1]$, so that
\[
\sum_{t=1}^T \psi_t(\lambda_{\mathrm{PB}})
\le
\frac{\lambda_{\mathrm{PB}}^2vT}{2}.
\]
This is the range of $\lambda_{\mathrm{PB}}$ used in the PAC-Bayes theorem and corollaries below. The constant $v$ is a
valid conditional MGF variance proxy: choosing a larger valid $v$ preserves the inequality but loosens the bound, whereas
choosing $v$ too small may invalidate the MGF condition.

\paragraph{Self-normalization and bound tightness.}
The quantity $vT$ can be interpreted as a uniform predictable fluctuation scale for the centered martingale loss process
$d_t(A)$. A sharper fully self-normalized analysis could replace this uniform proxy by a predictable quadratic-variation
or data-dependent variance term. We use the simpler uniform formulation to keep the PAC-Bayes statement transparent and
directly compatible with the fit--KL surrogate. This also clarifies when the bound is expected to be tight: it is tighter
when trajectories remain controlled, posterior mass concentrates on predictors with small one-step residuals, and
conditional loss fluctuations are small. Conversely, if state norms or prediction errors grow, as in unstable systems,
the predictable variance scale becomes large and any valid uniform choice of $v$ necessarily yields a looser bound.

Under Assumption~\ref{as4}, Theorem~\ref{thm:mpb_with_psi} yields a generic predictive-risk bound of the form
\begin{equation}
L_T(Q)
\le
\widehat{L}_T(Q)
+
\frac{1}{\lambda_{\mathrm{PB}}T}\KL(Q\|P)
+
\text{terms depending only on }(\delta,\lambda_{\mathrm{PB}},v,T),
\qquad
\lambda_{\mathrm{PB}}\in(0,1].
\label{eq:generic_pb_shape}
\end{equation}
This result is \emph{not} used as the exact optimization objective in training. Instead, it provides the structural
motivation for the surrogate adopted by PBML-LTI. For fixed $(\delta,\lambda_{\mathrm{PB}},v)$ and $T$, the dependence
on the meta-parameters $\phi$ enters through the empirical fit $\widehat{L}_T(Q)$ and the KL term $\KL(Q\|P)$, suggesting
a fit--KL tradeoff as the core training criterion.

In the main implementation, we use the canonical choice corresponding to $\lambda_{\mathrm{PB}}=1$, which yields the
standard fit--KL tradeoff, preserves the usual conjugate Bayesian update, and avoids introducing an additional
bound-specific tuning parameter in meta-training. In the experiments, we also study an implementation-level training
temperature $\lambda_{\mathrm{tr}}>0$ that rescales the fit--KL tradeoff. This empirical temperature should be
distinguished from the PAC-Bayes parameter $\lambda_{\mathrm{PB}}$: the formal bound is stated only for
$\lambda_{\mathrm{PB}}\in(0,1]$, while values $\lambda_{\mathrm{tr}}>1$ are included only as practical stress tests of
more data-driven adaptation. We report this sensitivity study in Section~\ref{subsec:lambda_sensitivity}.
    For each training task $m\in[M_{\mathrm{tr}}]$, we observe a trajectory $D_m=\{x_{m,t}\}_{t=0}^{T_m}$ and its regression
    matrices $(X_m,Y_m)$ defined in \eqref{eq:LTI}. Given the current meta-parameters
    $\phi=(W,V,\sigma^2)$, we form the conjugate posterior $Q_{m,\phi}$ in \eqref{eq:mn_posterior} with parameters
    $(M_m,V_m)$ from \eqref{eq:posterior_cov} and \eqref{eq:posterior_mean}. We then define the per-task empirical fit term
    \begin{equation}
    \mathcal{L}_{\mathrm{fit},m}(\phi)
    :=
    \frac{1}{T_m}\E_{A\sim Q_{m,\phi}}\!\left[\sum_{t=1}^{T_m}\ell_{m,t}(A)\right]
    =
    \frac{1}{2\sigma^2T_m}
    \left[
    \|Y_m-M_mX_m\|_F^2
    +
    d\,\mathrm{tr}(V_m X_mX_m^\top)
    \right]
    +\frac{d}{2}\log(2\pi\sigma^2),
    \label{eq:fit_term_def}
    \end{equation}
    where the equality follows from \eqref{eq:posterior_sq_loss_closed_form}. Likewise, the per-task complexity term is
    \begin{equation}
    \begin{aligned}
    \mathcal{L}_{\mathrm{kl},m}(\phi)
    :=
    \frac{1}{T_m}\KL\!\left(Q_{m,\phi}\,\|\,P_\phi\right)
    =
    \frac{1}{2T_m}
    \Big[\,&
    d\Big(\mathrm{tr}(V^{-1}V_m)-\log\det(V^{-1}V_m)-d\Big)
    \\
    &+
    \mathrm{tr}\!\big((M_m-W)V^{-1}(M_m-W)^\top\big)
    \Big].
    \end{aligned}
    \label{eq:kl_term_def}
    \end{equation}
    
    Motivated by the PAC-Bayes structure above, we optimize the following surrogate:
    \begin{equation}
    \min_{\phi}\;
    \frac{1}{M_{\mathrm{tr}}}\sum_{m=1}^{M_{\mathrm{tr}}}
    \left[
    \mathcal{L}_{\mathrm{fit},m}(\phi)
    +
    \mathcal{L}_{\mathrm{kl},m}(\phi)
    \right]
    \;+\;
    \mathcal{R}_{\mathrm{hyper}}(W,V)
    \;+\;
    \mathcal{R}_{\mathrm{stab}}(W;\rho_0),
    \label{eq:meta_train_objective_repo}
    \end{equation}
    where the additional regularizers are practical optimization and prior-shaping terms rather than components of the formal PAC-Bayes theorem:
    \begin{equation}
    \mathcal{R}_{\mathrm{hyper}}(W,V)
    :=
    \frac{1}{2\tau_W^2}\|W\|_F^2
    +
    \lambda_V\left(\frac{1}{2}\|V-I_d\|_F^2-\log\det(V)\right),
    \label{eq:hyper_reg_repo}
    \end{equation}
    and
    \begin{equation}
    \mathcal{R}_{\mathrm{stab}}(W;\rho_0)
    :=
    \big(\max\{0,\rho(W)-\rho_0\}\big)^2.
    \label{eq:stab_reg_repo}
    \end{equation}
  Here $\tau_W>0$ controls shrinkage on $W$, $\lambda_V\ge0$ encourages a well-scaled and well-conditioned $V$, and
$\rho_0\in(0,1)$ is a target stability margin for the shared mean dynamics. The stability regularizer should not be
interpreted as directly enforcing Assumption~\ref{as1}, which is a data-generating condition on the true task matrices
$A_m$. Instead, it acts on the learned shared prior mean and encourages the prior to concentrate near dynamically
well-behaved transition matrices, which can indirectly improve posterior adaptation and numerical stability.
    
    To ensure $V$ remains symmetric positive definite during optimization, we parameterize
    \begin{equation}
    V = LL^\top + \epsilon I_d,
    \label{eq:chol_param}
    \end{equation}
    where $L$ is learnable lower-triangular and $\epsilon>0$ is a small diagonal shift.
    
    With the objective \eqref{eq:meta_train_objective_repo} in place, meta-training reduces to repeatedly
    (i) computing conjugate posteriors for minibatches of tasks in closed form and
    (ii) updating $\phi$ through stochastic gradients of the resulting fit--KL surrogate.

\subsection{Task-Level Inference and Adaptation}
\label{sec:inference}

After meta-training, we perform task-level inference on previously unseen tasks by adapting the learned prior to a short
support trajectory. For each new task $m$ with trajectory length $T_m$, we first select a support prefix of length
$S_m=\min\{T_{\mathrm{sup}},T_m\}$ and form the corresponding regression matrices
$X_m^{\mathrm{sup}}$ and $Y_m^{\mathrm{sup}}$ from the observed state transitions.

Given the learned meta-parameters $\phi=(W,V,\sigma^2)$, adaptation is carried out via a closed-form Bayesian update.
Specifically, the matrix-normal prior is combined with the support data to produce a task-specific posterior
\[
Q_{m,\phi}=\mathcal{MN}(M_m,I_d,V_m),
\]
where the posterior mean $M_m$ and covariance $V_m$ are given by \eqref{eq:posterior_cov} and
\eqref{eq:posterior_mean}. We use the posterior mean $\widehat{A}_m:=M_m$ as a point estimate of the task dynamics,
while the posterior covariance quantifies task-specific uncertainty and governs the strength of shrinkage toward the
meta-prior.

The posterior mean is used for the reported point-estimate metrics because both $E_A$ and
$E_{\mathrm{traj}}$ are defined for a single transition matrix and a single deterministic open-loop rollout. Under
squared-error loss, $M_m$ is the natural Bayes point estimate. The posterior distribution itself is not discarded:
the covariance $V_m$ enters the posterior-expected fit term in the meta-training objective and provides task-specific
uncertainty information. In applications where uncertainty-aware prediction is desired, one can sample
$A^{(s)}\sim Q_{m,\phi}$ and propagate each sampled transition matrix to obtain posterior predictive rollout bands or
credible regions for entries of the transition matrix.

We summarize the overall procedure in Algorithm~\ref{alg:meta_train_and_infer}. In particular, meta-training optimizes the
fit--KL surrogate over training tasks, whereas test-time adaptation consists only of the closed-form posterior update on
the support prefix followed by prediction or rollout using the posterior mean.

\begin{algorithm}[t]
\caption{PBML-LTI Meta-Training and Task Adaptation}
\label{alg:meta_train_and_infer}
\begin{algorithmic}[1]

\Statex \textbf{Meta-training}
\Require Training tasks $\mathcal{T}_{\mathrm{train}}$; steps $S$; batch size $B$.
    
\For{$s=1,\dots,S$}
    \State Sample minibatch $\mathcal{B}\subset\mathcal{T}_{\mathrm{train}}$ with $|\mathcal{B}|=\min(B,|\mathcal{T}_{\mathrm{train}}|)$.
    \For{each task $m\in\mathcal{B}$}
        \State Form $(X_m,Y_m)$ from the trajectory as in \eqref{eq:LTI}.
        \State Compute posterior parameters $(M_m,V_m)$ via \eqref{eq:posterior_cov} and \eqref{eq:posterior_mean}.
        \State Evaluate $\mathcal{L}_{\mathrm{fit},m}$ and $\mathcal{L}_{\mathrm{kl},m}$ via \eqref{eq:fit_term_def} and \eqref{eq:kl_term_def}.
    \EndFor
    \State $L_{\mathrm{tr}}\gets \frac{1}{|\mathcal{B}|}\sum_{m\in\mathcal{B}}\big(\mathcal{L}_{\mathrm{fit},m}+\mathcal{L}_{\mathrm{kl},m}\big)
    +\mathcal{R}_{\mathrm{hyper}}(W,V)+\mathcal{R}_{\mathrm{stab}}(W)$.
    \State Adam update of $(W,V,\sigma^2)$ using $\nabla L_{\mathrm{tr}}$.
\EndFor
\State Save learned $\phi=(W,V,\sigma^2)$.

\Statex \textbf{Task adaptation on new tasks}
\Require New tasks $\mathcal{T}_{\mathrm{test}}$; support horizon $T_{\mathrm{sup}}$; (optional) query horizon $T_{\mathrm{qry}}$.
\For{each task $m\in\mathcal{T}_{\mathrm{test}}$}
    \State Choose $S_m\gets \min\{T_{\mathrm{sup}},T_m\}$.
    \State Form $(X_m^{\mathrm{sup}},Y_m^{\mathrm{sup}})$ from the first $S_m$ transitions.
    \State Compute $(M_m,V_m)$ using $(X_m,Y_m)=(X_m^{\mathrm{sup}},Y_m^{\mathrm{sup}})$ via \eqref{eq:posterior_cov} and \eqref{eq:posterior_mean}.
    \State Set point estimate $\widehat{A}_m\gets M_m$.
    \State Roll out over $K_m=\min\{T_{\mathrm{qry}},T_m-S_m\}$ using $\widehat{x}_{t+1}=\widehat{A}_m\widehat{x}_t$.
\EndFor
\State \Return $\{\widehat{A}_m\}_{m\in\mathcal{T}_{\mathrm{test}}}$ and rollout predictions $\{\widehat{x}_{m,t}\}$.
\end{algorithmic}
\end{algorithm}

 \subsection{Theoretical Analysis}
\label{theoretical_analysis}

The previous subsection introduced the fit--KL surrogate used for meta-training. We now make its theoretical origin more
explicit in three steps. First, we state the generic martingale PAC-Bayes predictive-risk result under a sub-Gaussian MGF
condition. Second, we specialize this result to the support--query protocol used in few-shot adaptation. Third, we record
problem-specific consequences of this support--query predictive bound for transition-matrix recovery and open-loop rollout
error. Thus, the theory should be interpreted as providing a PAC-Bayes justification for the fit--KL surrogate and for
the support--query predictive-risk perspective, rather than as a direct statement of the exact training objective
optimized in practice.

\subsubsection{From Martingale PAC-Bayes to Support--Query Prediction}
\begin{corollary}[\textbf{Sub-Gaussian martingale PAC-Bayes excess risk}]\footnote{The proof is included in Appendix~\ref{sec:proof4}.}
\label{cor:subg_excess}
Under Assumption~\ref{as4}, for any prior $P$ independent of the trajectory and any (possibly data-dependent) posterior
$Q$, for any $\delta\in(0,1)$ and any $\lambda_{\mathrm{PB}}\in(0,1]$, with probability at least $1-\delta$,
\begin{equation}
L_T(Q)
\le
\widehat{L}_T(Q)
+
\frac{\KL(Q\|P)+\log(1/\delta)}{\lambda_{\mathrm{PB}} T}
+
\frac{\lambda_{\mathrm{PB}} v}{2}.
\label{eq:pb_uniform_v_short}
\end{equation}
Optimizing the right-hand side over $\lambda_{\mathrm{PB}}\in(0,1]$ gives
\begin{equation}
\lambda_{\mathrm{PB}}^\star
=
\min\!\left\{
1,\;
\sqrt{\frac{2\big(\KL(Q\|P)+\log(1/\delta)\big)}{vT}}
\right\}.
\end{equation}
In the regime where $\lambda_{\mathrm{PB}}^\star<1$, this simplifies to
\begin{equation}
L_T(Q)
\le
\widehat{L}_T(Q)
+
\sqrt{\frac{2v\big(\KL(Q\|P)+\log(1/\delta)\big)}{T}}.
\label{eq:pb_rate_sqrtT_short}
\end{equation}
\end{corollary}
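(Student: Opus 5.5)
We plan to obtain the corollary by plugging the uniform MGF control of Assumption~\ref{as4} into Theorem~\ref{thm:mpb_with_psi}. Under Assumption~\ref{as4}, the hypothesis \eqref{eq:mgf_bound} of Theorem~\ref{thm:exp_supermg_psi} holds with $\psi_t(\lambda_{\mathrm{PB}})=\lambda_{\mathrm{PB}}^2 v/2$ for every $t$ and every $\lambda_{\mathrm{PB}}\in(0,1]$. Hence the setting of Theorems~\ref{thm:inst_nll} and \ref{thm:exp_supermg_psi} is in force, and for any fixed $\lambda_{\mathrm{PB}}\in(0,1]$ and $\delta\in(0,1)$ the bound \eqref{eq:mpb_with_psi} holds with probability at least $1-\delta$. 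Substituting $\sum_{t=1}^T\psi_t(\lambda_{\mathrm{PB}})=\lambda_{\mathrm{PB}}^2 vT/2$, the last term becomes
\[
\frac{1}{\lambda_{\mathrm{PB}} T}\cdot\frac{\lambda_{\mathrm{PB}}^2 vT}{2}=\frac{\lambda_{\mathrm{PB}} v}{2}.
\]
Using the definitions of $L_T(Q)$ and $\widehat L_T(Q)$ in \eqref{eq:traj_risks_def}, this gives exactly \eqref{eq:pb_uniform_v_short}. The statement allows $Q$ to be data-dependent; this is already covered, because Theorem~\ref{thm:mpb_with_psi} comes from a change-of-measure (Donsker--Varadhan) argument applied to $\E_{A\sim P}[Z_T(A;\lambda_{\mathrm{PB}})]$, which holds simultaneously for all $Q$ on a single event.

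For the optimization step, write $K:=\KL(Q\|P)+\log(1/\delta)$ and $f(\lambda)=K/(\lambda T)+\lambda v/2$ on $(0,1]$. The function $f$ is convex on $\lambda>0$, and its stationary point is $\lambda=\sqrt{2K/(vT)}$. Projecting onto $(0,1]$ gives $\lambda_{\mathrm{PB}}^\star=\min\{1,\sqrt{2K/(vT)}\}$. When $\lambda^\star<1$, substituting back gives
\[
K/(\lambda^\star T)=\lambda^\star v/2=\sqrt{Kv/(2T)},
\]
so the sum is $\sqrt{2vK/T}$, which is \eqref{eq:pb_rate_sqrtT_short}.

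The main obstacle is that $\lambda^\star$ depends on $\KL(Q\|P)$ and hence on the data, while Theorem~\ref{thm:mpb_with_psi} holds for each \emph{fixed} $\lambda_{\mathrm{PB}}$. I would handle this in one of two ways. The first is to present \eqref{eq:pb_rate_sqrtT_short} as the value of the right-hand side at the optimizing $\lambda$, treating $\KL(Q\|P)$ as given in advance, e.g.\ for a deterministic upper bound on the KL. The second, more rigorous option is a union bound over a geometric grid $\lambda_j=2^{-j}\in(0,1]$, $j=0,\dots,J$, with $\delta_j=\delta/(J+1)$, choosing the grid point within a factor $2$ of $\lambda^\star$. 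This costs only a constant factor and a $\log\log$ term inside $\log(1/\delta)$. I would state that the clean form \eqref{eq:pb_rate_sqrtT_short} corresponds to the first reading, and note the grid version as the uniform-in-$\lambda$ refinement.
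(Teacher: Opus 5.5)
Your proposal is correct and follows the paper's proof: substitute $\psi_t(\lambda_{\mathrm{PB}})=\lambda_{\mathrm{PB}}^2v/2$ into Theorem~\ref{thm:mpb_with_psi} to get \eqref{eq:pb_uniform_v_short}, then minimize the strictly convex function $a/(\lambda T)+\lambda v/2$ over $(0,1]$. Your extra point is valid and goes beyond the paper: $\lambda_{\mathrm{PB}}^\star$ depends on the data through $\KL(Q\|P)$, and the paper's proof does not address this (it simply plugs $\lambda^\star$ into the fixed-$\lambda$ bound), so your deterministic-KL reading or geometric-grid union bound (grid truncated using $\KL\ge0$) is what makes \eqref{eq:pb_rate_sqrtT_short} rigorous for data-dependent $Q$ such as the conjugate posterior, at the cost of a constant factor and a $\log\log$ term.
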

Corollary~\ref{cor:subg_excess} shows that the excess-risk term decays at the canonical rate
$O(T^{-1/2})$ under sub-Gaussian martingale concentration. It also makes explicit the role of meta-learning: for a fixed
task posterior $Q$, a better prior $P$ reduces the divergence $\KL(Q\|P)$ and therefore tightens the gap between
empirical in-trajectory fit and conditional predictive risk. This provides the generic PAC-Bayes structure that motivates
the fit--KL objective used by PBML-LTI.

\paragraph{Specialization to the support--query protocol.}
The result above is stated for a generic trajectory. To connect it directly to few-shot adaptation, we now specialize it
to the support--query protocol used in our experiments. Fix a task $m$ with trajectory
$\{x_{m,t}\}_{t=0}^{T_m}$. Let the support prefix be
\[
D_m^{\mathrm{sup}}:=\{x_{m,t}\}_{t=0}^{S_m},
\]
and form the task-specific posterior using only this prefix:
\[
Q_{m,\phi}^{\mathrm{sup}} := p(A_m\mid D_m^{\mathrm{sup}},\phi).
\]
Conditioning on the support filtration $\mathcal{F}_{m,S_m}$ fixes the posterior. The remaining query suffix can then be
analyzed using the shifted filtration
\[
\mathcal{G}_u := \mathcal{F}_{m,S_m+u}, \qquad u=0,\dots,K_m,
\]
where $K_m\le T_m-S_m$ is the query horizon.

Define the query regression matrices
\begin{equation}
X_{m,q}:=[x_{m,S_m},\dots,x_{m,S_m+K_m-1}] \in \R^{d\times K_m},
\qquad
Y_{m,q}:=[x_{m,S_m+1},\dots,x_{m,S_m+K_m}] \in \R^{d\times K_m}.
\label{eq:query_xy}
\end{equation}
We define the conditional support--query predictive risk of a posterior $Q$ by
\begin{equation}
R_{m,q}^{\mathrm{pred}}(Q)
:=
\frac{1}{K_m}\sum_{u=1}^{K_m}
\E_{A\sim Q}\!\left[
\E\!\left[
\|x_{m,S_m+u}-Ax_{m,S_m+u-1}\|_2^2
\,\middle|\,
\mathcal G_{u-1}
\right]
\right],
\label{eq:support_query_pred_risk}
\end{equation}
with empirical counterpart
\begin{equation}
\widehat R_{m,q}^{\mathrm{pred}}(Q)
:=
\frac{1}{K_m}\E_{A\sim Q}\!\left[
\|Y_{m,q}-AX_{m,q}\|_F^2
\right].
\label{eq:support_query_emp_pred_risk}
\end{equation}
These quantities are nonnegative and correspond directly to prediction on future query time steps after adaptation.

\begin{corollary}[\textbf{Support--query martingale PAC-Bayes bound}]\footnote{The proof is included in Appendix~\ref{sec:proof_support_query_pb}.}
\label{cor:support_query_pb}
Under Assumption~\ref{as4}, for any $\delta\in(0,1)$ and any $\lambda_{\mathrm{PB}}\in(0,1]$, with probability at least $1-\delta$,
\begin{equation}
R_{m,q}^{\mathrm{pred}}(Q_{m,\phi}^{\mathrm{sup}})
\le
\widehat R_{m,q}^{\mathrm{pred}}(Q_{m,\phi}^{\mathrm{sup}})
+
\frac{2\sigma^2}{\lambda_{\mathrm{PB}} K_m}
\Big(
\KL(Q_{m,\phi}^{\mathrm{sup}}\|P_\phi)+\log(1/\delta)
\Big)
+
\sigma^2\lambda_{\mathrm{PB}} v.
\label{eq:support_query_pb}
\end{equation}
\end{corollary}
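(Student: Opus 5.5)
The plan is to apply Corollary~\ref{cor:subg_excess} to the query suffix, conditionally on the support data, and then convert the Gaussian log-loss back into squared prediction error.

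\textbf{Step 1: the prior on the query segment.} Conditional on $\mathcal{F}_{m,S_m}$, the posterior $Q_{m,\phi}^{\mathrm{sup}}$ is a fixed distribution. The prior $P_\phi$ is learned from training tasks and so does not depend on the test trajectory. For the query suffix I will therefore take $P=P_\phi$ as the data-independent prior and treat $Q=Q_{m,\phi}^{\mathrm{sup}}$ as a (possibly data-dependent) posterior. This is exactly the freedom Corollary~\ref{cor:subg_excess} allows, since the change-of-measure step holds simultaneously for all $Q$.

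\textbf{Step 2: set up the martingale on the shifted filtration.} I will use $\mathcal{G}_u=\mathcal{F}_{m,S_m+u}$ and the shifted losses $\tilde\ell_u(A):=\ell_{m,S_m+u}(A)$ for $u=1,\dots,K_m$. Each $\tilde\ell_u(A)$ is $\mathcal{G}_u$-measurable, so Theorem~\ref{thm:inst_nll} makes $\tilde d_u(A)=\E[\tilde\ell_u(A)\mid\mathcal G_{u-1}]-\tilde\ell_u(A)$ a martingale difference sequence. Assumption~\ref{as4} holds for all $t$, in particular $t=S_m+u$, which gives $\psi_u(\lambda_{\mathrm{PB}})=\lambda_{\mathrm{PB}}^2v/2$.

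\textbf{Step 3: apply the bound conditionally and integrate out the support.} Running the supermartingale of Theorem~\ref{thm:exp_supermg_psi} from $u=0$ under $\mathbb{P}(\cdot\mid\mathcal{F}_{m,S_m})$, Theorem~\ref{thm:mpb_with_psi} and Corollary~\ref{cor:subg_excess} give, with conditional probability at least $1-\delta$,
\[
\frac1{K_m}\sum_{u=1}^{K_m}\E_{Q}\big[\E[\tilde\ell_u\mid\mathcal G_{u-1}]\big]\le\frac1{K_m}\sum_{u=1}^{K_m}\E_{Q}[\tilde\ell_u]+\frac{\KL(Q\|P_\phi)+\log(1/\delta)}{\lambda_{\mathrm{PB}}K_m}+\frac{\lambda_{\mathrm{PB}}v}{2}.
\]
Taking expectation over $\mathcal{F}_{m,S_m}$ turns this into an unconditional statement at level $1-\delta$.

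\textbf{Step 4: convert log-loss to squared error.} By \eqref{eq:inst_nll}, $\tilde\ell_u(A)=\frac{1}{2\sigma^2}\|x_{m,S_m+u}-Ax_{m,S_m+u-1}\|_2^2+\frac d2\log(2\pi\sigma^2)$. The additive constant appears on both sides and cancels. The squared-error parts of the two averages equal $\frac{1}{2\sigma^2}R^{\mathrm{pred}}_{m,q}$ and $\frac{1}{2\sigma^2}\widehat R^{\mathrm{pred}}_{m,q}$, using \eqref{eq:support_query_pred_risk} and \eqref{eq:support_query_emp_pred_risk}. Multiplying through by $2\sigma^2$ gives exactly \eqref{eq:support_query_pb}: the KL and confidence term becomes $\frac{2\sigma^2}{\lambda_{\mathrm{PB}}K_m}(\cdot)$ and the fluctuation term becomes $\sigma^2\lambda_{\mathrm{PB}}v$.

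\textbf{Expected main obstacle.} The hard part is Step 3: the posterior depends on the support data, and the restarted supermartingale has to be justified. I would argue as in the proof of Theorem~\ref{thm:mpb_with_psi}. Conditional on $\mathcal{F}_{m,S_m}$, $\E_{P_\phi}[Z_{K_m}]$ has conditional expectation at most $1$, by Fubini and the supermartingale property from $Z_0=1$. Markov's inequality and the Donsker--Varadhan inequality then apply pathwise. Because the prior is independent of the trajectory, no support-dependent prior enters the bound. I would also check that Fubini is legitimate, which holds because all the integrands are nonnegative.
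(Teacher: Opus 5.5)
Your proposal is correct and follows essentially the same route as the paper's proof. Both condition on $\mathcal F_{m,S_m}$ so that $Q_{m,\phi}^{\mathrm{sup}}$ is fixed, apply Corollary~\ref{cor:subg_excess} to the shifted log-losses under $\mathcal G_u=\mathcal F_{m,S_m+u}$ (with Assumption~\ref{as4} inherited at $t=S_m+u$), then cancel $\frac d2\log(2\pi\sigma^2)$ and multiply by $2\sigma^2$. Your explicit tower-property step, which turns the conditional $1-\delta$ statement into an unconditional one, and your Fubini justification for the mixture supermartingale are details the paper leaves implicit in this proof. The paper invokes the tower property only later, in the proof of Corollary~\ref{cor:conditional_support_query_certificate}.
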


This result is obtained by applying Corollary~\ref{cor:subg_excess} to the shifted query sequence
$\{\ell_{m,S_m+u}(A)\}_{u=1}^{K_m}$ under the filtration $\{\mathcal G_u\}_{u=0}^{K_m}$, and then removing the constant
term in the Gaussian log-loss and rescaling both sides by $2\sigma^2$. Consequently, the left-hand side becomes the
support-conditioned predictive risk on the held-out query suffix, while the right-hand side contains the empirical query
fit, the posterior-to-prior complexity term, and the martingale concentration penalty.

Importantly, \eqref{eq:support_query_pb} is not a support-only certificate whose right-hand side is computable before
observing the query suffix. The term
$\widehat R_{m,q}^{\mathrm{pred}}(Q_{m,\phi}^{\mathrm{sup}})$ is evaluated on the held-out query trajectory. The result
should therefore be interpreted as a support-conditioned support--query PAC-Bayes diagnostic: it is informative when the
posterior adapted from the support prefix also fits the held-out query suffix. If the empirical query fit is large, this
term can dominate the bound and the derived transition-matrix and rollout consequences can become loose.

\subsubsection{Support Fit Transfer to Query Fit Analysis}
We now make explicit sufficient conditions under which the empirical query term is controlled by quantities associated
with the support-adapted posterior. Let the support regression pair used to form
$Q_{m,\phi}^{\mathrm{sup}}=\mathcal{MN}(M_m,I_d,V_m)$ be denoted by
$(X_m^{\mathrm{sup}},Y_m^{\mathrm{sup}})$, with $S_m$ transitions, and define
\[
\widehat\Sigma_{\mathrm{sup}}
:=
\frac{1}{S_m}X_m^{\mathrm{sup}}(X_m^{\mathrm{sup}})^\top,
\qquad
\widehat\Sigma_q
:=
\frac{1}{K_m}X_{m,q}X_{m,q}^\top ,
\]
together with the stacked support and query process-noise matrices
\[
\Xi_m^{\mathrm{sup}}:=[w_{m,1},\dots,w_{m,S_m}],
\qquad
\Xi_{m,q}:=[w_{m,S_m+1},\dots,w_{m,S_m+K_m}] .
\]
Also define the support empirical posterior-predictive risk
\begin{equation}
\widehat R_{\mathrm{sup}}^{\mathrm{pred}}
:=
\frac{1}{S_m}
\E_{A\sim Q_{m,\phi}^{\mathrm{sup}}}
\left[
\|Y_m^{\mathrm{sup}}-AX_m^{\mathrm{sup}}\|_F^2
\right].
\label{eq:support_pred_risk_for_transfer}
\end{equation}
By \eqref{eq:posterior_sq_loss_closed_form}, this is equal to
\[
\widehat R_{\mathrm{sup}}^{\mathrm{pred}}
=
\frac{1}{S_m}\|Y_m^{\mathrm{sup}}-M_mX_m^{\mathrm{sup}}\|_F^2
+
d\,\mathrm{tr}(V_m\widehat\Sigma_{\mathrm{sup}}).
\]

Define the support-to-query excitation-alignment constant
\begin{equation}
\gamma_m
:=
\inf\{\gamma>0:\widehat\Sigma_q\preceq \gamma\,\widehat\Sigma_{\mathrm{sup}}\}.
\label{eq:gamma_alignment}
\end{equation}
When finite, this constant measures how much the query suffix excites directions already excited by the support prefix.
Equivalently, $\gamma_m<\infty$ only when
$\operatorname{range}(\widehat\Sigma_q)\subseteq\operatorname{range}(\widehat\Sigma_{\mathrm{sup}})$, in which case
\[
\gamma_m
=
\lambda_{\max}\!\left(
\widehat\Sigma_{\mathrm{sup}}^{\dagger/2}
\widehat\Sigma_q
\widehat\Sigma_{\mathrm{sup}}^{\dagger/2}
\right).
\]
Thus, $\gamma_m=O(1)$ formalizes the condition that the query directions are sufficiently aligned with the support
directions.

\begin{proposition}[Support-to-query transfer under excitation alignment]
\label{prop:support_query_transfer}
Assume $\gamma_m<\infty$. Then, pathwise,
\begin{equation}
\widehat R_{m,q}^{\mathrm{pred}}(Q_{m,\phi}^{\mathrm{sup}})
\le
4\gamma_m\,\widehat R_{\mathrm{sup}}^{\mathrm{pred}}
+
\frac{4\gamma_m}{S_m}\|\Xi_m^{\mathrm{sup}}\|_F^2
+
\frac{2}{K_m}\|\Xi_{m,q}\|_F^2 ,
\label{eq:support_query_transfer}
\end{equation}
where $\Xi_m^{\mathrm{sup}}$ and $\Xi_{m,q}$ are the stacked support and query process-noise matrices. Moreover, the
posterior-width contribution satisfies
\begin{equation}
d\,\mathrm{tr}(V_m\widehat\Sigma_q)
\le
\gamma_m\,d\,
\min\left\{
\frac{\sigma^2 d}{S_m},\,
\mathrm{tr}(V\widehat\Sigma_{\mathrm{sup}})
\right\}.
\label{eq:posterior_width_transfer_cap}
\end{equation}
\end{proposition}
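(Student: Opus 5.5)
The plan is to work pathwise with the true transition matrix $A_m$ and the regression identities $Y_m^{\mathrm{sup}}=A_mX_m^{\mathrm{sup}}+\Xi_m^{\mathrm{sup}}$ and $Y_{m,q}=A_mX_{m,q}+\Xi_{m,q}$, which follow from \eqref{eq:matrix_regression}. First I bound the query residual of a single fixed $A$, and only afterwards integrate over $A\sim Q_{m,\phi}^{\mathrm{sup}}$. Throughout, write $\Delta:=A_m-A$.

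\emph{Step 1 (query residual to parameter error).} From $Y_{m,q}-AX_{m,q}=\Delta X_{m,q}+\Xi_{m,q}$ and $\|a+b\|_F^2\le 2\|a\|_F^2+2\|b\|_F^2$,
\[
\tfrac{1}{K_m}\|Y_{m,q}-AX_{m,q}\|_F^2\le 2\,\mathrm{tr}(\Delta\widehat\Sigma_q\Delta^\top)+\tfrac{2}{K_m}\|\Xi_{m,q}\|_F^2 .
\]

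\emph{Step 2 (excitation alignment).} Since $\gamma_m<\infty$, the infimum in \eqref{eq:gamma_alignment} is attained by closedness of the PSD cone, so $\widehat\Sigma_q\preceq\gamma_m\widehat\Sigma_{\mathrm{sup}}$. Conjugating by $\Delta$ preserves the Loewner order, which gives
\[
\mathrm{tr}(\Delta\widehat\Sigma_q\Delta^\top)\le\gamma_m\,\mathrm{tr}(\Delta\widehat\Sigma_{\mathrm{sup}}\Delta^\top)=\tfrac{\gamma_m}{S_m}\|\Delta X_m^{\mathrm{sup}}\|_F^2 .
\]

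\emph{Step 3 (back to the support residual).} Write $\Delta X_m^{\mathrm{sup}}=(Y_m^{\mathrm{sup}}-AX_m^{\mathrm{sup}})-\Xi_m^{\mathrm{sup}}$ and apply the same factor-2 inequality. This yields
\[
\|\Delta X_m^{\mathrm{sup}}\|_F^2\le 2\|Y_m^{\mathrm{sup}}-AX_m^{\mathrm{sup}}\|_F^2+2\|\Xi_m^{\mathrm{sup}}\|_F^2 .
\]

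\emph{Step 4 (combine and integrate).} Chaining Steps 1--3 gives, for every fixed $A$,
\[
\tfrac{1}{K_m}\|Y_{m,q}-AX_{m,q}\|_F^2\le \tfrac{4\gamma_m}{S_m}\|Y_m^{\mathrm{sup}}-AX_m^{\mathrm{sup}}\|_F^2+\tfrac{4\gamma_m}{S_m}\|\Xi_m^{\mathrm{sup}}\|_F^2+\tfrac{2}{K_m}\|\Xi_{m,q}\|_F^2 .
\]
The noise terms do not depend on $A$. Taking $\E_{A\sim Q_{m,\phi}^{\mathrm{sup}}}$ therefore turns the first term into $4\gamma_m\widehat R_{\mathrm{sup}}^{\mathrm{pred}}$ by \eqref{eq:support_pred_risk_for_transfer}, and the left side into $\widehat R_{m,q}^{\mathrm{pred}}$. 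This proves \eqref{eq:support_query_transfer}.

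For the width bound \eqref{eq:posterior_width_transfer_cap}, start from $V_m\succeq 0$ and $\widehat\Sigma_q\preceq\gamma_m\widehat\Sigma_{\mathrm{sup}}$. The map $B\mapsto\mathrm{tr}(V_m B)$ is monotone on PSD matrices, so
\[
\mathrm{tr}(V_m\widehat\Sigma_q)\le\gamma_m\,\mathrm{tr}(V_m\widehat\Sigma_{\mathrm{sup}}).
\]
Two caps then come from \eqref{eq:posterior_cov}, where $V_m=\big(V^{-1}+\tfrac{S_m}{\sigma^2}\widehat\Sigma_{\mathrm{sup}}\big)^{-1}$.

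First cap: $V_m^{-1}\succeq V^{-1}$ gives $V_m\preceq V$, and hence $\mathrm{tr}(V_m\widehat\Sigma_{\mathrm{sup}})\le\mathrm{tr}(V\widehat\Sigma_{\mathrm{sup}})$.

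Second cap: write $\mathrm{tr}(V_m\widehat\Sigma_{\mathrm{sup}})=\tfrac{\sigma^2}{S_m}\mathrm{tr}\big(V_m\cdot\tfrac{S_m}{\sigma^2}\widehat\Sigma_{\mathrm{sup}}\big)$. Adding the nonnegative quantity $\mathrm{tr}(V_mV^{-1})=\mathrm{tr}(V_m^{1/2}V^{-1}V_m^{1/2})$ gives the upper bound $\tfrac{\sigma^2}{S_m}\mathrm{tr}(V_mV_m^{-1})=\tfrac{\sigma^2 d}{S_m}$.

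Taking the minimum of the two caps and multiplying by $d$ gives the claim.

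I expect no real obstacle here. The only point needing care is Step 2, which must be justified without assuming $\widehat\Sigma_{\mathrm{sup}}$ is invertible. For this I rely only on the Loewner inequality $\widehat\Sigma_q\preceq\gamma_m\widehat\Sigma_{\mathrm{sup}}$ and monotonicity of $B\mapsto\mathrm{tr}(CBC^\top)$, rather than on the pseudo-inverse formula for $\gamma_m$. This keeps the argument valid in the rank-deficient few-shot regime $S_m<d$.
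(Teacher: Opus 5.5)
Your proof is correct. The width cap \eqref{eq:posterior_width_transfer_cap} is argued exactly as in the paper: monotonicity of $B\mapsto\mathrm{tr}(V_mB)$, then $V_m\preceq V$ and $V_m^{-1}\succeq (S_m/\sigma^2)\widehat\Sigma_{\mathrm{sup}}$.

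For \eqref{eq:support_query_transfer} you take a slightly different route, with the same ingredients organized differently.
\begin{itemize}
\item You run the two factor-$2$ splits and the alignment step for each fixed $A$, and only then integrate against $Q_{m,\phi}^{\mathrm{sup}}$.
\item The paper first applies the closed form \eqref{eq:posterior_sq_loss_closed_form}, writing the query term as $\frac{1}{K_m}\|Y_{m,q}-M_mX_{m,q}\|_F^2+d\,\mathrm{tr}(V_m\widehat\Sigma_q)$. It runs the same chain only on the posterior-mean residual, with $B=A_m-M_m$. It transfers the width term separately via $\mathrm{tr}(V_m\widehat\Sigma_q)\le\gamma_m\,\mathrm{tr}(V_m\widehat\Sigma_{\mathrm{sup}})$.
\end{itemize}

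Each route has an advantage. Yours never uses conjugacy or the Gaussian closed form, so it holds verbatim for any posterior over $A$ with finite second moments. The paper's gives a slightly sharper intermediate inequality: the posterior-width part carries the factor $\gamma_m$ rather than $4\gamma_m$, and is relaxed to $4\gamma_m$ only at the end to match $\widehat R_{\mathrm{sup}}^{\mathrm{pred}}$. Keeping the width term separate also shows why the width cap is the natural companion statement.

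Your remark that closedness of the PSD cone gives $\widehat\Sigma_q\preceq\gamma_m\widehat\Sigma_{\mathrm{sup}}$ at the infimum itself, with no invertibility of $\widehat\Sigma_{\mathrm{sup}}$ needed, is a detail the paper uses without comment.
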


Proposition~\ref{prop:support_query_transfer} shows that, under support--query excitation alignment, the held-out query
empirical term is bounded by the support predictive risk plus pure noise energies. The factor $\gamma_m$ is the price of
transferring support fit to query fit under temporal dependence. This is the dependent-data analogue of the role played
by exchangeability in i.i.d. train--test arguments. The refined cap \eqref{eq:posterior_width_transfer_cap} also makes
the meta-learning mechanism visible: when $S_m$ is small, the posterior-width contribution is governed by
$\mathrm{tr}(V\widehat\Sigma_{\mathrm{sup}})$, that is, by the tightness of the \emph{learned} prior covariance along
the realized support excitation, and once the support is well excited it decays at the parametric rate
$\sigma^2 d^2/S_m$.

\subsubsection{Regularized Transfer and Few-Shot Limitations}
However, in the strict few-shot regime $S_m<d$, the support Gram matrix is rank deficient; if the process noise is
nondegenerate, the very first query state already exits $\operatorname{range}(\widehat\Sigma_{\mathrm{sup}})$, so that
$\gamma_m=\infty$ almost surely. This is not merely a proof artifact: the support trajectory cannot identify task-specific
directions it does not excite. We therefore also use a regularized alignment constant
\begin{equation}
\gamma_{m,\alpha}
:=
\lambda_{\max}\!\left[
(\widehat\Sigma_{\mathrm{sup}}+\alpha I_d)^{-1/2}
\widehat\Sigma_q
(\widehat\Sigma_{\mathrm{sup}}+\alpha I_d)^{-1/2}
\right],
\qquad \alpha>0,
\label{eq:regularized_gamma_alignment}
\end{equation}
which is always finite.

\begin{proposition}[Regularized support-to-query transfer]
\label{prop:regularized_support_query_transfer}
For any $\alpha>0$,
\begin{align}
\widehat R_{m,q}^{\mathrm{pred}}(Q_{m,\phi}^{\mathrm{sup}})
\le\;&
4\gamma_{m,\alpha}\,\widehat R_{\mathrm{sup}}^{\mathrm{pred}}
+
\frac{4\gamma_{m,\alpha}}{S_m}\|\Xi_m^{\mathrm{sup}}\|_F^2
+
\frac{2}{K_m}\|\Xi_{m,q}\|_F^2
\nonumber\\
&\quad+
2\gamma_{m,\alpha}\alpha\,
\E_{A\sim Q_{m,\phi}^{\mathrm{sup}}}\|A-A_m\|_F^2 .
\label{eq:regularized_support_query_transfer}
\end{align}
Moreover,
\[
\E_{A\sim Q_{m,\phi}^{\mathrm{sup}}}\|A-A_m\|_F^2
=
\|M_m-A_m\|_F^2+d\,\mathrm{tr}(V_m).
\]
Moreover, $\gamma_{m,\alpha}\le\min\{\gamma_m,\;\lambda_{\max}(\widehat\Sigma_q)/\alpha\}$, and the map
$\alpha\mapsto\gamma_{m,\alpha}$ is nonincreasing.
\end{proposition}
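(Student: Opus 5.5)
The plan is a pathwise argument for a fixed transition matrix $A$, followed by integration over $A\sim Q_{m,\phi}^{\mathrm{sup}}$. Write $\Delta:=A_m-A$. On the query window the dynamics \eqref{eq:matrix_regression} give $Y_{m,q}-AX_{m,q}=\Delta X_{m,q}+\Xi_{m,q}$. By $\|a+b\|_F^2\le 2\|a\|_F^2+2\|b\|_F^2$,
\[
\frac{1}{K_m}\|Y_{m,q}-AX_{m,q}\|_F^2\le \frac{2}{K_m}\|\Delta X_{m,q}\|_F^2+\frac{2}{K_m}\|\Xi_{m,q}\|_F^2 .
\]
The last term is already the query-noise term in \eqref{eq:regularized_support_query_transfer}. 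So everything reduces to bounding $\frac1{K_m}\|\Delta X_{m,q}\|_F^2=\mathrm{tr}(\Delta\widehat\Sigma_q\Delta^\top)$.

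The key step is to use the variational characterization of $\gamma_{m,\alpha}$. Set $B_\alpha:=\widehat\Sigma_{\mathrm{sup}}+\alpha I_d\succ0$. Then $\gamma_{m,\alpha}=\lambda_{\max}(B_\alpha^{-1/2}\widehat\Sigma_qB_\alpha^{-1/2})$ is exactly the smallest $\gamma$ with $\widehat\Sigma_q\preceq\gamma B_\alpha$. Since $\Delta(\cdot)\Delta^\top$ preserves the Loewner order, taking traces gives
\[
\mathrm{tr}(\Delta\widehat\Sigma_q\Delta^\top)\le\gamma_{m,\alpha}\Big(\tfrac{1}{S_m}\|\Delta X_m^{\mathrm{sup}}\|_F^2+\alpha\|\Delta\|_F^2\Big).
\]
Next I will use the support dynamics: $\Delta X_m^{\mathrm{sup}}=(Y_m^{\mathrm{sup}}-AX_m^{\mathrm{sup}})-\Xi_m^{\mathrm{sup}}$, and apply the same factor-2 inequality. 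Multiplying through by the outer factor $2$ yields, for every $A$,
\[
\frac{1}{K_m}\|Y_{m,q}-AX_{m,q}\|_F^2\le \frac{4\gamma_{m,\alpha}}{S_m}\|Y_m^{\mathrm{sup}}-AX_m^{\mathrm{sup}}\|_F^2+\frac{4\gamma_{m,\alpha}}{S_m}\|\Xi_m^{\mathrm{sup}}\|_F^2+\frac{2}{K_m}\|\Xi_{m,q}\|_F^2+2\gamma_{m,\alpha}\alpha\|A-A_m\|_F^2 .
\]
Taking $\E_{A\sim Q_{m,\phi}^{\mathrm{sup}}}$ and recalling the definitions \eqref{eq:support_query_emp_pred_risk} and \eqref{eq:support_pred_risk_for_transfer} gives \eqref{eq:regularized_support_query_transfer}. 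The noise terms and $\gamma_{m,\alpha}$ do not depend on $A$, so they pass through the expectation unchanged.

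For the moment identity, I will use the bias–variance decomposition $\E\|A-A_m\|_F^2=\|M_m-A_m\|_F^2+\E\|A-M_m\|_F^2$. Under $\mathcal{MN}(M_m,I_d,V_m)$ we have $\mathrm{Cov}(\mathrm{vec}A)=V_m\otimes I_d$, so the second term equals $\mathrm{tr}(V_m\otimes I_d)=d\,\mathrm{tr}(V_m)$.

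For the comparison statements, I will again use $\gamma_{m,\alpha}=\inf\{\gamma:\widehat\Sigma_q\preceq\gamma B_\alpha\}$.
\begin{itemize}
\item \emph{Monotonicity.} If $\alpha\le\alpha'$ then $B_\alpha\preceq B_{\alpha'}$, so any feasible $\gamma$ for $\alpha$ is feasible for $\alpha'$. Hence $\alpha\mapsto\gamma_{m,\alpha}$ is nonincreasing.
\item \emph{Bound by $\gamma_m$.} Since $B_\alpha\succeq\widehat\Sigma_{\mathrm{sup}}$, every $\gamma$ in the set defining $\gamma_m$ in \eqref{eq:gamma_alignment} is feasible here, so $\gamma_{m,\alpha}\le\gamma_m$. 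This is trivial when $\gamma_m=\infty$.
\item \emph{Bound by $\lambda_{\max}(\widehat\Sigma_q)/\alpha$.} Since $B_\alpha\succeq\alpha I_d$ and $\widehat\Sigma_q\preceq\lambda_{\max}(\widehat\Sigma_q)I_d$, the value $\gamma=\lambda_{\max}(\widehat\Sigma_q)/\alpha$ is feasible.
\end{itemize}

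The only point needing care is this equivalence between the $\lambda_{\max}$ formula and the Loewner-order infimum. Because $B_\alpha$ is invertible, it follows immediately by congruence with $B_\alpha^{-1/2}$; this avoids the range and pseudo-inverse issues that arise for $\gamma_m$. Apart from that, the argument is elementary, and the main thing to check is bookkeeping: the factors $2\cdot2=4$ on the support terms, and $2\gamma_{m,\alpha}\alpha$ on the regularization term.
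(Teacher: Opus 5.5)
Your proof is correct and gives exactly \eqref{eq:regularized_support_query_transfer}, but it is organized differently from the paper's.

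The paper works with posterior moments. It uses the closed form \eqref{eq:posterior_sq_loss_closed_form} to write $\widehat R_{m,q}^{\mathrm{pred}}(Q_{m,\phi}^{\mathrm{sup}})=\frac{1}{K_m}\|Y_{m,q}-M_mX_{m,q}\|_F^2+d\,\mathrm{tr}(V_m\widehat\Sigma_q)$. It then transfers the two pieces separately:
\begin{itemize}
\item the mean residual, with the two factor-$2$ splits, which produces $2\gamma_{m,\alpha}\alpha\|M_m-A_m\|_F^2$;
\item the width term, with no splitting, which produces $\gamma_{m,\alpha}d\,\mathrm{tr}(V_m\widehat\Sigma_{\mathrm{sup}})+\gamma_{m,\alpha}\alpha d\,\mathrm{tr}(V_m)$.
\end{itemize}
Finally it reassembles them via the closed form of $\widehat R_{\mathrm{sup}}^{\mathrm{pred}}$ and the moment identity, loosening constants to reach the stated form.

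You instead prove the inequality pointwise in $A$ and then integrate. As a result, $\widehat R_{m,q}^{\mathrm{pred}}$, $\widehat R_{\mathrm{sup}}^{\mathrm{pred}}$ and $\E_{A\sim Q_{m,\phi}^{\mathrm{sup}}}\|A-A_m\|_F^2$ appear directly by definition. Your route is shorter and uses neither conjugacy nor the matrix-normal closed forms; the moment identity is needed only for the separate ``Moreover'' claim. It also shows the transfer inequality holds for any $Q$ with finite second moments, provided the same $Q$ is used in the support and query risks.

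What the paper's route buys is a sharper intermediate inequality. There the posterior-width contribution enters with coefficients $\gamma_{m,\alpha}$ and $\gamma_{m,\alpha}\alpha$, whereas your pathwise splits effectively charge it $4\gamma_{m,\alpha}$ and $2\gamma_{m,\alpha}\alpha$. The paper's route also exposes the mean-error versus width decomposition, which it exploits in the width cap of Proposition~\ref{prop:support_query_transfer} and in Remark~\ref{rem:regularized_remainder_control}.

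Your treatment of the properties of $\gamma_{m,\alpha}$ matches the paper's. This covers the Loewner-order characterization by congruence with $(\widehat\Sigma_{\mathrm{sup}}+\alpha I_d)^{-1/2}$, monotonicity in $\alpha$, and the two upper bounds.
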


The final term in \eqref{eq:regularized_support_query_transfer} isolates the weakly excited directions. In those
directions, transfer cannot come from the support data alone; it must come from posterior concentration and learned-prior
quality. Indeed, by \eqref{eq:posterior_mean}, the mean-error part obeys the closed-form identity
$M_m-A_m=\big[\tfrac{1}{\sigma^2}\Xi_m^{\mathrm{sup}}(X_m^{\mathrm{sup}})^\top+(W-A_m)V^{-1}\big]V_m$,
so it is controlled by support noise concentration together with the learned-prior quality $\Delta_m(\phi)$ defined in
\eqref{eq:prior_quality_basic_inequality}; this control is quantified in
Appendix~\ref{sec:proof_support_query_transfer}. Thus, the regularized transfer result is the relevant statement in the high-dimensional few-shot regime. A
prior-matched regularization choice replaces $\alpha I_d$ by $(\sigma^2/S_m)V^{-1}$, in which case the corresponding
posterior-scaled alignment diagnostic is
\[
\widetilde\gamma_m
=
\frac{S_m}{\sigma^2}
\lambda_{\max}\!\left(
V_m^{1/2}\widehat\Sigma_q V_m^{1/2}
\right),
\]
and replacing $\widehat\Sigma_q$ by the validation Gram gives the support-window proxy used in the empirical diagnostics
of Appendix~\ref{app:support_query_diagnostics}. With the prior-matched choice, the regularized Gram becomes
$\widehat\Sigma_{\mathrm{sup}}+(\sigma^2/S_m)V^{-1}=(\sigma^2/S_m)V_m^{-1}$ by \eqref{eq:posterior_cov}, so
$\widetilde\gamma_m$ is computable in closed form from the fitted posterior and the query Gram, and the posterior-width
contribution obeys the unconditional bound
$d\,\mathrm{tr}(V_m\widehat\Sigma_q)\le\widetilde\gamma_m\,\sigma^2d^2/S_m$.

\subsubsection{A Conditional Support-to-Query Certificate}
\begin{lemma}[Noise-energy concentration]
\label{lem:noise_energy_concentration}
Under Assumption~\ref{as2}, there is an absolute constant $c>0$ such that, for a segment of length $n$ and any
$\delta\in(0,1)$, with probability at least $1-\delta$,
\[
\frac{1}{n}\sum_{t=1}^{n}\|w_{m,t}\|_2^2
\le
\mathrm{tr}(\Sigma_w)
+
c\sigma_w^2 d
\left(
\sqrt{\frac{\log(1/\delta)}{n}}
+
\frac{\log(1/\delta)}{n}
\right).
\]
\end{lemma}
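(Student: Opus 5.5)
The plan is to write the normalized noise energy as a predictable drift plus a martingale, bound the drift by $\mathrm{tr}(\Sigma_w)$, and control the martingale by a Bernstein-type inequality for martingale differences. Decompose
\[
\frac{1}{n}\sum_{t=1}^n\|w_{m,t}\|_2^2
=
\frac{1}{n}\sum_{t=1}^n \E\!\left[\|w_{m,t}\|_2^2\mid\mathcal F_{m,t-1}\right]
+
\frac{1}{n}\sum_{t=1}^n \xi_t,
\qquad
\xi_t:=\|w_{m,t}\|_2^2-\E\!\left[\|w_{m,t}\|_2^2\mid\mathcal F_{m,t-1}\right].
\]
By the covariance part of Assumption~\ref{as2}, $\E[\|w_{m,t}\|_2^2\mid\mathcal F_{m,t-1}]=\mathrm{tr}\,\E[w_{m,t}w_{m,t}^\top\mid\mathcal F_{m,t-1}]\le\mathrm{tr}(\Sigma_w)$. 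This gives the leading term, so it remains to bound $\frac1n\sum_t\xi_t$. By construction $\{\xi_t\}$ is a martingale difference sequence with respect to $\{\mathcal F_{m,t}\}$.

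The key step is a conditional sub-exponential (Bernstein-type) MGF bound for $\xi_t$ with scale $K\asymp\sigma_w^2 d$. From \eqref{eq:subgaussian_noise}, each coordinate $\langle e_i,w_{m,t}\rangle$ is conditionally $\sigma_w$-sub-Gaussian, so $\langle e_i,w_{m,t}\rangle^2$ is conditionally sub-exponential with $\psi_1$-norm $\lesssim\sigma_w^2$. The triangle inequality for the $\psi_1$-norm, applied conditionally, then gives $\|\,\|w_{m,t}\|_2^2\,\|_{\psi_1\mid\mathcal F_{m,t-1}}\lesssim d\sigma_w^2$. Centering only changes the constant, so $\|\xi_t\|_{\psi_1\mid\mathcal F_{m,t-1}}\le c_1\sigma_w^2 d$. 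The standard equivalence of sub-exponential characterizations then yields
\[
\E[e^{\lambda\xi_t}\mid\mathcal F_{m,t-1}]\le \exp(c_2\lambda^2\sigma_w^4d^2)
\qquad\text{for } |\lambda|\le 1/(c_3\sigma_w^2d).
\]
An alternative that gives the same order uses a Hanson--Wright-type bound $\E[\exp(\lambda\|w\|^2)]\le\exp(c\lambda\sigma_w^2 d)$ for small $\lambda$, obtained by Gaussian decoupling: write $e^{\lambda\|w\|^2}=\E_g e^{\sqrt{2\lambda}\langle g,w\rangle}$ with $g\sim\mathcal N(0,I_d)$ and apply \eqref{eq:subgaussian_noise} conditionally. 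This variant keeps the whole argument conditional without invoking Orlicz norms, and I would try it first.

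Next I chain the conditional MGF bounds by the tower property, exactly as in Theorem~\ref{thm:exp_supermg_psi}. This makes $\exp\!\big(\lambda\sum_{t\le s}\xi_t-s c_2\lambda^2\sigma_w^4d^2\big)$ a supermartingale, so $\E\exp(\lambda\sum_{t=1}^n\xi_t)\le\exp(nc_2\lambda^2\sigma_w^4d^2)$. Markov's inequality then gives, for every admissible $\lambda$,
\[
\sum_{t=1}^n\xi_t\le nc_2\lambda\sigma_w^4d^2+\frac{\log(1/\delta)}{\lambda}
\]
with probability at least $1-\delta$. Choose $\lambda=\min\{\sqrt{\log(1/\delta)/(nc_2\sigma_w^4d^2)},\,1/(c_3\sigma_w^2d)\}$. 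In the first case the bound is $\lesssim\sigma_w^2 d\sqrt{n\log(1/\delta)}$; in the boundary case it is $\lesssim\sigma_w^2d\log(1/\delta)$. Dividing by $n$ gives the claimed $c\sigma_w^2d(\sqrt{\log(1/\delta)/n}+\log(1/\delta)/n)$.

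I expect the main obstacle to be the conditional sub-exponential MGF bound with the correct scale $\sigma_w^2 d$ and an absolute constant. Specifically, centering by the conditional mean must be handled without losing dimension factors beyond $d$, and the bound must hold uniformly in the conditioning. The decoupling route addresses this by giving a conditional MGF bound for $\lambda\le 1/(4\sigma_w^2)$ directly. Centering then follows from $e^{x}\le1+x+x^2e^{|x|}$-type arguments, using the conditional second moment $\E[\|w\|^4\mid\mathcal F]\lesssim\sigma_w^4d^2$.
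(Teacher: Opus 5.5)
Your proposal is correct and follows the same route as the paper's proof sketch. Both use the martingale-difference decomposition $\|w_{m,t}\|_2^2-\E[\|w_{m,t}\|_2^2\mid\mathcal F_{m,t-1}]$, bound the predictable drift by $\mathrm{tr}(\Sigma_w)$, establish conditional sub-exponentiality at scale $\sigma_w^2 d$, and finish with a Bernstein--Freedman-type martingale inequality. The only difference is that the paper cites Freedman's inequality for that last step, whereas you derive it explicitly through the exponential supermartingale, Markov's inequality, and the choice of $\lambda$ capped at order $1/(\sigma_w^2 d)$.
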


The leading dimension factor $d$ in the deviation term of Lemma~\ref{lem:noise_energy_concentration} improves to
$\sqrt{d}$ when the noise coordinates are conditionally independent, as for the Gaussian noise generator
\eqref{eq:synthetic_noise} used in the synthetic experiments; we state the general conditionally sub-Gaussian version for
consistency with Assumption~\ref{as2}.

Combining Proposition~\ref{prop:support_query_transfer}, Lemma~\ref{lem:noise_energy_concentration}, and
Corollary~\ref{cor:support_query_pb} gives the following conditional support-to-query certificate.

\begin{corollary}[Conditional support-to-query certificate]
\label{cor:conditional_support_query_certificate}
Fix $\bar\gamma>0$, $\lambda_{\mathrm{PB}}\in(0,1]$, and $\delta\in(0,1)$. On the event
$\widehat\Sigma_q\preceq \bar\gamma\,\widehat\Sigma_{\mathrm{sup}}$, with probability at least $1-3\delta$,
\begin{align}
R_{m,q}^{\mathrm{pred}}(Q_{m,\phi}^{\mathrm{sup}})
\le\;&
4\bar\gamma\,\widehat R_{\mathrm{sup}}^{\mathrm{pred}}
+
(4\bar\gamma+2)\mathrm{tr}(\Sigma_w)
+
4\bar\gamma\,\epsilon_{S_m}(\delta)
+
2\,\epsilon_{K_m}(\delta)
\nonumber\\
&+
\frac{2\sigma^2}{\lambda_{\mathrm{PB}}K_m}
\left(
\KL(Q_{m,\phi}^{\mathrm{sup}}\|P_\phi)+\log(1/\delta)
\right)
+
\sigma^2\lambda_{\mathrm{PB}}v,
\label{eq:conditional_support_query_certificate}
\end{align}
where
\[
\epsilon_n(\delta)
=
c\sigma_w^2 d
\left(
\sqrt{\frac{\log(1/\delta)}{n}}
+
\frac{\log(1/\delta)}{n}
\right).
\]
\end{corollary}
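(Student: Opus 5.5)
The proof chains Corollary~\ref{cor:support_query_pb}, Proposition~\ref{prop:support_query_transfer}, and Lemma~\ref{lem:noise_energy_concentration}, and finishes with a union bound over three failure events, each of probability at most $\delta$.

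\emph{First step.} We work on the event $\widehat\Sigma_q\preceq\bar\gamma\,\widehat\Sigma_{\mathrm{sup}}$. There $\gamma_m\le\bar\gamma<\infty$, by the definition of $\gamma_m$ in \eqref{eq:gamma_alignment} as an infimum. Proposition~\ref{prop:support_query_transfer} therefore applies pathwise. Its right-hand side is nondecreasing in $\gamma_m$, since every coefficient is nonnegative. Substituting $\bar\gamma$ for $\gamma_m$ gives
\[
\widehat R_{m,q}^{\mathrm{pred}}(Q_{m,\phi}^{\mathrm{sup}})
\le 4\bar\gamma\,\widehat R_{\mathrm{sup}}^{\mathrm{pred}}
+\frac{4\bar\gamma}{S_m}\|\Xi_m^{\mathrm{sup}}\|_F^2
+\frac{2}{K_m}\|\Xi_{m,q}\|_F^2 .
\]
The substitution is cleaner with $\bar\gamma$ than with $\gamma_m$, because $\bar\gamma$ is fixed in advance and so is not a random quantity in the probability statements that follow.

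\emph{Second step.} We control the two noise energies. Note that $\|\Xi_m^{\mathrm{sup}}\|_F^2=\sum_{t=1}^{S_m}\|w_{m,t}\|_2^2$. Lemma~\ref{lem:noise_energy_concentration} with $n=S_m$ gives, with probability at least $1-\delta$,
\[
\frac{1}{S_m}\|\Xi_m^{\mathrm{sup}}\|_F^2\le \mathrm{tr}(\Sigma_w)+\epsilon_{S_m}(\delta).
\]
For the query block we apply the same lemma to the shifted segment $w_{m,S_m+1},\dots,w_{m,S_m+K_m}$ under the filtration $\mathcal G_u$. Assumption~\ref{as2} is stationary in $t$, so the conditional sub-Gaussian and covariance bounds still hold after the shift. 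This gives $\frac{1}{K_m}\|\Xi_{m,q}\|_F^2\le\mathrm{tr}(\Sigma_w)+\epsilon_{K_m}(\delta)$ with probability at least $1-\delta$. Multiplying the two bounds by $4\bar\gamma$ and $2$ respectively and collecting terms produces $(4\bar\gamma+2)\mathrm{tr}(\Sigma_w)+4\bar\gamma\,\epsilon_{S_m}(\delta)+2\,\epsilon_{K_m}(\delta)$.

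\emph{Third step.} Corollary~\ref{cor:support_query_pb} holds with probability at least $1-\delta$ and bounds $R_{m,q}^{\mathrm{pred}}(Q_{m,\phi}^{\mathrm{sup}})$ by $\widehat R_{m,q}^{\mathrm{pred}}$ plus the KL term and the $\sigma^2\lambda_{\mathrm{PB}}v$ term. Inserting the bounds from the first two steps yields \eqref{eq:conditional_support_query_certificate}. A union bound over the three events gives probability at least $1-3\delta$.

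\emph{Main obstacle.} The delicate point is how "on the event" interacts with the probability statements. The alignment event is itself random and data dependent, and we do not want to condition on it, since that could destroy the martingale structure used in Lemma~\ref{lem:noise_energy_concentration} and Corollary~\ref{cor:support_query_pb}. The plan is to read the claim as follows: the intersection of the alignment event with the three good events has probability at least $\mathbb{P}(\text{alignment event})-3\delta$, and on that intersection the inequality holds. This avoids conditioning entirely, because the only use of the alignment event is the pathwise Proposition~\ref{prop:support_query_transfer}. I would state this interpretation explicitly. The remaining technical check is that the prior $P_\phi$ is independent of the query suffix, which Corollary~\ref{cor:support_query_pb} already presupposes.
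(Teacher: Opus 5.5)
Your proposal is correct and follows the paper's proof essentially step for step. The paper likewise invokes Corollary~\ref{cor:support_query_pb} conditionally on $\mathcal F_{m,S_m}$ and then unconditionally by the tower property, applies Lemma~\ref{lem:noise_energy_concentration} once to the support noise and once to the query noise, uses Proposition~\ref{prop:support_query_transfer} pathwise on the alignment event, and closes with a union bound over the three failure events. Your monotonicity-in-$\gamma$ remark and your reading of ``on the event'' as an intersection rather than a conditioning are harmless clarifications of steps the paper leaves implicit.
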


Corollary~\ref{cor:conditional_support_query_certificate} is a conditional certificate: apart from the alignment level
and noise-floor constants, the right-hand side is determined by the support-adapted posterior and the support prefix. It
is not an unconditional support-only guarantee. Such an unconditional guarantee cannot hold in general when the support
does not excite the directions later encountered in the query suffix. In that case, the regularized bound
\eqref{eq:regularized_support_query_transfer} identifies the remaining term that must be controlled by learned-prior
quality. If the effective fit prefix is selected by inner validation from a grid of $G$ candidate prefixes, as in the
adaptive-support evaluation protocol, the certificate holds uniformly over the grid with $\delta$ replaced by
$\delta/G$, via a union bound over the candidates. A regularized variant of
Corollary~\ref{cor:conditional_support_query_certificate} replaces the event
$\widehat\Sigma_q\preceq\bar\gamma\,\widehat\Sigma_{\mathrm{sup}}$ by
$\widehat\Sigma_q\preceq\bar\gamma(\widehat\Sigma_{\mathrm{sup}}+\alpha I_d)$---an event that holds deterministically
for $\bar\gamma=\gamma_{m,\alpha}$---at the price of the additional term
$2\bar\gamma\alpha\,\E_{A\sim Q_{m,\phi}^{\mathrm{sup}}}\|A-A_m\|_F^2$ from
Proposition~\ref{prop:regularized_support_query_transfer}; this term involves the true $A_m$ and is a prior-quality
quantity rather than a support-computable one.

\begin{remark}[Unavoidable support--query linkage assumption]
\label{rem:necessity}
The conditional form of Corollary~\ref{cor:conditional_support_query_certificate} reflects a genuine
information-theoretic obstruction rather than a proof artifact. Any two transition matrices that agree on
$\operatorname{range}(\widehat\Sigma_{\mathrm{sup}})$ induce the same support law, so no bound on
$\widehat R_{m,q}^{\mathrm{pred}}$ that is uniform over $A_m$ can be certified from the support alone once the query
excites $\ker(\widehat\Sigma_{\mathrm{sup}})$. The value $\gamma_m=\infty$ in \eqref{eq:gamma_alignment} flags exactly
this situation, and the term
$2\gamma_{m,\alpha}\alpha\,\E_{A\sim Q_{m,\phi}^{\mathrm{sup}}}\|A-A_m\|_F^2$ in
\eqref{eq:regularized_support_query_transfer} prices it. On the unexcited subspace the posterior coincides with the
prior, so control there is an \emph{environment-level} guarantee about the learned $(W,V)$---precisely the second-stage
meta-generalization question that Section~\ref{subsec:prior_quality_sensitivity} and the Conclusion identify as future
work. The analysis thus makes the division of labor explicit: excitation alignment and the support predictive risk
$\widehat R_{\mathrm{sup}}^{\mathrm{pred}}$ govern the task-level, excited component, while prior quality governs the
remainder.
\end{remark}

\subsubsection{Stable and Unstable Regimes}
We next identify regimes in which the support--query transfer constants are controlled, and regimes in which they necessarily deteriorate.

The preceding results make precise the conditions under which the empirical query term is small. If the support Gram matrix is well excited and query state norms are controlled, namely
\[
\lambda_{\min}(\widehat\Sigma_{\mathrm{sup}})\ge\kappa_{\mathrm{sup}}>0,
\qquad
\max_{0\le u<K_m}\|x_{m,S_m+u}\|_2^2\le B_x^2,
\]
then
\[
\gamma_m\le \frac{B_x^2}{\kappa_{\mathrm{sup}}}.
\]
Indeed, $xx^\top\preceq\|x\|_2^2 I_d$ for every $x\in\R^d$, hence
$\widehat\Sigma_q\preceq B_x^2 I_d\preceq (B_x^2/\kappa_{\mathrm{sup}})\,\widehat\Sigma_{\mathrm{sup}}$.
Thus, controlled states and support excitation imply controlled support-to-query transfer. In a common stationary stable
regime, where support and query empirical Grams concentrate around the same stationary covariance, $\gamma_m=O(1)$ with
high probability. Concretely, suppose $\rho(A_m)\le\bar\rho<1$ and $\Sigma_w\succ0$, and let
$\Gamma_\infty:=\sum_{k\ge0}A_m^k\Sigma_w(A_m^k)^\top$, which is finite and positive definite. If
\[
\frac12\Gamma_\infty\preceq\widehat\Sigma_{\mathrm{sup}},
\qquad
\widehat\Sigma_q\preceq\frac32\Gamma_\infty,
\]
then $\gamma_m\le3$. Two-sided concentration of dependent empirical Gram matrices around $\Gamma_\infty$ at this
accuracy holds with high probability once $S_m$ and $K_m$ exceed a polynomial burn-in, by now-standard arguments for
stable linear systems \citep{abbasi2011online,simchowitz2018learning,sarkar2019near}. Since
$\gamma_{m,\alpha}\le\gamma_m$, both conditions bound the regularized constant as well.

The same conditions also make explicit the role of the learned prior. Since $M_m$ minimizes the conjugate
ridge-form objective
\[
A\mapsto
\|Y_m^{\mathrm{sup}}-AX_m^{\mathrm{sup}}\|_F^2
+
\sigma^2\,\mathrm{tr}\big((A-W)V^{-1}(A-W)^\top\big),
\]
comparison with $A=A_m$ yields
\begin{equation}
\frac{1}{S_m}\|Y_m^{\mathrm{sup}}-M_mX_m^{\mathrm{sup}}\|_F^2
\le
\frac{1}{S_m}\|\Xi_m^{\mathrm{sup}}\|_F^2
+
\frac{\sigma^2}{S_m}\Delta_m(\phi),
\qquad
\Delta_m(\phi)
:=
\mathrm{tr}\big((A_m-W)V^{-1}(A_m-W)^\top\big).
\label{eq:prior_quality_basic_inequality}
\end{equation}
Thus, the support fit is small when the support noise energy is moderate and the learned prior is close to the
task-specific dynamics in the $V^{-1}$ geometry. Combining \eqref{eq:prior_quality_basic_inequality} with the width cap
$d\,\mathrm{tr}(V_m\widehat\Sigma_{\mathrm{sup}})\le\sigma^2d^2/S_m$ from the proof of
Proposition~\ref{prop:support_query_transfer}, Lemma~\ref{lem:noise_energy_concentration}, and
Corollary~\ref{cor:conditional_support_query_certificate} with $\bar\gamma=3$, we obtain: with probability at least
$1-3\delta$, on the stable common-regime event above,
\begin{align}
R_{m,q}^{\mathrm{pred}}(Q_{m,\phi}^{\mathrm{sup}})
\le\;&
26\,\mathrm{tr}(\Sigma_w)
+
\frac{12\,\sigma^2\big(\Delta_m(\phi)+d^2\big)}{S_m}
+
24\,\epsilon_{S_m}(\delta)
+
2\,\epsilon_{K_m}(\delta)
\nonumber\\
&+
\frac{2\sigma^2}{\lambda_{\mathrm{PB}}K_m}
\left(
\KL(Q_{m,\phi}^{\mathrm{sup}}\|P_\phi)+\log(1/\delta)
\right)
+
\sigma^2\lambda_{\mathrm{PB}}v,
\label{eq:explicit_smallness_display}
\end{align}
with constants not optimized. Term by term: the first term is the irreducible one-step noise floor (for the
homoscedastic Gaussian generator \eqref{eq:synthetic_noise} the predictive risk is bounded below by
$\mathrm{tr}(\Sigma_w)=d\sigma_{\mathrm{true}}^2$, so \eqref{eq:explicit_smallness_display} is then order-optimal up to
a universal constant); the second term decays at the parametric rate in $S_m$ and is small precisely when the learned
prior is good, both through $\Delta_m(\phi)$ and through the KL term, whose closed form
\eqref{eq:matrix_normal_kl_closed_form} shrinks for the same reason---this is the meta-learning effect, now explicit on
the right-hand side; the remaining terms vanish as $S_m$ and $K_m$ grow. This gives an explicit sufficient condition
under which the support--query bound is numerically meaningful.

We emphasize the scope: the two sufficient conditions above are deliberately stable, data-rich conditions. In the strict
few-shot regime $S_m<d$ they cannot hold, and the operative statement is the regularized
Proposition~\ref{prop:regularized_support_query_transfer}, in which the weakly excited component is controlled by the
learned prior rather than by the support data.

Conversely, if $A_m$ has an unstable dominant mode, the alignment constants are provably large and the transfer
factor degrades geometrically in the query horizon. The following remark records this failure mode formally.

\begin{remark}[Geometric degradation under an unstable dominant mode]
\label{rem:unstable_failure_mode}
Let $u\in\R^d$ with $\|u\|_2=1$ be a left eigenvector of $A_m$ with real eigenvalue $\rho>1$, i.e.\
$A_m^\top u=\rho u$. Then $s_t:=\langle u,x_{m,t}\rangle$ follows the scalar recursion
$s_{t+1}=\rho s_t+\langle u,w_{m,t+1}\rangle$, and $\rho^{-t}s_t$ converges almost surely to a random limit that is
nonzero almost surely whenever the noise has a nondegenerate component along $u$ (as for the Gaussian process noise
used in the synthetic experiments). Whenever $u^\top\widehat\Sigma_{\mathrm{sup}}u>0$, the variational
characterization of the positive-semidefinite order in \eqref{eq:gamma_alignment} gives
\[
\gamma_m
\;\ge\;
\frac{u^\top\widehat\Sigma_q u}{u^\top\widehat\Sigma_{\mathrm{sup}}u}
\;=\;
\frac{S_m\sum_{r=0}^{K_m-1}s_{S_m+r}^{\,2}}{K_m\sum_{t=0}^{S_m-1}s_t^{\,2}}
\;\asymp\;
\rho^{2K_m}\cdot\frac{S_m}{K_m}\cdot c_{\mathrm{traj}},
\]
where $c_{\mathrm{traj}}$ is a trajectory-dependent factor bounded away from $0$ and $\infty$ on the almost-sure
growth event. The same conclusion holds for the regularized constant \eqref{eq:regularized_gamma_alignment} at any
fixed $\alpha>0$, since
$\gamma_{m,\alpha}\ge u^\top\widehat\Sigma_q u/\bigl(u^\top\widehat\Sigma_{\mathrm{sup}}u+\alpha\bigr)$
and the numerator still grows geometrically in $K_m$. Hence the transfer factor---and with it any certificate built
on it---degrades \emph{geometrically in the query horizon} when the dominant mode is unstable, explaining why the
query empirical term may dominate the bound in unstable low-dimensional settings. This complements the finite-horizon
amplification that appears in the rollout consequence through $H_{m,K_m}$.
\end{remark}

Empirical diagnostics supporting validation-window proxies for support--query comparability are reported in
Appendix~\ref{app:support_query_diagnostics}.

\subsubsection{Consequences for Transition Recovery and Rollout Error}
In the remainder of this subsection, we write $\widehat A_m:=M_m$ for the posterior-mean predictor produced by task
adaptation and record several PAC-Bayes-derived corollaries that connect the support--query predictive bound to the
empirical quantities used in our experiments. For a deterministic predictor $\widehat A_m$, we write
$R_{m,q}^{\mathrm{pred}}(\widehat A_m)$ as shorthand for
$R_{m,q}^{\mathrm{pred}}(\delta_{\widehat A_m})$.

\paragraph{Projected transition-matrix error bound.}
Under the true query dynamics
\[
Y_{m,q}=A_mX_{m,q}+\Xi_{m,q},
\]
and since the squared Frobenius loss is convex in $A$, Jensen's inequality implies
\begin{equation}
\frac{1}{K_m}\|Y_{m,q}-M_mX_{m,q}\|_F^2
\le
\widehat R_{m,q}^{\mathrm{pred}}(Q_{m,\phi}^{\mathrm{sup}}).
\label{eq:jensen_support_query}
\end{equation}
Taking conditional expectation under the true dynamics, the residual at the posterior mean decomposes into a projected
transition-matrix term plus a nonnegative noise floor. Combining this with \eqref{eq:jensen_support_query} yields
\begin{equation}
E_{A,\mathrm{proj}}(m)
:=
\frac{1}{K_m}\|(\widehat A_m-A_m)X_{m,q}\|_F^2
\le
R_{m,q}^{\mathrm{pred}}(\widehat A_m)
\le
R_{m,q}^{\mathrm{pred}}(Q_{m,\phi}^{\mathrm{sup}})
\le
\mathcal B_m^{\mathrm{pred}},
\label{eq:reply_proj_A_bound}
\end{equation}
where $\mathcal B_m^{\mathrm{pred}}$ denotes the right-hand side of \eqref{eq:support_query_pb}. This quantity is always
well defined and measures transition-matrix error along the query excitation directions.

\paragraph{Full Frobenius transition-matrix bound under excitation.}
If the query Gram matrix satisfies the excitation condition
\begin{equation}
\lambda_{\min}\!\left(\frac{1}{K_m}X_{m,q}X_{m,q}^\top\right)\ge \kappa_m > 0,
\label{eq:reply_excitation_cond}
\end{equation}
then the projected bound lifts to a full Frobenius bound:
\begin{equation}
E_A(m)
=
\|\widehat A_m-A_m\|_F^2
\le
\frac{\mathcal B_m^{\mathrm{pred}}}{\kappa_m}.
\label{eq:reply_full_A_bound}
\end{equation}
When $K_m<d$, this bound may be vacuous because the query Gram matrix can be rank deficient; in such regimes,
\eqref{eq:reply_proj_A_bound} remains the meaningful matrix-error consequence.

\paragraph{Trajectory rollout bound.}
Writing the deterministic rollout error as
\[
E_{\mathrm{traj}}(m)
:=
\sum_{u=1}^{K_m}\|\widehat x_{m,S_m+u}-x_{m,S_m+u}\|_2^2,
\]
and defining the finite-horizon growth factor
\[
H_{m,K_m}:=\sum_{r=0}^{K_m-1}\|\widehat A_m\|_2^r,
\]
a standard unrolling argument for the rollout error recursion gives the PAC-Bayes-derived consequence
\begin{equation}
\E\!\left[E_{\mathrm{traj}}(m)\mid \mathcal{F}_{m,S_m}\right]
\le
K_m^2\,H_{m,K_m}^2\,\mathcal B_m^{\mathrm{pred}}.
\label{eq:reply_traj_bound}
\end{equation}
The factor $K_m^2$ appears because $\mathcal B_m^{\mathrm{pred}}$ is a per-step average predictive bound, whereas
$E_{\mathrm{traj}}(m)$ is a cumulative multi-step error.

Together, \eqref{eq:reply_proj_A_bound}--\eqref{eq:reply_traj_bound} show how the support--query PAC-Bayes predictive
bound induces explicit bounds for projected transition-matrix recovery, full Frobenius recovery under query excitation,
and finite-horizon rollout accuracy. These results are PAC-Bayes-derived corollaries of the primary support--query
predictive-risk bound. Their practical tightness depends on the same held-out query fit term
$\widehat R_{m,q}^{\mathrm{pred}}(Q_{m,\phi}^{\mathrm{sup}})$ appearing in
Corollary~\ref{cor:support_query_pb}. Therefore, the matrix and rollout consequences should be viewed as informative
when the query predictive fit remains controlled, and as potentially loose or vacuous when the adapted dynamics produce
large query residuals or unstable rollout amplification. On the alignment event of
Corollary~\ref{cor:conditional_support_query_certificate}, substituting its right-hand side for
$\mathcal B_m^{\mathrm{pred}}$ in \eqref{eq:reply_proj_A_bound}--\eqref{eq:reply_traj_bound} shows that all three
consequences inherit the same conditional support-to-query form; this corollary chain is recorded in
Appendix~\ref{sec:proof_support_query_transfer}.

\section{Performance Evaluation}

We evaluate the proposed framework through a comprehensive series of experiments on both synthetic LTI systems and real-world data sets. We begin by detailing the experimental protocol, including competitive baselines and evaluation metrics. Subsequently, we present a comparative analysis demonstrating the advantages of our approach in terms of parameter identification accuracy, multi-step predictive stability, and data efficiency.

\subsection{Baselines}
\label{sec:baselines}

We compare PBML-LTI against four primary baselines spanning both single-task estimation and cross-task structure sharing. In additional experiments, we also compare against a MAML-style LTI baseline as a stronger gradient-based
meta-learning method. For all
methods, each test trajectory is evaluated under the same outer support--query protocol: the first part of the trajectory
is reserved as the available support window and the following part is held out as the query window. Our primary
evaluation uses an adaptive-support protocol under a common maximum support budget. That is, each method receives the
same available support window, but may choose an effective fit prefix within that budget before being evaluated on the
same held-out query segment. This setup is intended to measure not only post-adaptation accuracy, but also how much
support data each method actually needs in order to adapt well on a given task.

\paragraph{Per-task ordinary least squares (OLS).}
The OLS baseline fits each task independently using the closed-form least-squares estimator
\begin{equation}
\widehat{A}_m^{\mathrm{ols}}
=
Y_m^{\mathrm{sup}} (X_m^{\mathrm{sup}})^\top
\Big(X_m^{\mathrm{sup}} (X_m^{\mathrm{sup}})^\top\Big)^{-1}.
\end{equation}
Here $(X_m^{\mathrm{sup}},Y_m^{\mathrm{sup}})$ denotes the regression pair formed from the support data available to the
method. To ensure numerical stability, we add a small diagonal jitter and use a pseudo-inverse fallback whenever the Gram
matrix is ill-conditioned. OLS has no regularization hyperparameter.

\paragraph{Per-task ridge (Ridge).}
The Ridge baseline fits each task independently by Tikhonov-regularized least squares:
\begin{equation}
\widehat{A}_m^{\mathrm{ridge}}(\lambda_{\mathrm{reg}})
=
Y_m^{\mathrm{sup}} (X_m^{\mathrm{sup}})^\top
\Big(X_m^{\mathrm{sup}} (X_m^{\mathrm{sup}})^\top + \lambda_{\mathrm{reg}} I_d\Big)^{-1}.
\end{equation}
We select $\lambda_{\mathrm{reg}}$ once using only the training tasks by grid search over a predefined candidate set with
a stability-oriented criterion. In synthetic experiments, the stability threshold is fixed from the known regime parameter,
$\rho_{\mathrm{target}}=\rho_0$, and is not tuned on the test set. In settings where such regime information is not
available, $\rho_{\mathrm{target}}$ should be regarded as a baseline hyperparameter or prior stability specification. Specifically, we choose the smallest $\lambda_{\mathrm{reg}}$ for which the fitted
matrices satisfy
\begin{equation}
\mathbb{E}_{m\in\mathcal{T}_{\mathrm{tr}}}\big[\rho(\widehat{A}_m^{\mathrm{ridge}}(\lambda_{\mathrm{reg}}))\big]
\le \rho_{\mathrm{target}},
\label{ridge_regular}
\end{equation}
and if no candidate satisfies this condition, we choose the one minimizing the mean stability violation
\begin{equation}
\mathbb{E}_{m\in\mathcal{T}_{\mathrm{tr}}}
\Big[\max\big\{0,\rho(\widehat{A}_m^{\mathrm{ridge}}(\lambda_{\mathrm{reg}}))-\rho_{\mathrm{target}}\big\}\Big].
\label{ridge_regular2}
\end{equation}
The selected $\lambda_{\mathrm{reg}}$ is then fixed and reused for all test tasks.

\paragraph{Pooled-prior Ridge.}
The Pooled-prior Ridge baseline incorporates cross-task information through a pooled mean dynamics estimate. It first
computes
\begin{equation}
\bar{A}
:=
\frac{1}{M_{\mathrm{tr}}}\sum_{m=1}^{M_{\mathrm{tr}}}\widehat{A}_m^{\mathrm{ols,full}},
\end{equation}
where $\widehat{A}_m^{\mathrm{ols,full}}$ is the OLS estimate fitted on the full training trajectory of task $m$.
Given $\bar{A}$, the task-specific estimate is
\begin{equation}
\widehat{A}_m^{\mathrm{pool}}(\lambda_{\mathrm{reg}})
=
\Big(Y_m^{\mathrm{sup}} (X_m^{\mathrm{sup}})^\top + \lambda_{\mathrm{reg}} \bar{A}\Big)
\Big(X_m^{\mathrm{sup}} (X_m^{\mathrm{sup}})^\top + \lambda_{\mathrm{reg}} I_d\Big)^{-1}.
\end{equation}
We tune $\lambda_{\mathrm{reg}}$ using the same stability-based grid-search procedure as for Ridge, with the same
$\rho_{\mathrm{target}}$ convention, and then keep it fixed for evaluation on validation and test tasks. To verify that
our conclusions are not an artifact of this stability-oriented tuning, Appendix~\ref{app:rho_target_diagnostic} also
reports a diagnostic variant in which Ridge and Pooled-prior Ridge are tuned directly by validation rollout error, without
using the $\rho_{\mathrm{target}}$ criterion.

\paragraph{Shared Subspace.}
The Shared Subspace baseline assumes that the task matrices share a low-dimensional structure in vectorized form. It first
fits OLS estimates on the training tasks, vectorizes them to obtain $\mathrm{vec}(\widehat{A}_m^{\mathrm{ols,full}})$,
and computes a mean vector $a_0$ together with a rank-$k$ PCA basis
$U\in\mathbb{R}^{d^2\times k}$ from the centered vectors. For a new task, the estimate is restricted to the form
\begin{equation}
\mathrm{vec}(A)=a_0+Uc,
\end{equation}
and the coefficients $c\in\mathbb{R}^k$ are fitted by ridge-regularized least squares on the support data. Here
$\lambda_{\mathrm{reg}}$ regularizes the coefficient estimation and is selected using the same stability-based criterion
as Ridge. The subspace dimension $k$ is treated as an additional method hyperparameter.



\paragraph{MAML-style LTI.}
Finally, we include a gradient-based meta-learning baseline inspired by MAML~\citep{finn2017maml}. Since the original
MAML formulation is not a closed-form LTI identification method, we implement the natural analogue for this setting: a
shared initialization for the transition matrix is meta-trained so that a small number of gradient steps on the support
prefix improves query performance. This provides a stronger adaptation-based meta-learning comparison while preserving
the LTI regression structure.

\subsection{Evaluation Metrics}

The theory in Section~\ref{theoretical_analysis} controls predictive adaptation risk on held-out future time steps.
Our empirical evaluation reports three complementary quantities. The first, $E_A$, is a diagnostic of parameter recovery.
The second, $E_{\mathrm{traj}}$, is an open-loop rollout diagnostic that is more sensitive to spectral growth and
long-horizon instability. The third, $\overline{S}$, measures how much support data a method actually uses in the
adaptive-support analysis. Because these quantities probe different aspects of performance, they need not rank methods
identically.

\paragraph{Transition matrix estimation error.}
To evaluate the accuracy of the estimated transition matrix $\widehat{A}_m$, we report the Frobenius squared error
\begin{equation}
E_A := \|\widehat{A}_m - A_m\|_F^2,
\label{eq:param_error_metric}
\end{equation}
which quantifies the aggregate squared deviation between the estimated and true transition matrices across all entries.
This metric treats all matrix elements equally and provides a convenient global measure of parameter recovery
\citep{ziemann2023tutorial}.

\paragraph{Trajectory rollout error.}
To evaluate the practical utility of the estimated dynamics $\widehat{A}_m$, we measure the open-loop multi-step rollout
error on a held-out query window by recursively applying the identified transition operator. This metric is a more
stringent test than $E_A$: because errors in $\widehat{A}_m$ compound over the rollout horizon, even small parameter
errors can lead to large predictive deviations, especially near unstable or weakly damped modes.

Starting from the last support state $x_{m,S_m}$, we generate predictions by repeated next-step updates:
\begin{equation}
\widehat{x}_{m,S_m}=x_{m,S_m},
\qquad
\widehat{x}_{m,t+1} = \widehat{A}_m \widehat{x}_{m,t},
\qquad
t = S_m,\dots,S_m+K_m-1.
\label{eq:rollout_prediction}
\end{equation}
We then compare the predicted states to the held-out query states in open loop. The per-task trajectory rollout error is
defined as
\begin{equation}
E_{\mathrm{traj}}
:=
\sum_{t=1}^{K_m}
\big\|
\widehat{x}_{m,S_m+t} - x_{m,S_m+t}
\big\|_2^2.
\label{eq:traj_error_metric}
\end{equation}
The held-out query states contain the realized process noise from the underlying trajectory, but we do not inject
additional noise into the predicted rollout. This choice is intentional: it isolates the quality of the learned
\emph{deterministic} transition operator from the variability introduced by fresh rollout noise. It also keeps aggregate
comparisons stable across many tasks. If new rollout noise were injected for every task, the reported error would reflect
not only system-identification quality but also task-specific stochastic fluctuations, substantially increasing
cross-task variance and making method comparisons less stable and less interpretable. In all reported experiments, we set
$K_m=5$, so $E_{\mathrm{traj}}$ corresponds to the accumulated error over five rollout steps. Noise-averaged rollout
evaluation is a reasonable supplementary robustness check for synthetic data, but is not the primary metric reported
here.
\paragraph{Support-length protocol: adaptive-support evaluation under a maximum budget.}
Each test trajectory is first divided into an outer support window and a held-out query window. Let $T_{\mathrm{sup}}$
denote the maximum support budget and $T_{\mathrm{qry}}$ the query horizon. Our evaluation studies adaptation under this
common maximum support budget: each method receives the same available support window, but may choose an effective fit
prefix
\[
s_m^{\mathrm{fit}} \le S_m, \qquad S_m=\min\{T_{\mathrm{sup}},T_m\},
\]
before being evaluated on the same held-out query segment of length
\[
K_m=\min\{T_{\mathrm{qry}},T_m-S_m\}.
\]

The effective fit length is chosen by an inner validation-based prefix search carried out entirely within the support
window. Specifically, we reserve a short validation suffix of length $T_{\mathrm{val}}$ inside the support window,
consider a grid of candidate fit prefixes, fit the method using only the first candidate prefix, and score that
candidate on the remaining validation portion. The selected prefix is then used to refit the method before final
evaluation on the held-out outer query window.

We view this adaptive-support protocol as a natural few-shot evaluation for heterogeneous tasks. It measures not only
final predictive performance under a common support budget, but also how much support a method actually uses before it
adapts well on a given task. There is no explicit penalty favoring shorter prefixes: a shorter prefix is selected only
when additional transitions do not improve the validation objective. Thus, $\overline S$ should be interpreted as an
empirical measure of adaptive sample efficiency, not as a claim that fewer observations are always better. In finite
temporally dependent trajectories, additional transitions can alter the fitted spectral structure of the transition
matrix, and open-loop rollout performance can be sensitive to these spectral changes even when one-step fit changes only
slightly.

The reference transition matrix, when available, is not used to choose the support prefix in the main adaptive-support
protocol. Prefix selection is based only on validation error computed on the validation suffix inside the support window.
The reference matrix is used only after fitting, to report the diagnostic transition-matrix error $E_A$. Any
oracle-assisted support-sensitivity variant would be a separate diagnostic analysis and is not part of the main
evaluation protocol reported here.

\paragraph{Average support length.}
We report the average selected support length in the adaptive-support analysis as
\begin{equation}
\overline{S}
\;:=\;
\frac{1}{|\mathcal{T}_{\mathrm{test}}|}
\sum_{m\in\mathcal{T}_{\mathrm{test}}}
s_m^{\mathrm{fit}} .
\label{eq:avg_support_len}
\end{equation}
Accordingly, $\overline{S}$ should be interpreted as the average \emph{effective} amount of support a method actually
uses under a common maximum support budget. This makes $\overline{S}$ an interpretable quantity in its own right:
it reflects adaptive few-shot efficiency, while the accompanying $E_A$ and $E_{\mathrm{traj}}$ values indicate the
quality of the resulting adaptation.
\subsection{Synthetic Data Experiment}

\subsubsection{Task Generation Mechanism}
For synthetic experiments, we generate a meta-dataset by first fixing environment-level parameters and then sampling multiple independent LTI tasks from a shared environment distribution, consistent with Assumption~\ref{as3}.

We fix an environment hyper-parameter $\phi_\star=(W_\star,V_\star,\sigma_\star^2)$ and sample tasks i.i.d. as
\begin{equation}
A_m \mid \phi_\star \sim \mathcal{MN}\!\left(W_\star,\; I_d,\; V_\star\right),
\qquad m=1,2,\dots
\label{eq:synthetic_task_prior}
\end{equation}
which matches the prior family \eqref{eq:mn_prior}, and we generate trajectories according to \eqref{eq:lti_dynamics} with
Gaussian process noise
\begin{equation}
w_{m,t+1}\sim\mathcal{N}(0,\sigma_\star^2 I_d),
\qquad t=0,\dots,T_m-1.
\label{eq:synthetic_noise}
\end{equation}

We construct a stable meta-mean matrix $W_\star\in\R^{d\times d}$ by drawing a Gaussian random matrix and scaling it so that its spectral radius is at most $0.9\,\rho_{0}$. We set the true column covariance to $V_\star = v_{\mathrm{true}} I_d$ and the process noise variance to $\sigma_\star^2 = \sigma_{\mathrm{true}}^2$.

For each task $m$, we sample $A_m$ from \eqref{eq:synthetic_task_prior}, then rollout a trajectory from an initial state $x_{m,0}\sim\mathcal{N}(0,I_d)$, and generate a trajectory according to \eqref{eq:lti_dynamics}
\begin{equation}
x_{m,t+1} = A_m x_{m,t} + w_{m,t+1},
\qquad t=0,\dots,T_m-1.
\label{eq:synthetic_dynamics}
\end{equation}
From each trajectory, we form the regression matrices $X_m$ and $Y_m$ as in \eqref{eq:matrix_regression} and store them together with the ground-truth dynamics $A_m$.

\subsubsection{Experiment Setup}
We consider a synthetic high-dimensional identification setting in which the state dimension exceeds the available trajectory length. We fix the environment parameters to $v_{\mathrm{true}}=0.5$ and
$\sigma_{\mathrm{true}}^2=0.01$, and study two dynamical regimes by varying the spectral-radius parameter $\rho_0$: a near-stable regime with $\rho_0=0.95$ and an unstable regime with $\rho_0=4.95$. The stability threshold $\rho_{\mathrm{target}}$ used to tune the ridge regularization parameter $\lambda_{\mathrm{reg}}$ in \eqref{ridge_regular} and \eqref{ridge_regular2}, is set to match the target spectral radius $\rho_0$. The unstable setting is included as a finite-horizon stress test of the methods. It lies outside the favorable near-stable
regime in which one should expect the tightest rollout consequences from the theory. In this regime, errors in the
estimated transition matrix can be strongly amplified under multi-step rollouts, potentially leading to severe prediction
instability even when one-step prediction error or Frobenius matrix error appears small. Furthermore, we consider three state dimensions $d\in\{10,25,50\}$ with a fixed trajectory length $T_m=25$. In the medium- and high-dimensional cases, the number of unknown parameters in $A_m$ grows as $d^2$ and far exceeds the number of observed
transitions, placing the problem in a few-shot regime that stresses both estimation accuracy and predictive stability.

First, we uniformly sample $100$ tasks to compose the meta-training set. We then construct two distinct evaluation sets. The \textit{common-case} test set consists of tasks where the entrywise mean dynamics fall within one standard deviation of the population mean, representing typical in-distribution systems.  In contrast, the \textit{edge-case} test set is formed by selecting tasks from the extremes of the overall parameter distribution, representing out-of-distribution systems intended to stress-test robustness under atypical dynamics. Figure~\ref{fig:a_distribution} depicts the resulting task distributions.

\begin{figure}[h]
\begin{center}
\includegraphics[scale=0.5]{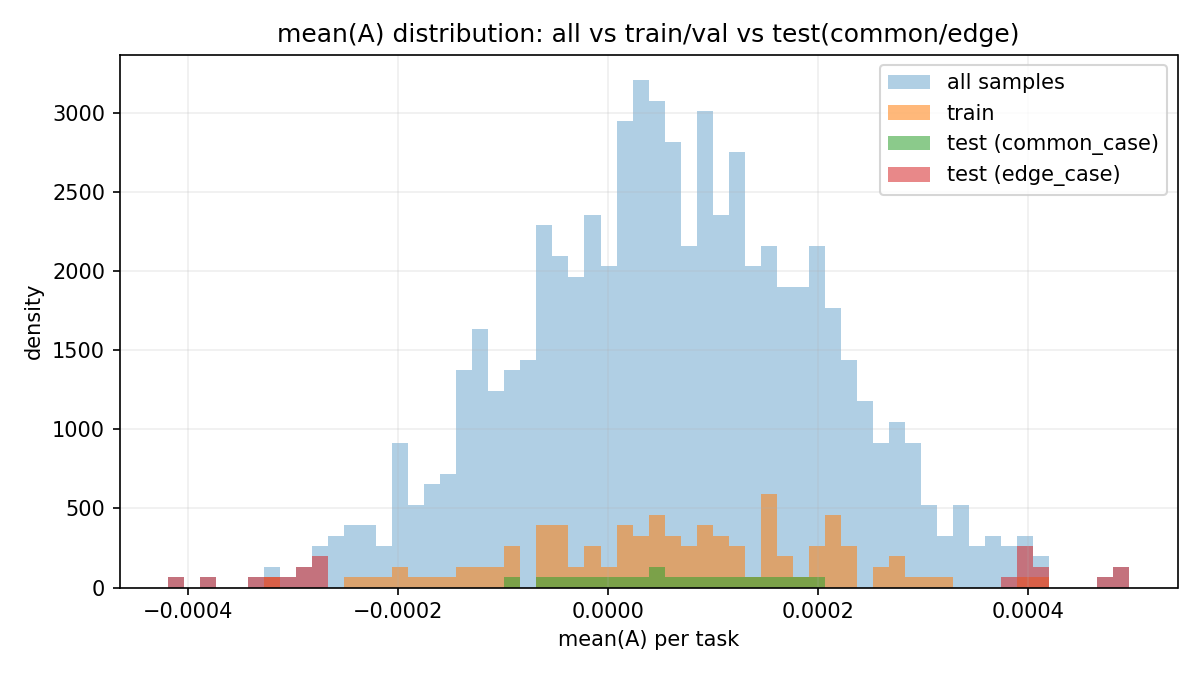}
\end{center}
\caption{This figure illustrates the distribution of sampled tasks as a function of the entrywise mean of the dynamics matrix $A_m$. The blue histogram shows all sampled tasks in the initial pool. The yellow region indicates the subset selected for training and validation. The green region corresponds to the in-distribution test tasks, while the red markers denote
the edge-case test tasks chosen from the extremes of the distribution.}
\label{fig:a_distribution}
\end{figure}

\subsubsection{Experiment Results}

\paragraph{Stable-system results.}

Table~\ref{tab:prefix_flexible_stable} reports results on the stable synthetic environment with $\rho_0=0.95$, where all
tasks are generated with spectral radius below one and therefore do not exhibit unstable rollouts. The reported
$\overline{S}$ values are part of the adaptive-support evaluation: under the same maximum support budget, they indicate
how much support each method actually used before final evaluation on the held-out query window.

Across all dimensions and both common-case and edge-case settings, PBML-LTI achieves the lowest transition-matrix error
$E_A$ while using a short validation-selected support prefix. The improvement in $E_A$ is substantial. For example, when
$d=50$, PBML-LTI reduces $E_A$ from approximately $0.84$ for OLS/Ridge, $0.67$ for Pooled-prior Ridge, $0.86$ for
Shared Subspace, and about $14.9$ for MAML-LTI to about $0.21$. Similar gains appear for $d=25$ and $d=10$. These
results indicate that the learned matrix-normal prior provides an effective inductive bias for rapid adaptation from
short trajectories.

The MAML-style LTI baseline performs worse than PBML-LTI in all stable settings, especially in transition-matrix
recovery. Although MAML-LTI uses relatively short support prefixes, its gradient-based adaptation from a shared
initialization does not recover the task-specific transition matrices as accurately as the closed-form Bayesian update
under the learned prior. This highlights the advantage of exploiting the conjugate LTI structure rather than relying only
on iterative gradient adaptation.

Figure~\ref{fig:transition_stable} further illustrates these estimation patterns. Competing baselines tend to produce
more conservative or structurally biased estimates, whereas PBML-LTI more accurately captures the transition structure
while preserving stable behavior.

Differences in rollout error $E_{\mathrm{traj}}$ are comparatively small in the stable regime. Since the systems are
stable over the short query horizon, one-step prediction errors do not amplify rapidly. As a result, OLS, Ridge,
Pooled-prior Ridge, Shared Subspace, and PBML-LTI can have similar rollout errors despite large differences in
transition-matrix error. In several $d=10$ settings, the rollout values are essentially tied up to the reported precision.
By contrast, MAML-LTI has noticeably larger rollout error in all stable settings. Thus, in stable systems, $E_A$ more
clearly distinguishes parameter-recovery quality, whereas $E_{\mathrm{traj}}$ is less sensitive unless estimation errors
substantially affect short-horizon prediction.
\begin{table}[t]
\caption{Performance comparison across methods in common-case and edge-case settings on stable systems
($\rho_0=0.95$). $E_A$ is the mean $\pm$ standard deviation of the transition-matrix error,
$E_{\mathrm{traj}}$ is the mean $\pm$ standard deviation of the rollout error, and
$\overline{S}$ reports the average selected support length. Lower is better; boldface marks the lowest sample mean. Paired significance tests in
Appendix~\ref{app:synthetic_significance} qualify these comparisons.}
\label{tab:prefix_flexible_stable}
\begin{center}
\resizebox{\linewidth}{!}{
\begin{tabular}{c|c|c|cccccc}
\toprule
Dimension & Setting & Metric & OLS & Ridge & Pooled-prior Ridge & Shared Subspace & MAML-LTI & PBML-LTI \\
\midrule

\multirow{6}{*}{50}
& \multirow{3}{*}{Common-case}
& $E_A$
& $0.8411 \pm .015$
& $0.8410 \pm .015$
& $0.6743 \pm .011$
& $0.8572 \pm .050$
& $14.893 \pm .188$
& $\mathbf{0.2068 \pm .0052}$ \\
&
& $E_{\mathrm{traj}}$
& $0.0258 \pm .0018$
& $0.0258 \pm .0018$
& $0.0258 \pm .0018$
& $0.0258 \pm .0018$
& $0.1081 \pm .0820$
& $\mathbf{0.0257 \pm .0018}$ \\
&
& $\overline{S}$
& $2.00$
& $2.00$
& $2.55$
& $10.25$
& $2.90$
& $\mathbf{1.35}$ \\
\cline{2-9}

& \multirow{3}{*}{Edge-case}
& $E_A$
& $0.8382 \pm .010$
& $0.8381 \pm .010$
& $0.6695 \pm .009$
& $0.8811 \pm .073$
& $14.830 \pm .260$
& $\mathbf{0.2071 \pm .0042}$ \\
&
& $E_{\mathrm{traj}}$
& $0.0254 \pm .0018$
& $0.0254 \pm .0018$
& $0.0254 \pm .0018$
& $0.0254 \pm .0018$
& $0.1105 \pm .1009$
& $\mathbf{0.0253 \pm .0018}$ \\
&
& $\overline{S}$
& $2.00$
& $2.00$
& $2.85$
& $11.05$
& $2.65$
& $\mathbf{1.40}$ \\

\midrule

\multirow{6}{*}{25}
& \multirow{3}{*}{Common-case}
& $E_A$
& $0.7140 \pm .0181$
& $0.7140 \pm .0181$
& $0.8544 \pm .0186$
& $0.7731 \pm .0189$
& $1.7566 \pm .0512$
& $\mathbf{0.0394 \pm .0020}$ \\
&
& $E_{\mathrm{traj}}$
& $0.0133 \pm .0013$
& $0.0133 \pm .0013$
& $0.0134 \pm .0018$
& $0.0134 \pm .0013$
& $0.0338 \pm .0104$
& $\mathbf{0.0121 \pm .0013}$ \\
&
& $\overline{S}$
& $2.00$
& $2.00$
& $2.00$
& $13.95$
& $1.85$
& $\mathbf{1.20}$ \\
\cline{2-9}

& \multirow{3}{*}{Edge-case}
& $E_A$
& $0.7196 \pm .0181$
& $0.7196 \pm .0181$
& $0.8504 \pm .0230$
& $0.7892 \pm .0191$
& $1.7702 \pm .0696$
& $\mathbf{0.0399 \pm .0025}$ \\
&
& $E_{\mathrm{traj}}$
& $0.0135 \pm .0018$
& $0.0135 \pm .0018$
& $0.0122 \pm .0012$
& $0.0121 \pm .0013$
& $0.0318 \pm .0233$
& $\mathbf{0.0121 \pm .0013}$ \\
&
& $\overline{S}$
& $2.00$
& $2.00$
& $2.00$
& $13.95$
& $1.60$
& $\mathbf{1.10}$ \\

\midrule

\multirow{6}{*}{10}
& \multirow{3}{*}{Common-case}
& $E_A$
& $0.4449 \pm .0472$
& $0.4447 \pm .0471$
& $0.0887 \pm .0079$
& $0.1088 \pm .0532$
& $0.0357 \pm .0020$
& $\mathbf{0.0053 \pm .0007}$ \\
&
& $E_{\mathrm{traj}}$
& $0.0056 \pm .0011$
& $0.0056 \pm .0011$
& $0.0055 \pm .0009$
& $0.0055 \pm .0009$
& $0.0112 \pm .0054$
& $\mathbf{0.0055 \pm .0009}$ \\
&
& $\overline{S}$
& $2.65$
& $2.65$
& $1.80$
& $7.95$
& $1.55$
& $\mathbf{1.05}$ \\
\cline{2-9}

& \multirow{3}{*}{Edge-case}
& $E_A$
& $0.4501 \pm .0479$
& $0.4498 \pm .0479$
& $0.0910 \pm .0070$
& $0.0930 \pm .0307$
& $0.0358 \pm .0032$
& $\mathbf{0.0054 \pm .0008}$ \\
&
& $E_{\mathrm{traj}}$
& $0.0056 \pm .0011$
& $0.0056 \pm .0011$
& $0.0056 \pm .0011$
& $0.0057 \pm .0009$
& $0.0132 \pm .0047$
& $\mathbf{0.0056 \pm .0010}$ \\
&
& $\overline{S}$
& $2.80$
& $2.80$
& $1.90$
& $10.40$
& $1.60$
& $\mathbf{1.10}$ \\
\bottomrule
\end{tabular}
}
\end{center}
\end{table}

\begin{figure}[h]
\begin{center}
\includegraphics[scale=0.9]{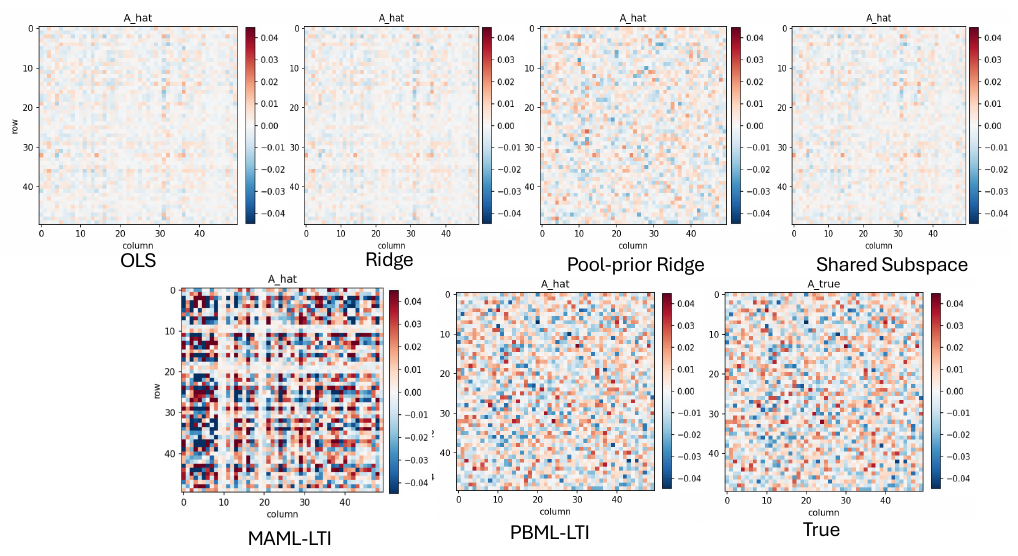}
\end{center}
\caption{Heatmap comparison of the estimated and ground-truth transition matrices for a representative stable test system in the common-case setting. An extended version is provided in the Appendix, Figure~\ref{fig:transition_stable_extend}.}
\label{fig:transition_stable}
\end{figure}

Finally, performance on the edge-case test tasks closely tracks the common-case results across all three dimensions. The
ranking of methods remains largely unchanged: PBML-LTI consistently achieves the best transition-matrix recovery, the
shortest effective support usage, and rollout performance that is best or tied with the best up to reporting precision.
This suggests that, under stable dynamics, the learned prior transfers reliably to both typical and atypical systems,
supporting robust few-shot identification. Fixed-prefix support sweeps in
Appendix~\ref{app:support_sweep_diagnostics} further show that PBML-LTI already attains low error at small support
lengths in the stable regime, rather than relying only on the adaptive prefix-selection procedure.

\paragraph{Unstable-system results.}

Table~\ref{tab:prefix_flexible} reports results for the unstable environment, where $\rho_0=4.95$ and prediction errors
can amplify rapidly under open-loop rollout. The table includes the MAML-LTI baseline, which provides a stronger
gradient-based meta-learning comparison. The reported $\overline{S}$ values arise from the adaptive-support protocol
described above and should be interpreted as the \emph{effective} amount of support each method actually used under a
common maximum support budget. Thus, the table compares not only final accuracy, but also adaptive support usage in a
regime where different tasks may require different amounts of calibration data.

A key pattern in the unstable regime is that transition-matrix recovery, rollout accuracy, and support efficiency no
longer always rank methods identically. Accurate long-horizon prediction depends not only on small Frobenius error, but
also on accurately capturing the dominant spectral modes of the dynamics. Consequently, a method may achieve a small
$E_A$ while still producing poor rollout trajectories if it slightly misestimates unstable eigenvalues or eigenspaces.
Conversely, a method may obtain competitive rollout error without achieving the best global Frobenius recovery.

In the high-dimensional setting ($d=50$), PBML-LTI achieves the strongest overall estimation and prediction performance.
It attains the lowest transition-matrix error and the lowest rollout error in both common-case and edge-case subsets.
Although MAML-LTI selects a shorter support prefix on average, its transition-matrix error remains much larger than that
of PBML-LTI, and its rollout error is also worse. Thus, in the most high-dimensional few-shot regime, the learned
Bayesian prior provides a substantially more effective adaptation mechanism than both classical estimators and the
gradient-based MAML-style baseline.
\begin{table}[t]
\caption{Performance comparison across methods in common-case and edge-case settings on unstable systems
($\rho_0=4.95$). $E_A$ is the mean $\pm$ standard deviation of the transition-matrix error,
$E_{\mathrm{traj}}$ is the mean $\pm$ standard deviation of the trajectory rollout error, and
$\overline{S}$ reports the average effective support length selected by the adaptive-support protocol. Lower is better;
boldface marks the lowest sample mean. Paired significance tests in Appendix~\ref{app:synthetic_significance} qualify
these comparisons.}
\label{tab:prefix_flexible}
\begin{center}
\resizebox{\linewidth}{!}{
\begin{tabular}{c|c|c|cccccc}
\toprule
Dimension & Setting & Metric & OLS & Ridge & Pooled-prior Ridge & Shared Subspace & MAML-LTI & PBML-LTI \\
\midrule

\multirow{6}{*}{50}
& \multirow{3}{*}{Common-case}
& $E_A$
& $16.923 \pm .345$
& $15.902 \pm .340$
& $7.538 \pm .102$
& $6.419 \pm .201$
& $8.9787 \pm .0754$
& $\mathbf{0.156 \pm .005}$ \\
&
& $E_{\mathrm{traj}}$
& $0.040 \pm .004$
& $0.040 \pm .003$
& $0.038 \pm .003$
& $0.038 \pm .003$
& $0.0414 \pm .0048$
& $\mathbf{0.037 \pm .003}$ \\
&
& $\overline{S}$
& $9.75$
& $15.00$
& $15.00$
& $14.20$
& $\mathbf{2.55}$
& $4.15$ \\
\cline{2-9}

& \multirow{3}{*}{Edge-case}
& $E_A$
& $16.776 \pm .393$
& $15.874 \pm .349$
& $7.532 \pm .079$
& $6.455 \pm .406$
& $8.9841 \pm .0608$
& $\mathbf{0.157 \pm .003}$ \\
&
& $E_{\mathrm{traj}}$
& $0.040 \pm .004$
& $0.039 \pm .004$
& $0.038 \pm .003$
& $0.038 \pm .003$
& $0.0409 \pm .0062$
& $\mathbf{0.036 \pm .003}$ \\
&
& $\overline{S}$
& $10.15$
& $15.00$
& $15.00$
& $13.65$
& $\mathbf{2.55}$
& $4.25$ \\

\midrule

\multirow{6}{*}{25}
& \multirow{3}{*}{Common-case}
& $E_A$
& $10.344 \pm .659$
& $9.907 \pm .619$
& $1.587 \pm .050$
& $35.268 \pm 12.856$
& $2.5229 \pm .0370$
& $\mathbf{0.029 \pm .001}$ \\
&
& $E_{\mathrm{traj}}$
& $0.088 \pm .058$
& $0.064 \pm .025$
& $\mathbf{0.053 \pm .024}$
& $27.151 \pm 28.807$
& $1.1349 \pm 1.9963$
& $0.080 \pm .052$ \\
&
& $\overline{S}$
& $14.10$
& $14.90$
& $14.75$
& $14.35$
& $\mathbf{2.05}$
& $6.05$ \\
\cline{2-9}

& \multirow{3}{*}{Edge-case}
& $E_A$
& $11.023 \pm .938$
& $10.445 \pm .856$
& $1.586 \pm .057$
& $33.889 \pm 7.925$
& $2.5324 \pm .0469$
& $\mathbf{0.029 \pm .001}$ \\
&
& $E_{\mathrm{traj}}$
& $0.086 \pm .049$
& $0.071 \pm .034$
& $\mathbf{0.042 \pm .009}$
& $28.933 \pm 47.100$
& $2.2052 \pm 4.7758$
& $0.086 \pm .062$ \\
&
& $\overline{S}$
& $13.50$
& $14.55$
& $14.90$
& $14.30$
& $\mathbf{1.65}$
& $6.00$ \\

\midrule

\multirow{6}{*}{10}
& \multirow{3}{*}{Common-case}
& $E_A$
& $0.845 \pm .764$
& $0.697 \pm .581$
& $\mathbf{0.007 \pm .002}$
& $0.013 \pm .005$
& $0.1473 \pm 1.8158$
& $0.139 \pm .046$ \\
&
& $E_{\mathrm{traj}}$
& $193.704 \pm 459.303$
& $\mathbf{57.628 \pm 154.718}$
& $21935.364 \pm 69327.156$
& $721888.809 \pm 1044209.839$
& $9.739{\times}10^{6} \pm 1.887{\times}10^{7}$
& $57.919 \pm 144.808$ \\
&
& $\overline{S}$
& $13.35$
& $14.30$
& $9.25$
& $\mathbf{3.20}$
& $8.35$
& $14.35$ \\
\cline{2-9}

& \multirow{3}{*}{Edge-case}
& $E_A$
& $0.572 \pm .630$
& $0.515 \pm .566$
& $\mathbf{0.008 \pm .001}$
& $0.014 \pm .003$
& $0.2737 \pm 1.6215$
& $0.132 \pm .040$ \\
&
& $E_{\mathrm{traj}}$
& $296.091 \pm 755.920$
& $131.024 \pm 389.381$
& $16433.662 \pm 49301.429$
& $709906.413 \pm 1155741.042$
& $8.756{\times}10^{7} \pm 1.987{\times}10^{8}$
& $\mathbf{31.400 \pm 38.879}$ \\
&
& $\overline{S}$
& $13.90$
& $14.15$
& $10.10$
& $\mathbf{3.20}$
& $8.05$
& $14.45$ \\
\bottomrule
\end{tabular}
}
\end{center}
\end{table}
\begin{figure}[h]
\begin{center}
\includegraphics[scale=0.6]{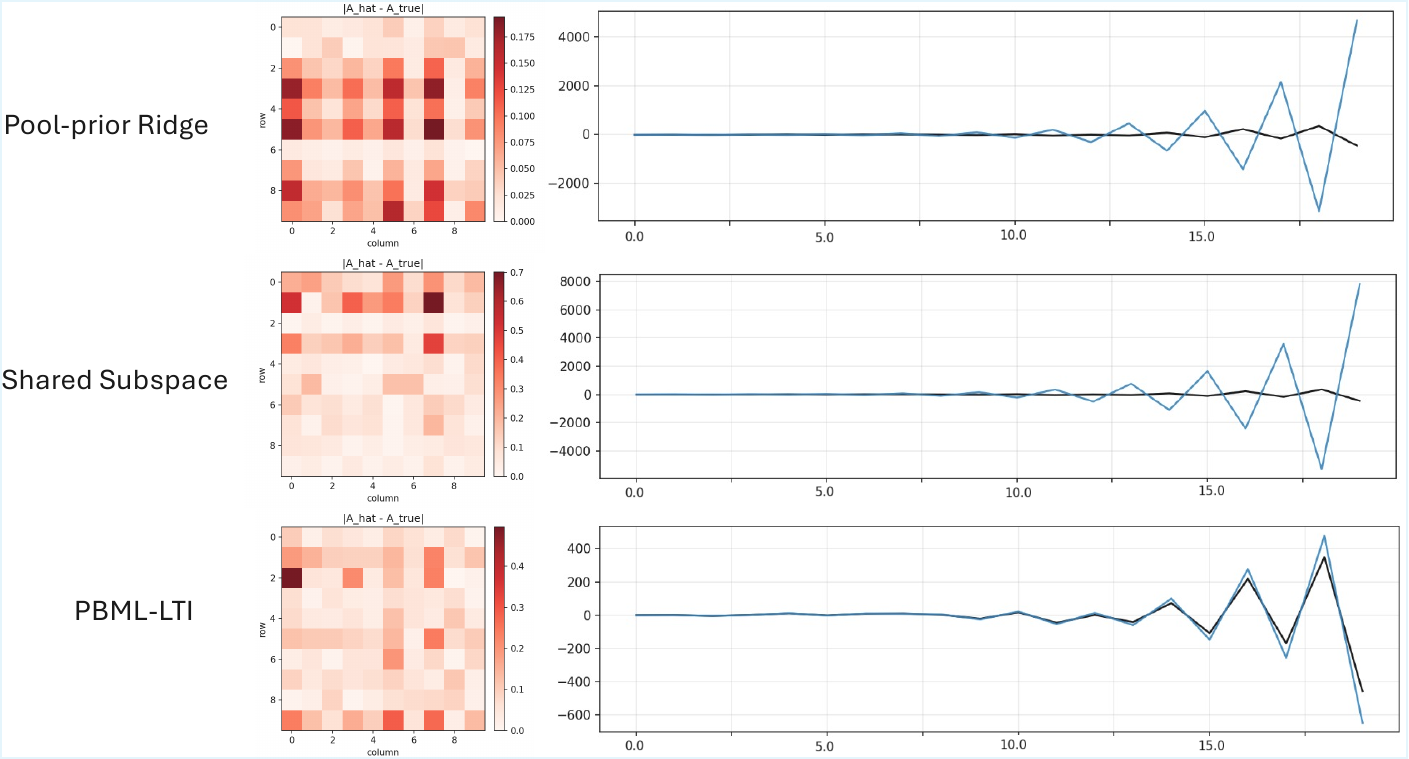}
\end{center}
\caption{The left panels show heatmaps of the absolute differences between the estimated and ground-truth transition
matrices for a representative unstable test system with state dimension $d=10$ in the common-case setting. The right
panels display open-loop trajectory rollouts generated using the estimated transition matrix (black) alongside the
corresponding ground-truth trajectories (blue) for the same system. An extended visualization of the estimated transition
matrix, the ground-truth transition matrix, and their absolute differences is provided in
Figure~\ref{fig:transition_unstable} in the Appendix, while the corresponding open-loop rollout trajectories are shown in
Figure~\ref{fig:rollout_unstable} in the Appendix.}
\label{fig:traj_explode}
\end{figure}

For $d=25$, PBML-LTI again achieves the best transition-matrix recovery by a large margin. It uses substantially shorter
support prefixes than OLS, Ridge, Pooled-prior Ridge, and Shared Subspace, although MAML-LTI selects the shortest prefix.
However, MAML-LTI's short support usage comes with much worse $E_A$ and significantly worse rollout error. Regarding
trajectory rollout, Pooled-prior Ridge attains the lowest mean $E_{\mathrm{traj}}$ in this dimension. This reflects the
fact that aggressive shrinkage toward a pooled mean can sometimes stabilize rollouts even when it does not recover the
transition matrix as accurately. PBML-LTI therefore offers the best matrix recovery and a favorable support--accuracy
trade-off, while Pooled-prior Ridge is strongest on rollout error in this particular unstable $d=25$ setting.

For $d=10$, the behavior is more delicate. Pooled-prior Ridge and Shared Subspace achieve the smallest transition-matrix
errors, but they can produce extremely large rollout errors. This illustrates a fundamental limitation of interpreting
$E_A$ alone: the Frobenius norm measures an average entrywise discrepancy, whereas open-loop prediction is governed by
spectral alignment and repeated multiplication by the estimated transition matrix. Small errors in unstable directions
can compound rapidly and lead to catastrophic trajectory divergence. In this dimension, Ridge and PBML-LTI produce the
most reliable rollout behavior among the non-catastrophic methods: Ridge has the lowest mean rollout error in the
common-case subset, while PBML-LTI has the lowest mean rollout error in the edge-case subset. MAML-LTI selects shorter
support prefixes, but its rollout errors become extremely large, indicating that the learned gradient-based initialization
is not sufficient to control unstable modes in this low-dimensional stress-test setting.

Figure~\ref{fig:traj_explode} illustrates this phenomenon qualitatively. Methods whose transition estimates appear
competitive in Frobenius norm can nevertheless induce dramatically different open-loop trajectories because small spectral
misalignments are repeatedly amplified. This effect is especially visible for structurally biased estimators such as
Pooled-prior Ridge and Shared Subspace, and it also affects MAML-LTI in the low-dimensional unstable setting.

Overall, the unstable-system experiments show that PBML-LTI is strongest in the high-dimensional few-shot regime, where it
achieves both accurate transition recovery and stable rollout prediction. In lower-dimensional unstable settings, the
metrics become more decoupled: some baselines obtain smaller Frobenius error or shorter support usage, but this does not
necessarily translate into reliable rollout behavior. Therefore, the unstable-regime results should be read jointly
through $E_A$, $E_{\mathrm{traj}}$, and $\overline{S}$, rather than through any single metric in isolation. The
fixed-prefix sweeps in Appendix~\ref{app:support_sweep_diagnostics} also show that PBML-LTI does not mechanically select
the shortest prefix: in unstable settings, its validation-selected support length increases when additional calibration
data are useful for controlling unstable modes.

\subsection{Real Data Experiment}
\subsubsection{Functional Magnetic Resonance Imaging (fMRI) Dataset}

We construct a real-data LTI benchmark from the OpenNeuro dataset ds000244 \citep{ds000244}. Raw fMRI volumes are
converted into multivariate Region-of-Interest (ROI) time series by averaging voxel signals within atlas-defined parcels
using the Schaefer parcellation \citep{schaefer2018local}. The resulting ROI signals are further preprocessed with
standardization, motion-confound regression when available, and per-run mean centering.

To obtain many \emph{related} identification tasks from each run, we segment each ROI time series into overlapping windows
of length $L$ with stride $s$. Each window is treated as one LTI task instance $m$ with states
$\{x_{m,t}\}_{t=0}^{L-1}$, and we form the one-step regression matrices
\begin{equation}
X_m := [x_{m,0},\dots,x_{m,L-2}] \in \R^{d\times (L-1)},
\qquad
Y_m := [x_{m,1},\dots,x_{m,L-1}] \in \R^{d\times (L-1)}.
\end{equation}
These $(X_m,Y_m)$ pairs are saved in the same task format as in the synthetic experiments. The resulting windows are
treated as \emph{distinct but related} task instances rather than arbitrary unrelated samples: they are extracted from the
same pool of subjects, sessions, and experimental conditions, and overlapping windows from the same run can be viewed as
nearby local dynamical regimes within a shared subject- and condition-specific environment. This is precisely the type of
cross-task structure that PBML-LTI is designed to exploit.

Unlike synthetic data, fMRI does not provide a physical ground-truth transition matrix. We therefore define a
ridge-based \emph{reference} transition matrix $A_m^{\mathrm{ref}}$ for each window by fitting
\begin{equation}
A_m^{\mathrm{ref}}
~:=~
\arg\min_{A\in\R^{d\times d}}
\ \big\|Y_m - A X_m\big\|_F^2
~+~
\lambda_{\mathrm{ref}}\|A\|_F^2,
\label{eq:fmri_aref_obj}
\end{equation}
where $\|\cdot\|_F$ is the Frobenius norm and $\lambda_{\mathrm{ref}}=0.0001$ is a fixed ridge weight used only for
constructing the evaluation reference. For consistency, the ridge regularization parameter $\lambda_{\mathrm{reg}}$ in
\eqref{ridge_regular} and \eqref{ridge_regular2} is set equal to $\lambda_{\mathrm{ref}}$. This optimization has the
closed-form solution
\begin{equation}
A_m^{\mathrm{ref}}
~=~
Y_m X_m^\top\Big(X_m X_m^\top + \lambda_{\mathrm{ref}} I_d\Big)^{-1}.
\label{eq:fmri_aref_closedform}
\end{equation}
Importantly, all methods are trained and adapted from trajectories $(X_m,Y_m)$; the matrices
$A_m^{\mathrm{ref}}$ are used only as a consistent evaluation reference for $E_A$.

In our experiments, we use a predefined train--test split constructed at the window level. Each task corresponds to a
short temporal window extracted from a single fMRI run, and windows from different subjects, sessions, and experimental
conditions are distributed across the training and test sets according to this split. The original BIDS (Brain Imaging Data Structure) task labels are
retained only as metadata for grouping and reporting and are not used by the learning algorithms. Here, BIDS
 task labels identify the experimental paradigm associated with an fMRI run, for
example motor, social, language, or emotional conditions; they are descriptors of the data source rather than supervision
targets for learning.

In total, the training set contains 141 task instances spanning 17 distinct task labels, while the test set contains
14 task instances spanning 8 task labels. Each task instance has state dimensionality $d=200$ and trajectory length
$T=100$. \footnote{More technical details of the dataset processing are provided in
Appendix~\ref{sec:fmri_dataset}.}

\subsubsection{Experiment Result}

Table~\ref{tab:fmri_results} summarizes identification and prediction performance on the fMRI benchmark, reporting
mean $\pm$ standard deviation across window-level test tasks. Overall, PBML-LTI achieves the lowest average
transition-matrix error $E_A$, the lowest open-loop rollout error $E_{\mathrm{traj}}$, and the shortest average selected
support prefix length $\overline{S}$ among all compared methods.

The improvement in transition-matrix error is modest but consistent. PBML-LTI obtains the lowest $E_A$
($124.734 \pm 268.471$), improving over Pooled-prior Ridge ($129.643 \pm 263.244$), OLS
($141.766 \pm 259.513$), Ridge ($142.493 \pm 259.174$), Shared Subspace ($147.876 \pm 256.798$), and the
MAML-style LTI baseline ($161.45 \pm 259.30$). This suggests that the learned matrix-normal prior provides a useful
inductive bias even in the real-data setting, which is also further corroborated by the transition matrix prediction visualization in Figure~\ref{fig:fmri_transition}.
\begin{table}[h]

\caption{Performance comparison on the fMRI dataset. Results are reported as mean $\pm$ standard deviation across test
tasks. Lower is better; boldface marks the lowest sample mean. The row $\overline{S}$ reports the average selected
support prefix length. Paired significance tests comparing PBML-LTI against OLS, Ridge, Pooled-prior Ridge,
Shared Subspace, and MAML-LTI are reported in Appendix~\ref{app:fmri_significance} to qualify the statistical
significance of the observed differences.}
\label{tab:fmri_results}
\begin{center}
\resizebox{\linewidth}{!}{
\begin{tabular}{c|cccccc}
\toprule
Metric & OLS & Ridge & Pooled-prior Ridge & Shared Subspace & MAML-LTI & PBML-LTI \\
\midrule
$E_A$
& $141.766 \pm 259.513$
& $142.493 \pm 259.174$
& $129.643 \pm 263.244$
& $147.876 \pm 256.798$
& $161.45 \pm 259.30$
& $\mathbf{124.734 \pm 268.471}$ \\
$E_{\mathrm{traj}}$
& $134.126 \pm 221.939$
& $134.078 \pm 221.941$
& $125.892 \pm 209.232$
& $679.799 \pm 968.563$
& $1704.56 \pm 3391.01$
& $\mathbf{123.663 \pm 206.714}$ \\
$\overline{S}$
& $59.42$
& $61.21$
& $60.85$
& $24.00$
& $23.07$
& $\mathbf{21.21}$ \\
\bottomrule
\end{tabular}}
\end{center}
\end{table}
The rollout results show a clearer separation. PBML-LTI achieves the lowest average trajectory error
($123.663 \pm 206.714$), slightly improving over the single-task and pooled estimators, while Shared Subspace and
MAML-LTI exhibit much larger rollout errors. In particular, MAML-LTI attains a rollout error of
$1704.56 \pm 3391.01$, indicating that gradient-based adaptation from a learned initialization can be brittle in this
high-dimensional fMRI setting. This is consistent with the synthetic experiments: open-loop rollout performance is
sensitive not only to average Frobenius error, but also to spectral alignment and stability of the learned transition
operator.

The support-length results further highlight adaptive data efficiency. PBML-LTI uses the shortest average support prefix
($\overline{S}=21.21$), slightly shorter than MAML-LTI ($23.07$) and Shared Subspace ($24.00$), and substantially shorter
than OLS, Ridge, and Pooled-prior Ridge, which use roughly sixty support steps on average. Thus, PBML-LTI achieves the best
average errors while also requiring the least support data under the adaptive-support protocol. By contrast, MAML-LTI and
Shared Subspace also use relatively short prefixes, but their shorter support usage does not translate into comparable
rollout accuracy.

The large standard deviations across methods reflect substantial heterogeneity across fMRI windows and subjects. Since
these quantities are squared-error metrics computed on a heterogeneous real-data benchmark, standard deviations can be
larger than the corresponding means. This variability is expected and reinforces the importance of methods that remain
stable when adapting from limited support data.

\begin{figure}[h]
\begin{center}
\includegraphics[scale=1.5]{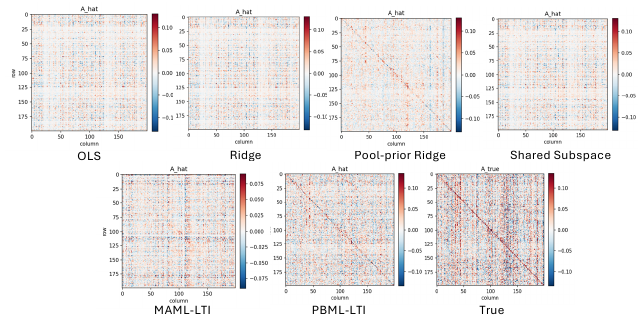}
\end{center}
\caption{Estimated and reference transition matrices for a representative task from the fMRI dataset. Since fMRI does not
provide a physical ground-truth transition matrix, the comparison is made against the ridge-based full-window reference
matrix $A_m^{\mathrm{ref}}$. For a more detailed analysis, including an extended set of heatmaps and the corresponding
open-loop rollout trajectories, please refer to Figure~\ref{fig:fmri_rollout} in the Appendix.}
\label{fig:fmri_transition}
\end{figure}

As in the synthetic experiments, the reported $\overline S$ values arise from validation-selected adaptive support under a
common maximum support budget. PBML-LTI achieves the best average errors while using the shortest average effective
support length. Shared Subspace and MAML-LTI also use relatively short prefixes, but their shorter support usage does not
translate into comparable rollout accuracy, suggesting that their adaptation mechanisms can be mismatched to heterogeneous
fMRI window dynamics. The large standard deviations across all methods indicate substantial variability across windows,
which is expected in this real-data setting and underscores the importance of methods that remain robust when adapting
from limited support data.

\subsection{Prior-Learning and Regularization Ablation}
\label{subsec:prior_learning_ablation}

We next ablate two components of PBML-LTI: the prior-learning mechanism and the auxiliary regularization terms. These
variants are diagnostic modifications of the proposed method rather than external baselines.

For prior learning, we consider two variants. The fixed-prior PBML-LTI variant uses the same matrix-normal conjugate
adaptation rule as PBML-LTI, but replaces the learned prior parameters with a fixed generic prior. This isolates the
benefit of learning a transferable prior from related training tasks. The type-II maximum-likelihood / empirical-Bayes
variant learns the prior parameters by maximizing a marginal-likelihood criterion rather than by optimizing the
PAC-Bayes-motivated fit--KL surrogate. This tests whether the proposed fit--KL objective provides advantages beyond
standard empirical-Bayes prior learning.
\begin{table}[h]

\caption{Combined prior-learning and regularization ablation across stable/unstable regimes and dimensions in the
common-case setting. The first two variants ablate prior learning, while the final three variants ablate one
regularization component at a time relative to full PBML-LTI. Lower is better; for each metric within each
regime--dimension block, boldface marks the lowest sample mean among reported values. Paired significance tests for these
ablation comparisons are reported in Appendix~\ref{app:paired_prior_learning_tests}.}
\label{tab:prior_learning_ablation}
\scriptsize
\begin{center}
\begin{tabular}{c|c|c|ccc}
\toprule
Regime & Dimension & Variant
& $E_A$ & $E_{\mathrm{traj}}$ & $\overline S$ \\
\midrule

\multirow{18}{*}{Stable}
& \multirow{6}{*}{$d=50$}
& Fixed-prior PBML-LTI
& $24.779 \pm 0.479$
& $0.0839 \pm 0.0227$
& $14.00$ \\
& & Type-II ML / empirical Bayes
& $1.2236 \pm 0.0313$
& $0.0568 \pm 0.0113$
& $10.80$ \\
& & PBML-LTI
& $\mathbf{0.2068 \pm 0.0052}$
& $\mathbf{0.0257 \pm 0.0018}$
& $\mathbf{1.35}$ \\
& & No prior-mean shrinkage
& $0.2155 \pm 0.0056$
& $0.0268 \pm 0.0020$
& $1.42$ \\
& & No covariance conditioning
& $0.2115 \pm 0.0054$
& $0.0263 \pm 0.0019$
& $1.39$ \\
& & No stability regularizer
& $0.2088 \pm 0.0053$
& $0.0261 \pm 0.0019$
& $1.38$ \\
\cline{2-6}

& \multirow{6}{*}{$d=25$}
& Fixed-prior PBML-LTI
& $9.1298 \pm 0.4271$
& $0.0325 \pm 0.0098$
& $14.00$ \\
& & Type-II ML / empirical Bayes
& $0.6165 \pm 0.0302$
& $0.0261 \pm 0.0072$
& $12.10$ \\
& & PBML-LTI
& $\mathbf{0.0394 \pm 0.0020}$
& $\mathbf{0.0121 \pm 0.0013}$
& $\mathbf{1.20}$ \\
& & No prior-mean shrinkage
& $0.0425 \pm 0.0022$
& $0.0129 \pm 0.0014$
& $1.27$ \\
& & No covariance conditioning
& $0.0412 \pm 0.0021$
& $0.0125 \pm 0.0014$
& $1.24$ \\
& & No stability regularizer
& $0.0405 \pm 0.0021$
& $0.0124 \pm 0.0014$
& $1.23$ \\
\cline{2-6}

& \multirow{6}{*}{$d=10$}
& Fixed-prior PBML-LTI
& $1.2189 \pm 0.4011$
& $0.0147 \pm 0.0112$
& $12.55$ \\
& & Type-II ML / empirical Bayes
& $0.0726 \pm 0.0144$
& $0.0121 \pm 0.0056$
& $12.15$ \\
& & PBML-LTI
& $\mathbf{0.0053 \pm 0.0007}$
& $\mathbf{0.0055 \pm 0.0009}$
& $\mathbf{1.05}$ \\
& & No prior-mean shrinkage
& $0.0059 \pm 0.0008$
& $0.0059 \pm 0.0010$
& $1.10$ \\
& & No covariance conditioning
& $0.0057 \pm 0.0008$
& $0.0057 \pm 0.0009$
& $1.08$ \\
& & No stability regularizer
& $0.0056 \pm 0.0008$
& $0.0057 \pm 0.0009$
& $1.07$ \\

\midrule

\multirow{18}{*}{Unstable}
& \multirow{6}{*}{$d=50$}
& Fixed-prior PBML-LTI
& $16.059 \pm 0.312$
& $0.0401 \pm 0.0041$
& $14.00$ \\
& & Type-II ML / empirical Bayes
& $0.4634 \pm 0.0105$
& $\mathbf{0.0367 \pm 0.0035}$
& $4.60$ \\
& & PBML-LTI
& $\mathbf{0.156 \pm 0.005}$
& $0.0370 \pm 0.0030$
& $\mathbf{4.15}$ \\
& & No prior-mean shrinkage
& $0.171 \pm 0.006$
& $0.044 \pm 0.004$
& $4.70$ \\
& & No covariance conditioning
& $0.164 \pm 0.005$
& $0.0395 \pm 0.0031$
& $4.35$ \\
& & No stability regularizer
& $0.160 \pm 0.005$
& $0.082 \pm 0.009$
& $7.25$ \\
\cline{2-6}

& \multirow{6}{*}{$d=25$}
& Fixed-prior PBML-LTI
& $10.311 \pm 0.501$
& $0.0793 \pm 0.0325$
& $14.00$ \\
& & Type-II ML / empirical Bayes
& $0.4790 \pm 0.0649$
& $\mathbf{0.0520 \pm 0.0176}$
& $14.00$ \\
& & PBML-LTI
& $\mathbf{0.029 \pm 0.001}$
& $0.080 \pm 0.052$
& $\mathbf{6.05}$ \\
& & No prior-mean shrinkage
& $0.035 \pm 0.002$
& $0.105 \pm 0.048$
& $7.10$ \\
& & No covariance conditioning
& $0.0315 \pm 0.0015$
& $0.078 \pm 0.035$
& $6.45$ \\
& & No stability regularizer
& $0.0305 \pm 0.0015$
& $0.195 \pm 0.085$
& $10.25$ \\
\cline{2-6}

& \multirow{6}{*}{$d=10$}
& Fixed-prior PBML-LTI
& $1.6256 \pm 0.9888$
& $49.343 \pm 18.168$
& $\mathbf{13.95}$ \\
& & Type-II ML / empirical Bayes
& $1.0909 \pm 0.3332$
& $\mathbf{43.915 \pm 48.789}$
& $14.00$ \\
& & PBML-LTI
& $\mathbf{0.139 \pm 0.046}$
& $57.919 \pm 144.808$
& $14.35$ \\
& & No prior-mean shrinkage
& $0.155 \pm 0.049$
& $88.5 \pm 125$
& $15.60$ \\
& & No covariance conditioning
& $0.147 \pm 0.048$
& $61.2 \pm 92$
& $14.55$ \\
& & No stability regularizer
& $0.145 \pm 0.047$
& $425 \pm 980$
& $18.20$ \\
\bottomrule

\end{tabular}
\end{center}
\end{table}
For regularization, we remove each auxiliary penalty one at a time relative to full PBML-LTI. The ``No prior-mean
shrinkage'' variant removes the shrinkage penalty on the shared prior mean $W$. The ``No covariance conditioning''
variant removes the conditioning penalty on the prior covariance $V$. The ``No stability regularizer'' variant removes
the spectral stability penalty $\mathcal R_{\mathrm{stab}}$. These regularizers are not part of the formal PAC-Bayes
theorem and are not claimed to enforce Assumption~\ref{as1} directly. Instead, they act as practical optimization and
prior-shaping devices. In particular, the stability regularizer acts on the learned shared prior mean, whereas
Assumption~\ref{as1} is a data-generating condition on the true task matrices $A_m$. Thus, the stability regularizer can
encourage posterior adaptation toward dynamically well-behaved regions, but it does not guarantee that every posterior
mean or every true task matrix satisfies the controlled-growth condition.

Table~\ref{tab:prior_learning_ablation} reports the combined ablation study across stable and unstable regimes in the
common-case setting. The results show that PBML-LTI substantially improves transition-matrix recovery relative to the
fixed-prior and type-II empirical-Bayes variants across the reported settings. This supports the importance of both
learning a transferable prior and using the PAC-Bayes-motivated fit--KL surrogate.

The regularization ablations show a different pattern. In stable regimes, removing any single regularizer changes the
metrics only mildly, suggesting that PBML-LTI's stable-regime gains are not driven primarily by these auxiliary penalties.
In unstable regimes, however, the regularizers become more important for robust rollout behavior. Removing the stability
regularizer has only a modest effect on $E_A$, but it can substantially increase $E_{\mathrm{traj}}$ and the selected
support length. This supports our interpretation that the stability penalty is mainly a spectral-stability and
optimization device, rather than a direct mechanism for reducing Frobenius transition-matrix error.

\section{Discussion}

\subsection{Sensitivity to the Training Temperature \texorpdfstring{$\lambda_{\mathrm{tr}}$}{lambda-tr}}
\label{subsec:lambda_sensitivity}

The PAC-Bayes parameter $\lambda_{\mathrm{PB}}$ in the theoretical bound is restricted to
$\lambda_{\mathrm{PB}}\in(0,1]$. Separately, in implementation one can introduce a training temperature
$\lambda_{\mathrm{tr}}>0$ that rescales the fit--KL tradeoff used during meta-training. This implementation-level
temperature should not be interpreted as extending the formal PAC-Bayes guarantee beyond $\lambda_{\mathrm{PB}}\in(0,1]$.
Instead, values $\lambda_{\mathrm{tr}}>1$ are useful as empirical stress tests of more data-driven adaptation, while
values $\lambda_{\mathrm{tr}}<1$ correspond to stronger shrinkage toward the learned prior.

We use $\lambda_{\mathrm{tr}}=1$ in the main experiments because it gives the canonical fit--KL objective and preserves
the standard conjugate Bayesian interpretation. To assess robustness, we evaluate PBML-LTI over
\[
\lambda_{\mathrm{tr}}\in\{10,\;5,\;1,\;0.5,\;0.1\}
\]
on the stable synthetic benchmark with state dimension $d=50$ in the common-case setting
($\rho_0=0.95$). Table~\ref{tab:lambda_sensitivity} reports the resulting transition-matrix error $E_A$, rollout error
$E_{\mathrm{traj}}$, and average selected support length $\overline{S}$.

\begin{table}[h]

\caption{Sensitivity of PBML-LTI to the implementation-level training temperature $\lambda_{\mathrm{tr}}$ on the stable
synthetic benchmark with $d=50$ in the common-case setting ($\rho_0=0.95$). Values
$\lambda_{\mathrm{tr}}>1$ are empirical stress tests and are not part of the formal PAC-Bayes guarantee, which is stated
for $\lambda_{\mathrm{PB}}\in(0,1]$. Lower is better; best values are in \textbf{bold}.}
\label{tab:lambda_sensitivity}
\begin{center}
\begin{tabular}{lccccc}
\toprule
 & $\lambda_{\mathrm{tr}}=10$ & $\lambda_{\mathrm{tr}}=5$ & $\lambda_{\mathrm{tr}}=1$ & $\lambda_{\mathrm{tr}}=0.5$ & $\lambda_{\mathrm{tr}}=0.1$ \\
\midrule
$E_A$
& $0.4612 \pm 0.0091$
& $0.4424 \pm 0.0090$
& $\mathbf{0.2068 \pm 0.0052}$
& $0.2845 \pm 0.0062$
& $0.3046 \pm 0.0063$ \\
$E_{\mathrm{traj}}$
& $0.0259 \pm 0.0019$
& $0.0258 \pm 0.0018$
& $\mathbf{0.0257 \pm 0.0018}$
& $0.0258 \pm 0.0018$
& $0.0258 \pm 0.0018$ \\
$\overline{S}$
& $3.80$
& $3.55$
& $\mathbf{1.35}$
& $2.05$
& $3.05$ \\
\bottomrule
\end{tabular}
\end{center}
\end{table}

The rollout metric $E_{\mathrm{traj}}$ is stable across a broad range of training temperatures, whereas
transition-matrix recovery and adaptive support efficiency are more sensitive. Among the tested values,
$\lambda_{\mathrm{tr}}=1$ gives the best overall trade-off, achieving the lowest $E_A$, essentially the best
$E_{\mathrm{traj}}$, and the shortest average support length $\overline{S}$. This supports our default
implementation choice while keeping the formal PAC-Bayes guarantee restricted to $\lambda_{\mathrm{PB}}\in(0,1]$.

\subsection{Prior Quality and Sensitivity to the Number of Training Tasks}
\label{subsec:prior_quality_sensitivity}

\begin{figure}[h]
\begin{center}
\includegraphics[scale=0.8]{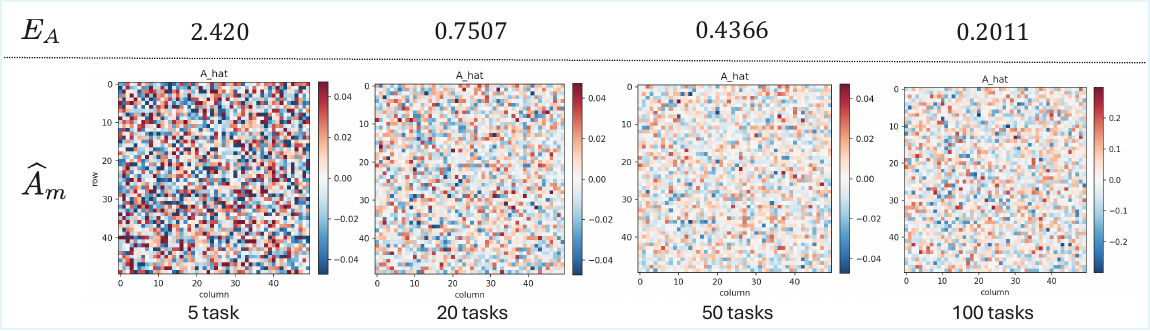}
\end{center}
\caption{Sensitivity of the learned prior to the number of meta-training tasks. The figure visualizes the estimated
transition matrices and the corresponding estimation errors for varying numbers of available training tasks. All tasks are
sampled from stable systems with spectral radius $\rho_0 = 0.95$ and dimension $d=50$.}
\label{fig:task_sensitivity}
\end{figure}

In PBML-LTI, task-level uncertainty is quantified by the posterior column covariance $V_m$ in
\eqref{eq:posterior_cov}. The fidelity of this uncertainty estimate therefore depends on how accurately the meta-training
stage recovers the shared parameters $(W,V,\sigma^2)$. At the same time, the PAC-Bayes analysis in
Section~\ref{theoretical_analysis} should be interpreted as conditional on a learned prior: it clarifies how prior
quality affects predictive adaptation through the empirical-fit/KL tradeoff, but it does not provide a second-stage bound
on how the quality of the learned prior improves as the number of meta-training tasks grows. We therefore examine that
question empirically in Figure~\ref{fig:task_sensitivity}.

As shown in Figure~\ref{fig:task_sensitivity}, when only five training tasks are available, the resulting estimates
$\widehat{A}_m$ are visibly more erratic, suggesting that the learned prior does not yet provide sufficiently reliable
shrinkage or uncertainty information. As the number of training tasks increases, the learned prior covariance $V$
becomes a more stable estimate of cross-task variability, and the induced posteriors produce transition estimates that
are progressively closer to the ground-truth dynamics. Quantitatively, increasing the number of training tasks from five
to one hundred reduces the transition-matrix estimation error by approximately $90\%$. This sensitivity study therefore
supports the interpretation that richer meta-level data improves prior quality and, in turn, improves downstream
few-shot adaptation.

\subsection{Empirical Magnitude of the PAC-Bayes-Derived Bounds}

To complement the theoretical analysis in Section~\ref{theoretical_analysis}, we report empirical magnitudes of the
PAC-Bayes-derived matrix and trajectory bounds on synthetic benchmarks. In addition to the stable and strongly unstable
settings, we include a marginally unstable regime, in which the spectral-radius parameter is set to $\rho_0=1+\epsilon$
with $\epsilon\in(0,1)$, to examine whether the derived bounds vary continuously as the dynamics cross the nominal
stability threshold $\rho=1$. This setting places the task dynamics just beyond the stability boundary and emulates the
near-critical operating conditions---slowly drifting or near-integrator modes---that are common in practical LTI
applications. The diagnostic is useful because the theory depends on finite-horizon growth and posterior predictive fit,
rather than on a discontinuous distinction between $\rho<1$ and $\rho>1$.

Table~\ref{tab:reply_pacbayes_extensions} shows a clear ordering across the three regimes. For every dimension and both
evaluation subsets, the derived bounds increase from stable to marginally unstable to strongly unstable. In the stable
setting, both $A_{\mathrm{bound}}$ and $\mathrm{Traj}_{\mathrm{bound}}$ are moderate in magnitude and nearly unchanged
between common-case and edge-case subsets. They also decrease with dimension, consistent with the fact that
lower-dimensional stable systems are easier to identify from the same support budget.

The marginally unstable regime provides a continuity check around the stability boundary. Its bounds are only mildly
larger than those in the stable regime: the matrix bound increases by less than $10\%$, while the trajectory bound
increases by roughly $20$--$25\%$ across dimensions. The qualitative behavior remains the same as in the stable regime:
the bounds remain moderate, decrease as the dimension decreases, and are similar between common-case and edge-case
subsets. Thus, crossing the nominal threshold $\rho=1$ does not by itself make the PAC-Bayes-derived quantities
uninformative. Instead, the degradation is gradual and governed by the finite-horizon growth of the learned dynamics.
This is precisely the continuity predicted by the support-to-query analysis of Section~\ref{theoretical_analysis}: the
alignment lower bound of Remark~\ref{rem:unstable_failure_mode} scales as $\rho^{2K_m}$ and the rollout factor
$H_{m,K_m}$ is continuous in the spectral radius of the learned dynamics, so for $\rho_0=1+\epsilon$, $\epsilon\in(0,1)$ at the
experimental horizons both remain controlled, exploding only under strong instability.\footnote{Additional marginal unstable system experiment results are included in Appendix~\ref{app:marginal_results}.}

\begin{table}[h]

\caption{Empirical values of the PAC-Bayes-derived matrix and trajectory bounds for PBML-LTI across the three synthetic regimes: stable ($\rho_0=0.95$), marginally unstable ($\rho_0=1+\epsilon$, $\epsilon\in(0,1)$), and strongly unstable ($\rho_0=4.95$).}
\label{tab:reply_pacbayes_extensions}
\begin{center}
\begin{tabular}{ccccc}
\toprule
Regime & Dimension & Setting & $A_{\mathrm{bound}}$ & $\mathrm{Traj}_{\mathrm{bound}}$ \\
\midrule
\multirow{6}{*}{Stable}
& \multirow{2}{*}{$d=50$} & Common-case & $0.410 \pm 0.059$ & $17.744 \pm 2.660$ \\
&  & Edge-case   & $0.418 \pm 0.063$ & $18.133 \pm 2.781$ \\
& \multirow{2}{*}{$d=25$} & Common-case & $0.172 \pm 0.028$ & $9.473 \pm 1.547$ \\
&  & Edge-case   & $0.178 \pm 0.019$ & $9.757 \pm 1.051$ \\
& \multirow{2}{*}{$d=10$} & Common-case & $0.066 \pm 0.008$ & $4.550 \pm 0.622$ \\
&  & Edge-case   & $0.066 \pm 0.011$ & $4.553 \pm 0.797$ \\
\midrule
\multirow{6}{*}{Marginally unstable}
& \multirow{2}{*}{$d=50$} & Common-case & $0.447 \pm 0.073$ & $21.128 \pm 3.450$ \\
&  & Edge-case   & $0.442 \pm 0.072$ & $20.906 \pm 3.446$ \\
& \multirow{2}{*}{$d=25$} & Common-case & $0.184 \pm 0.027$ & $11.719 \pm 1.671$ \\
&  & Edge-case   & $0.187 \pm 0.039$ & $11.880 \pm 2.504$ \\
& \multirow{2}{*}{$d=10$} & Common-case & $0.068 \pm 0.013$ & $5.674 \pm 1.141$ \\
&  & Edge-case   & $0.068 \pm 0.011$ & $5.704 \pm 0.975$ \\
\midrule
\multirow{6}{*}{Unstable}
& \multirow{2}{*}{$d=50$} & Common-case & $1.077 \pm 0.143$ & $528.807 \pm 70.972$ \\
&  & Edge-case   & $1.100 \pm 0.124$ & $539.841 \pm 60.204$ \\
& \multirow{2}{*}{$d=25$} & Common-case & $0.748 \pm 0.175$ & $1278.494 \pm 302.805$ \\
&  & Edge-case   & $0.770 \pm 0.135$ & $1306.887 \pm 227.728$ \\
& \multirow{2}{*}{$d=10$} & Common-case & $1.277 \pm 0.143$ & $3717.382 \pm 618.144$ \\
&  & Edge-case   & $1.371 \pm 0.344$ & $3832.271 \pm 628.807$ \\
\bottomrule
\end{tabular}
\end{center}
\end{table}

In contrast, the strongly unstable regime produces much looser bounds, especially for trajectory rollout. This reinforces
the interpretation of the unstable experiments as finite-horizon stress tests. The PAC-Bayes predictive-risk statement may
remain formally valid under the stated MGF condition, but the rollout consequence can become loose or effectively vacuous
because it contains the finite-horizon growth factor
\[
H_{m,K_m}
=
\sum_{r=0}^{K_m-1}\|\widehat A_m\|_2^r .
\]
When the learned dynamics are strongly unstable, the powers $\|\widehat A_m\|_2^r$ can grow rapidly, so even a finite
predictive bound can be substantially amplified under rollout. Consistent with this mechanism, the trajectory bound is
largest in the strongly unstable low-dimensional setting, where the learned dynamics exhibit the strongest finite-horizon
amplification. By contrast, the corresponding matrix bounds remain much closer in magnitude across dimensions, indicating
that the main source of looseness is rollout amplification rather than the one-step matrix-error consequence alone.

Overall, the table supports two conclusions. First, the PAC-Bayes-derived bounds behave continuously around the stability
threshold: the marginally unstable regime is a mild degradation of the stable regime rather than a qualitative breakdown.
Second, strong instability can make the trajectory bound loose because finite-horizon spectral amplification magnifies the
predictive-risk bound. Across all three regimes, the differences between common-case and edge-case subsets are small
relative to the differences between stability regimes, suggesting that the degree of instability is the dominant factor
controlling the practical tightness of the derived bounds in this experiment.

\section{Conclusion}

We studied few-shot identification of linear time-invariant dynamical systems in a meta-learning setting, where each task
provides only a short, temporally dependent trajectory and reliable adaptation to previously unseen systems is required.
Our proposed PBML-LTI framework learns a transferable prior over task-specific dynamics from multiple related systems and
performs task-level adaptation through closed-form Bayesian inference, yielding both accurate point estimates and
principled uncertainty quantification while remaining computationally efficient.

On the theoretical side, we developed a martingale PAC-Bayes analysis tailored to trajectory data with within-task
temporal dependence. The resulting support--query statement should be interpreted as a support-conditioned predictive-risk
diagnostic: it clarifies the role of the learned prior and posterior complexity, while still depending on the empirical
query fit on the held-out suffix. We further characterized when this empirical query term is expected to be controlled,
through support-to-query transfer conditions involving excitation alignment, posterior concentration, learned-prior
quality, and finite-horizon stability. Thus, the theory provides a PAC-Bayes motivation for the fit--KL surrogate and a
support--query predictive-risk perspective for few-shot adaptation, rather than a direct end-to-end guarantee for the full
experimental pipeline. We also showed how the support--query predictive bound induces problem-specific corollaries for
projected transition-matrix error, full Frobenius recovery under query excitation, and finite-horizon rollout error.

Empirically, PBML-LTI consistently improves data efficiency and robustness in challenging regimes, including
high-dimensional synthetic systems and real fMRI-derived LTI tasks, where linear dynamics only hold approximately. In
particular, the proposed method achieves strong multi-step prediction performance while adapting from limited support
data, highlighting its suitability for few-shot system identification.

Several directions remain for future work. Extending the framework to controlled systems, partial observability, or
nonlinear dynamics would broaden its applicability. Another important direction is to develop an explicit second-stage
meta-generalization analysis that quantifies how the quality of the learned prior improves with the number of
meta-training tasks. Incorporating richer structured priors is also a natural avenue for further research.

\paragraph{Code and Data Availability}
The code repository is available at \url{https://github.com/chenfeng-huang/PBML-LTI-TMLR-2026}. The synthetic dataset generation code is included in the code repository. The fMRI dataset is available at \url{https://openneuro.org/datasets/ds000244/versions/1.0.0}, provided by \cite{ds000244}.

\subsubsection*{Acknowledgments}
The work of GM was partially supported by NSF grants ATD 2319552 and DMS 2348640.

\bibliography{main}
\bibliographystyle{tmlr}

\clearpage
\appendix

\startcontents[appendix]

\printappendixtoc

\section{Technical Appendix}

\subsection{Proof of Theorem~\ref{thm:inst_nll}}
\label{sec:proof1}
\begin{proof}
Under the Gaussian conditional model
\begin{equation}
p_\sigma(x_t\mid x_{t-1},A)=\mathcal{N}(Ax_{t-1},\sigma^2 I_d),
\label{eq:app_gaussian_model}
\end{equation}
the conditional density is given by
\begin{equation}
p_\sigma(x_t\mid x_{t-1},A)
=(2\pi\sigma^2)^{-d/2}\exp\!\left(-\frac{1}{2\sigma^2}\|x_t-Ax_{t-1}\|_2^2\right).
\label{eq:app_gaussian_density}
\end{equation}
Taking the negative of the logarithm yields
\begin{equation}
\ell_t(A)
=\frac{1}{2\sigma^2}\|x_t-Ax_{t-1}\|_2^2+\frac{d}{2}\log(2\pi\sigma^2),
\label{eq:app_inst_nll}
\end{equation}
which proves \eqref{eq:inst_nll}.

Next, fix any $A$ and define
\begin{equation}
d_t(A):=\E[\ell_t(A)\mid \mathcal{F}_{t-1}]-\ell_t(A).
\label{eq:app_dt_def}
\end{equation}
By assumption, $\ell_t(A)$ is $\mathcal{F}_t$-measurable and integrable. Therefore,
\begin{equation}
\E[d_t(A)\mid \mathcal{F}_{t-1}]
=0,
\label{eq:app_dt_mds}
\end{equation}
which shows that $\{d_t(A)\}_{t=1}^T$ is a martingale difference sequence.
\end{proof}
 
\subsection{Proof of Theorem~\ref{thm:exp_supermg_psi}}
\label{sec:proof2}
\begin{proof}
Fix $A$ and $\lambda_{\mathrm{PB}}\in(0,1]$. Define
\begin{equation}
S_t(A):=\sum_{i=1}^t d_i(A), \qquad S_0(A)=0,
\label{eq:app_St_def}
\end{equation}
and
\begin{equation}
Z_t(A;\lambda_{\mathrm{PB}})
:=
\exp\!\left(
\lambda_{\mathrm{PB}} S_t(A)
-
\sum_{i=1}^t \psi_i(\lambda_{\mathrm{PB}})
\right),
\qquad
Z_0(A;\lambda_{\mathrm{PB}})=1.
\label{eq:app_Zt_def}
\end{equation}
Since $S_t(A)=S_{t-1}(A)+d_t(A)$,
\begin{equation}
Z_t(A;\lambda_{\mathrm{PB}})
=
Z_{t-1}(A;\lambda_{\mathrm{PB}})
\exp\!\left(
\lambda_{\mathrm{PB}}d_t(A)-\psi_t(\lambda_{\mathrm{PB}})
\right).
\label{eq:app_Zt_recursion}
\end{equation}
Taking conditional expectation and using the MGF condition gives
\begin{equation}
\E[Z_t(A;\lambda_{\mathrm{PB}})\mid \mathcal{F}_{t-1}]
\le
Z_{t-1}(A;\lambda_{\mathrm{PB}}),
\label{eq:app_supermg}
\end{equation}
so $\{Z_t(A;\lambda_{\mathrm{PB}})\}$ is a nonnegative supermartingale.
\end{proof}

\begin{lemma}[Donsker--Varadhan change of measure]
\label{lem:dv}
Let $P$ and $Q$ be probability measures on the same measurable space. If $Q\ll P$, then for any measurable function $f$
for which the expressions are well defined,
\begin{equation}
\E_{A\sim Q}[f(A)]
\le
\KL(Q\|P)
+
\log \E_{A\sim P}\!\left[e^{f(A)}\right].
\label{eq:dv}
\end{equation}
If $Q\not\ll P$, then $\KL(Q\|P)=+\infty$ and the inequality holds trivially.
\end{lemma}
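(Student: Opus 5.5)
The plan is to derive the inequality from the Gibbs variational identity for KL divergence. I assume $Q\ll P$, since otherwise $\KL(Q\|P)=+\infty$ and there is nothing to prove. I also assume $Z:=\E_{A\sim P}[e^{f(A)}]\in(0,\infty)$. If $Z=+\infty$ the right-hand side is $+\infty$ and the bound is trivial; $Z>0$ holds automatically because $e^{f}>0$. Under these assumptions I define the tilted (Gibbs) measure $G$ by
\[
\frac{dG}{dP}(A)=\frac{e^{f(A)}}{Z}.
\]
This is a probability measure equivalent to $P$, since its density is strictly positive. As a result, $Q\ll P$ implies $Q\ll G$.

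The key step is to compute $\KL(Q\|G)$ through the chain rule for Radon--Nikodym derivatives. Writing $\frac{dQ}{dG}=\frac{dQ}{dP}\cdot\frac{dP}{dG}=\frac{dQ}{dP}\,Z\,e^{-f}$ holds $Q$-almost surely. Taking logarithms and integrating against $Q$ gives
\[
\KL(Q\|G)=\KL(Q\|P)-\E_{A\sim Q}[f(A)]+\log\E_{A\sim P}\!\left[e^{f(A)}\right].
\]
Since $\KL(Q\|G)\ge0$ by Gibbs' inequality, which is Jensen applied to $-\log$, rearranging yields exactly the claimed bound $\E_Q[f]\le\KL(Q\|P)+\log\E_P[e^{f}]$.

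The main obstacle is integrability: splitting $\E_Q\big[\log\frac{dQ}{dP}-f+\log Z\big]$ into separate expectations requires that these terms not produce an indeterminate $\infty-\infty$. The phrase ``for which the expressions are well defined'' in the statement is meant to cover this. Concretely, I would first prove the inequality for $f$ bounded above by replacing $f$ with $f_n:=\min\{f,n\}$. For each $f_n$, the quantity $\E_Q[f_n]$ is well defined in $[-\infty,n]$, and if it equals $-\infty$ there is nothing to show. I would then pass to the limit $n\to\infty$. The left side converges to $\E_Q[f]$ by monotone convergence applied to the positive part, with the negative part unchanged once $n\ge0$. On the right side, $\log\E_P[e^{f_n}]\uparrow\log\E_P[e^{f}]$, also by monotone convergence. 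This truncation argument handles the general case under the standing well-definedness hypothesis.
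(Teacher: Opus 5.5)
Your proof is correct. It rests on the same Jensen inequality as the paper's, but packages it differently and handles integrability more carefully.

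The paper works directly with $r=dQ/dP$. It writes $\E_Q[f]-\KL(Q\|P)=\E_Q[\log(e^{f}/r)]\le\log\E_Q[e^{f}/r]$ and then identifies $\E_Q[e^{f}/r]$ with $\int e^{f}\,dP$. Strictly, $\E_Q[e^{f}/r]=\int_{\{r>0\}}e^{f}\,dP$, so that last step is an inequality unless $P\ll Q$.

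You instead normalize $e^{f}$ into the tilted measure $G$ and invoke $\KL(Q\|G)\ge 0$. Gibbs' inequality is itself Jensen, so the analytic core is identical. What your version adds is the exact identity $\KL(Q\|P)+\log\E_P[e^{f}]-\E_Q[f]=\KL(Q\|G)$, which shows equality holds if and only if $Q=G$. In this paper that is informative: with $f=-\sum_t\ell_t(A)$, $G$ is exactly the conjugate posterior under the Gaussian working likelihood. This explains why the choice $\lambda_{\mathrm{PB}}=1$ ``preserves the usual conjugate Bayesian update'' as the minimizer over $Q$ of the fit--KL term.

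Your truncation $f_n=\min\{f,n\}$ also makes rigorous what the paper leaves to the phrase ``well defined.'' With $Z_n:=\E_P[e^{f_n}]\le e^{n}$, every term of $\log r-f_n+\log Z_n$ has $Q$-integrable negative part (note $\E_Q[(\log r)^-]\le 1/e$). The splitting into $\KL(Q\|P)-\E_Q[f_n]+\log Z_n$ is therefore legitimate. Monotone convergence then gives the bound whenever $\E_Q[f]$ exists at all.

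The paper's route is shorter and needs no auxiliary measure. Yours is more careful and also characterizes when the bound is tight.
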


\begin{proof}
Assume $Q\ll P$ and let $r:=dQ/dP$. Then
\[
\KL(Q\|P)=\E_Q[\log r].
\]
By Jensen's inequality,
\begin{align}
\E_Q[f(A)]-\KL(Q\|P)
&=
\E_Q\!\left[f(A)-\log r(A)\right] \nonumber\\
&=
\E_Q\!\left[
\log\!\left(\frac{e^{f(A)}}{r(A)}\right)
\right] \nonumber\\
&\le
\log \E_Q\!\left[
\frac{e^{f(A)}}{r(A)}
\right]
=
\log \int e^{f(a)}\,dP(a).
\end{align}
Rearranging gives the result.
\end{proof}

\subsection{Proof of Theorem~\ref{thm:mpb_with_psi}}
\label{sec:proof3}
\begin{proof}
Fix $\lambda_{\mathrm{PB}}\in(0,1]$. Define the mixture supermartingale
\begin{equation}
\overline{Z}_t(\lambda_{\mathrm{PB}}) := \E_{A\sim P}[Z_t(A;\lambda_{\mathrm{PB}})].
\label{eq:app_Zbar_def}
\end{equation}
Then
\begin{equation}
\E[\overline{Z}_t(\lambda)\mid\mathcal{F}_{t-1}]
\le \overline{Z}_{t-1}(\lambda),
\label{eq:app_Zbar_supermg}
\end{equation}
and hence
\begin{equation}
\E[\overline{Z}_T(\lambda)]\le 1.
\label{eq:app_Zbar_expectation}
\end{equation}

By Markov's inequality, with probability at least $1-\delta$,
\begin{equation}
\overline{Z}_T(\lambda)\le \frac{1}{\delta}.
\label{eq:app_event_mixture}
\end{equation}
Applying Lemma~\ref{lem:dv} yields
\begin{equation}
\E_{A\sim Q}[S_T(A)]
\le \frac{1}{\lambda}\left(\KL(Q\|P)+\log\frac{1}{\delta}+\sum_{t=1}^T\psi_t(\lambda)\right).
\label{eq:app_ST_bound}
\end{equation}
Substituting
\begin{equation}
S_T(A)=\sum_{t=1}^T\Big(\E[\ell_t(A)\mid\mathcal{F}_{t-1}]-\ell_t(A)\Big)
\label{eq:app_ST_expand}
\end{equation}
and rearranging gives \eqref{eq:mpb_with_psi}.
\end{proof}

\subsection{Proof of Corollary~\ref{cor:subg_excess}}
\label{sec:proof4}
\begin{proof}
Under Assumption~\ref{as4},
\begin{equation}
\sum_{t=1}^T \psi_t(\lambda_{\mathrm{PB}}) \le \frac{\lambda_{\mathrm{PB}}^2 v T}{2}.
\label{eq:app_sum_psi}
\end{equation}
Substituting this into Theorem~\ref{thm:mpb_with_psi} yields
\begin{equation}
L_T(Q)
\le
\widehat{L}_T(Q)
+
\frac{\KL(Q\|P)+\log(1/\delta)}{\lambda_{\mathrm{PB}} T}
+
\frac{\lambda_{\mathrm{PB}} v}{2}.
\label{eq:app_pb_bound}
\end{equation}
This proves \eqref{eq:pb_uniform_v_short}.

Now let
\[
a:=\KL(Q\|P)+\log(1/\delta),
\qquad
g(\lambda_{\mathrm{PB}})
:=
\frac{a}{\lambda_{\mathrm{PB}} T}
+
\frac{\lambda_{\mathrm{PB}} v}{2},
\qquad \lambda_{\mathrm{PB}}\in(0,1].
\]
Differentiating with respect to $\lambda_{\mathrm{PB}}$ gives
\[
g'(\lambda_{\mathrm{PB}})
=
-\frac{a}{\lambda_{\mathrm{PB}}^2 T}
+
\frac{v}{2},
\qquad
g''(\lambda_{\mathrm{PB}})
=
\frac{2a}{\lambda_{\mathrm{PB}}^3 T}
>0.
\]
Thus, $g$ is strictly convex on $(0,\infty)$. Its unconstrained minimizer satisfies
\[
g'(\lambda_{\mathrm{PB}})=0
\quad\Longrightarrow\quad
\lambda_{\mathrm{unc}}=\sqrt{\frac{2a}{vT}}.
\]
Therefore, the minimizer over the constrained interval $(0,1]$ is
\begin{equation}
\lambda^\star
=
\min\!\left\{
1,\;
\sqrt{\frac{2a}{vT}}
\right\}.
\label{eq:app_lambda_star}
\end{equation}

In the regime where $\lambda^\star<1$, we have
\[
\lambda^\star=\sqrt{\frac{2a}{vT}},
\]
and substituting this into \eqref{eq:app_pb_bound} gives
\begin{equation}
L_T(Q)
\le
\widehat{L}_T(Q)
+
\sqrt{\frac{2va}{T}},
\label{eq:app_pb_sqrt}
\end{equation}
which is exactly \eqref{eq:pb_rate_sqrtT_short}.
\end{proof}

\subsection{Proof of Corollary~\ref{cor:support_query_pb}}
\label{sec:proof_support_query_pb}
\begin{proof}
Fix a task $m$ and write
\[
\mathcal G_u := \mathcal F_{m,S_m+u}, \qquad u=0,\dots,K_m.
\]
Condition on the support sigma-field $\mathcal G_0=\mathcal F_{m,S_m}$. Under this conditioning, the posterior
\[
Q_{m,\phi}^{\mathrm{sup}} = p(A_m\mid D_m^{\mathrm{sup}},\phi)
\]
is fixed, and the prior $P_\phi$ is independent of the future query suffix. We apply
Corollary~\ref{cor:subg_excess} to the shifted query sequence.

Define, for $u=1,\dots,K_m$,
\begin{equation}
\widetilde \ell_u(A)
:=
-\log p_\sigma(x_{m,S_m+u}\mid x_{m,S_m+u-1},A).
\label{eq:shifted_query_logloss}
\end{equation}
By Theorem~\ref{thm:inst_nll},
\begin{equation}
\widetilde \ell_u(A)
=
\frac{1}{2\sigma^2}
\|x_{m,S_m+u}-Ax_{m,S_m+u-1}\|_2^2
+
\frac{d}{2}\log(2\pi\sigma^2).
\label{eq:shifted_query_logloss_expanded}
\end{equation}
Since Assumption~\ref{as4} is inherited by the shifted filtration $\{\mathcal G_u\}_{u=0}^{K_m}$,
Corollary~\ref{cor:subg_excess} gives, with conditional probability at least $1-\delta$ given $\mathcal G_0$,
\begin{equation}
\frac{1}{K_m}\sum_{u=1}^{K_m}
\E_{A\sim Q_{m,\phi}^{\mathrm{sup}}}
\!\left[
\E\!\left[\widetilde \ell_u(A)\mid \mathcal G_{u-1}\right]
\right]
\le
\frac{1}{K_m}\sum_{u=1}^{K_m}
\E_{A\sim Q_{m,\phi}^{\mathrm{sup}}}\!\left[\widetilde \ell_u(A)\right]
+
\frac{\KL(Q_{m,\phi}^{\mathrm{sup}}\|P_\phi)+\log(1/\delta)}{\lambda_{\mathrm{PB}} K_m}
+
\frac{\lambda_{\mathrm{PB}} v}{2}.
\label{eq:shifted_query_pb_logloss}
\end{equation}

Substituting \eqref{eq:shifted_query_logloss_expanded} into \eqref{eq:shifted_query_pb_logloss}, the left-hand side becomes
\begin{align}
&\frac{1}{K_m}\sum_{u=1}^{K_m}
\E_{A\sim Q_{m,\phi}^{\mathrm{sup}}}
\!\left[
\E\!\left[\widetilde \ell_u(A)\mid \mathcal G_{u-1}\right]
\right] \nonumber\\
&=
\frac{1}{2\sigma^2}
\frac{1}{K_m}\sum_{u=1}^{K_m}
\E_{A\sim Q_{m,\phi}^{\mathrm{sup}}}
\!\left[
\E\!\left[
\|x_{m,S_m+u}-Ax_{m,S_m+u-1}\|_2^2
\,\middle|\,
\mathcal G_{u-1}
\right]
\right]
+
\frac{d}{2}\log(2\pi\sigma^2) \nonumber\\
&=
\frac{1}{2\sigma^2}\,
R_{m,q}^{\mathrm{pred}}(Q_{m,\phi}^{\mathrm{sup}})
+
\frac{d}{2}\log(2\pi\sigma^2),
\label{eq:lhs_shifted_query_pb}
\end{align}
by definition \eqref{eq:support_query_pred_risk}. Similarly, the right-hand side becomes
\begin{align}
&\frac{1}{K_m}\sum_{u=1}^{K_m}
\E_{A\sim Q_{m,\phi}^{\mathrm{sup}}}\!\left[\widetilde \ell_u(A)\right] \nonumber\\
&=
\frac{1}{2\sigma^2}
\frac{1}{K_m}\sum_{u=1}^{K_m}
\E_{A\sim Q_{m,\phi}^{\mathrm{sup}}}
\!\left[
\|x_{m,S_m+u}-Ax_{m,S_m+u-1}\|_2^2
\right]
+
\frac{d}{2}\log(2\pi\sigma^2) \nonumber\\
&=
\frac{1}{2\sigma^2}
\widehat R_{m,q}^{\mathrm{pred}}(Q_{m,\phi}^{\mathrm{sup}})
+
\frac{d}{2}\log(2\pi\sigma^2),
\label{eq:rhs_shifted_query_pb}
\end{align}
since
\[
\sum_{u=1}^{K_m}
\|x_{m,S_m+u}-Ax_{m,S_m+u-1}\|_2^2
=
\|Y_{m,q}-AX_{m,q}\|_F^2.
\]

Cancelling the common additive term $\frac{d}{2}\log(2\pi\sigma^2)$ from both sides and multiplying by $2\sigma^2$ yields
\[
R_{m,q}^{\mathrm{pred}}(Q_{m,\phi}^{\mathrm{sup}})
\le
\widehat R_{m,q}^{\mathrm{pred}}(Q_{m,\phi}^{\mathrm{sup}})
+
\frac{2\sigma^2}{\lambda_{\mathrm{PB}} K_m}
\Big(
\KL(Q_{m,\phi}^{\mathrm{sup}}\|P_\phi)+\log(1/\delta)
\Big)
+
\sigma^2\lambda_{\mathrm{PB}} v
\]
which is exactly \eqref{eq:support_query_pb}.
\end{proof}

\subsection{Support-to-Query Transfer Proof Details}
\label{sec:proof_support_query_transfer}

\begin{proof}[Proof of Proposition~\ref{prop:support_query_transfer}]
By the closed form \eqref{eq:posterior_sq_loss_closed_form} applied to the query pair,
\[
\widehat R_{m,q}^{\mathrm{pred}}(Q_{m,\phi}^{\mathrm{sup}})
=
\frac{1}{K_m}\|Y_{m,q}-M_mX_{m,q}\|_F^2
+
d\,\mathrm{tr}(V_m\widehat\Sigma_q).
\]
Using $Y_{m,q}=A_mX_{m,q}+\Xi_{m,q}$ and
$\|U+V\|_F^2\le 2\|U\|_F^2+2\|V\|_F^2$, we obtain
\[
\frac{1}{K_m}\|Y_{m,q}-M_mX_{m,q}\|_F^2
\le
2\,\mathrm{tr}\big((A_m-M_m)\widehat\Sigma_q(A_m-M_m)^\top\big)
+
\frac{2}{K_m}\|\Xi_{m,q}\|_F^2 .
\]
If $\widehat\Sigma_q\preceq \gamma_m\widehat\Sigma_{\mathrm{sup}}$, then for any matrix $B$,
\[
\mathrm{tr}(B\widehat\Sigma_qB^\top)
\le
\gamma_m\,\mathrm{tr}(B\widehat\Sigma_{\mathrm{sup}}B^\top),
\]
and also
\[
\mathrm{tr}(V_m\widehat\Sigma_q)
\le
\gamma_m\,\mathrm{tr}(V_m\widehat\Sigma_{\mathrm{sup}}).
\]
Taking $B=A_m-M_m$ gives
\[
\mathrm{tr}\big((A_m-M_m)\widehat\Sigma_q(A_m-M_m)^\top\big)
\le
\gamma_m
\frac{1}{S_m}\|(A_m-M_m)X_m^{\mathrm{sup}}\|_F^2 .
\]
Since
\[
Y_m^{\mathrm{sup}}=A_mX_m^{\mathrm{sup}}+\Xi_m^{\mathrm{sup}},
\]
we have
\[
(A_m-M_m)X_m^{\mathrm{sup}}
=
(Y_m^{\mathrm{sup}}-M_mX_m^{\mathrm{sup}})-\Xi_m^{\mathrm{sup}},
\]
and hence
\[
\frac{1}{S_m}\|(A_m-M_m)X_m^{\mathrm{sup}}\|_F^2
\le
\frac{2}{S_m}\|Y_m^{\mathrm{sup}}-M_mX_m^{\mathrm{sup}}\|_F^2
+
\frac{2}{S_m}\|\Xi_m^{\mathrm{sup}}\|_F^2 .
\]
Combining the preceding inequalities yields
\[
\widehat R_{m,q}^{\mathrm{pred}}(Q_{m,\phi}^{\mathrm{sup}})
\le
\frac{4\gamma_m}{S_m}\|Y_m^{\mathrm{sup}}-M_mX_m^{\mathrm{sup}}\|_F^2
+
\gamma_m d\,\mathrm{tr}(V_m\widehat\Sigma_{\mathrm{sup}})
+
\frac{4\gamma_m}{S_m}\|\Xi_m^{\mathrm{sup}}\|_F^2
+
\frac{2}{K_m}\|\Xi_{m,q}\|_F^2 .
\]
Since
\[
\widehat R_{\mathrm{sup}}^{\mathrm{pred}}
=
\frac{1}{S_m}\|Y_m^{\mathrm{sup}}-M_mX_m^{\mathrm{sup}}\|_F^2
+
d\,\mathrm{tr}(V_m\widehat\Sigma_{\mathrm{sup}}),
\]
we can upper bound the first two terms by
$4\gamma_m\widehat R_{\mathrm{sup}}^{\mathrm{pred}}$, proving
\eqref{eq:support_query_transfer}.

For the posterior-width cap, recall from \eqref{eq:posterior_cov} that
\[
V_m^{-1}
=
V^{-1}
+
\frac{S_m}{\sigma^2}\widehat\Sigma_{\mathrm{sup}}.
\]
Thus $V_m^{-1}\succeq V^{-1}$, which implies $V_m\preceq V$, and
$V_m^{-1}\succeq (S_m/\sigma^2)\widehat\Sigma_{\mathrm{sup}}$. Therefore
\[
\mathrm{tr}(V_m\widehat\Sigma_{\mathrm{sup}})
\le
\mathrm{tr}(V\widehat\Sigma_{\mathrm{sup}})
\]
and
\[
\mathrm{tr}(V_m\widehat\Sigma_{\mathrm{sup}})
\le
\frac{\sigma^2}{S_m}\mathrm{tr}(V_mV_m^{-1})
=
\frac{\sigma^2 d}{S_m}.
\]
Combining this with
$\mathrm{tr}(V_m\widehat\Sigma_q)\le\gamma_m\mathrm{tr}(V_m\widehat\Sigma_{\mathrm{sup}})$ proves
\eqref{eq:posterior_width_transfer_cap}.
\end{proof}

\begin{proof}[Proof of Proposition~\ref{prop:regularized_support_query_transfer}]
Since $\widehat\Sigma_{\mathrm{sup}}+\alpha I_d\succ0$, the definition of $\gamma_{m,\alpha}$ is equivalent to
\[
\widehat\Sigma_q
\preceq
\gamma_{m,\alpha}(\widehat\Sigma_{\mathrm{sup}}+\alpha I_d).
\]
Repeating the proof of Proposition~\ref{prop:support_query_transfer} with
$\widehat\Sigma_{\mathrm{sup}}+\alpha I_d$ in place of $\widehat\Sigma_{\mathrm{sup}}$ gives
\[
\mathrm{tr}\big((A_m-M_m)\widehat\Sigma_q(A_m-M_m)^\top\big)
\le
\gamma_{m,\alpha}
\mathrm{tr}\big((A_m-M_m)\widehat\Sigma_{\mathrm{sup}}(A_m-M_m)^\top\big)
+
\gamma_{m,\alpha}\alpha\|A_m-M_m\|_F^2
\]
and
\[
\mathrm{tr}(V_m\widehat\Sigma_q)
\le
\gamma_{m,\alpha}\mathrm{tr}(V_m\widehat\Sigma_{\mathrm{sup}})
+
\gamma_{m,\alpha}\alpha\mathrm{tr}(V_m).
\]
Combining these inequalities with the same support residual identity as above yields
\[
\widehat R_{m,q}^{\mathrm{pred}}(Q_{m,\phi}^{\mathrm{sup}})
\le
4\gamma_{m,\alpha}\widehat R_{\mathrm{sup}}^{\mathrm{pred}}
+
\frac{4\gamma_{m,\alpha}}{S_m}\|\Xi_m^{\mathrm{sup}}\|_F^2
+
\frac{2}{K_m}\|\Xi_{m,q}\|_F^2
+
2\gamma_{m,\alpha}\alpha\|M_m-A_m\|_F^2
+
\gamma_{m,\alpha}\alpha d\,\mathrm{tr}(V_m).
\]
Since
\[
\E_{A\sim Q_{m,\phi}^{\mathrm{sup}}}\|A-A_m\|_F^2
=
\|M_m-A_m\|_F^2+d\,\mathrm{tr}(V_m),
\]
the final two terms are upper bounded by
\[
2\gamma_{m,\alpha}\alpha
\E_{A\sim Q_{m,\phi}^{\mathrm{sup}}}\|A-A_m\|_F^2,
\]
which proves \eqref{eq:regularized_support_query_transfer}.
\end{proof}

\begin{remark}[Properties of the regularized alignment constant and control of the remainder term]
\label{rem:regularized_remainder_control}
(i) On the event $\{\gamma_m<\infty\}$,
$\widehat\Sigma_q\preceq\gamma_m\widehat\Sigma_{\mathrm{sup}}\preceq\gamma_m(\widehat\Sigma_{\mathrm{sup}}+\alpha I_d)$
gives $\gamma_{m,\alpha}\le\gamma_m$, and
$\widehat\Sigma_q\preceq\lambda_{\max}(\widehat\Sigma_q)I_d\preceq(\lambda_{\max}(\widehat\Sigma_q)/\alpha)(\widehat\Sigma_{\mathrm{sup}}+\alpha I_d)$
gives $\gamma_{m,\alpha}\le\lambda_{\max}(\widehat\Sigma_q)/\alpha$; moreover $\alpha\mapsto\gamma_{m,\alpha}$ is
nonincreasing, since $\alpha\mapsto\widehat\Sigma_{\mathrm{sup}}+\alpha I_d$ is nondecreasing in the semidefinite order.
With the prior-matched choice that replaces $\alpha I_d$ by $(\sigma^2/S_m)V^{-1}$, the regularized Gram equals
$(\sigma^2/S_m)V_m^{-1}$ by \eqref{eq:posterior_cov}, the corresponding alignment diagnostic is
$\widetilde\gamma_m=(S_m/\sigma^2)\lambda_{\max}(V_m^{1/2}\widehat\Sigma_qV_m^{1/2})$, and the posterior-width
contribution obeys the unconditional cap
$d\,\mathrm{tr}(V_m\widehat\Sigma_q)\le\widetilde\gamma_m\,\sigma^2d^2/S_m$, since
$\mathrm{tr}(V_m^{1/2}\widehat\Sigma_qV_m^{1/2})\le d\,\lambda_{\max}(V_m^{1/2}\widehat\Sigma_qV_m^{1/2})$.
(ii) The remainder term in \eqref{eq:regularized_support_query_transfer} decomposes as
$\E_{A\sim Q_{m,\phi}^{\mathrm{sup}}}\|A-A_m\|_F^2=\|M_m-A_m\|_F^2+d\,\mathrm{tr}(V_m)$, an observable posterior-width
part plus a mean-error part. From \eqref{eq:posterior_cov}--\eqref{eq:posterior_mean} and
$Y_m^{\mathrm{sup}}=A_mX_m^{\mathrm{sup}}+\Xi_m^{\mathrm{sup}}$,
\[
M_m-A_m
=
\left[\frac{1}{\sigma^2}\Xi_m^{\mathrm{sup}}(X_m^{\mathrm{sup}})^\top+(W-A_m)V^{-1}\right]V_m,
\]
so that
\[
\|M_m-A_m\|_F^2
\le
\frac{2}{\sigma^4}\big\|\Xi_m^{\mathrm{sup}}(X_m^{\mathrm{sup}})^\top V_m\big\|_F^2
+
2\,\lambda_{\max}(V_m)\,\Delta_m(\phi),
\]
using $V^{-1}V_m^2V^{-1}\preceq\lambda_{\max}(V_m)V^{-1}$ (a consequence of $V_m\preceq V$) for the second term, with
$\Delta_m(\phi)$ the learned-prior quality of \eqref{eq:prior_quality_basic_inequality}. The first term is a
self-normalized martingale quantity controlled by standard self-normalized concentration for dependent regressors
\citep{abbasi2011online}. Thus, in weakly excited directions the remainder is controlled by posterior contraction and
learned-prior quality, as claimed.
\end{remark}

\begin{proof}[Proof sketch of Lemma~\ref{lem:noise_energy_concentration}]
Let
\[
Z_t
:=
\|w_{m,t}\|_2^2
-
\E[\|w_{m,t}\|_2^2\mid \mathcal F_{m,t-1}].
\]
Then $\{Z_t\}$ is a martingale difference sequence. The conditional sub-Gaussian assumption in
Assumption~\ref{as2} implies that $\|w_{m,t}\|_2^2$ is conditionally sub-exponential, with scale controlled by
$\sigma_w^2 d$. Moreover,
\[
\E[\|w_{m,t}\|_2^2\mid \mathcal F_{m,t-1}]
=
\mathrm{tr}\!\left(
\E[w_{m,t}w_{m,t}^\top\mid \mathcal F_{m,t-1}]
\right)
\le
\mathrm{tr}(\Sigma_w).
\]
Applying a Bernstein--Freedman inequality for conditionally sub-exponential martingale differences
\citep{freedman1975tail} gives the stated bound.
\end{proof}

\begin{proof}[Proof of Corollary~\ref{cor:conditional_support_query_certificate}]
Corollary~\ref{cor:support_query_pb} holds with conditional probability at least $1-\delta$ given
$\mathcal F_{m,S_m}$, and hence unconditionally by the tower property. Lemma~\ref{lem:noise_energy_concentration} is
applied once to the support noise energy and once (conditionally on $\mathcal F_{m,S_m}$, hence again unconditionally) to
the query noise energy. On the intersection of these three events, Proposition~\ref{prop:support_query_transfer} bounds
the empirical query term in Corollary~\ref{cor:support_query_pb}. Substituting the two noise-energy concentration bounds
into \eqref{eq:support_query_transfer} yields \eqref{eq:conditional_support_query_certificate}. A union bound gives total
failure probability at most $3\delta$.
\end{proof}

\paragraph{Corollary chain for the matrix and rollout consequences.}
Let $\mathcal C_m^{\mathrm{pred}}(\bar\gamma,\delta)$ denote the right-hand side of
\eqref{eq:conditional_support_query_certificate}. The consequences \eqref{eq:reply_proj_A_bound},
\eqref{eq:reply_full_A_bound}, and \eqref{eq:reply_traj_bound} use the support--query predictive bound only through the
upper bound $\mathcal B_m^{\mathrm{pred}}$ on $R_{m,q}^{\mathrm{pred}}(Q_{m,\phi}^{\mathrm{sup}})$, so on the events of
Corollary~\ref{cor:conditional_support_query_certificate} the same chaining applies with
$\mathcal C_m^{\mathrm{pred}}(\bar\gamma,\delta)$ in place of $\mathcal B_m^{\mathrm{pred}}$. Concretely, on the event
$\widehat\Sigma_q\preceq\bar\gamma\,\widehat\Sigma_{\mathrm{sup}}$, with probability at least $1-3\delta$,
\begin{gather*}
E_{A,\mathrm{proj}}(m)\le \mathcal C_m^{\mathrm{pred}}(\bar\gamma,\delta),
\qquad
E_A(m)\le \frac{\mathcal C_m^{\mathrm{pred}}(\bar\gamma,\delta)}{\kappa_m}
\ \text{under \eqref{eq:reply_excitation_cond}},
\\
\E\!\left[E_{\mathrm{traj}}(m)\mid\mathcal F_{m,S_m}\right]\le K_m^2\,H_{m,K_m}^2\,\mathcal C_m^{\mathrm{pred}}(\bar\gamma,\delta).
\end{gather*}
Thus the projected transition-matrix, full Frobenius, and rollout consequences inherit the conditional support-to-query
form: apart from the alignment level and noise constants, their right-hand sides are determined by the support-adapted
posterior and the support prefix.

\subsection{Functional Magnetic Resonance Imaging (fMRI) Dataset}
\label{sec:fmri_dataset}

We construct a real-data LTI benchmark from the OpenNeuro dataset ds000244 (Individual Brain Charting)
\citep{ds000244}. The dataset is publicly available at
\url{https://openneuro.org/datasets/ds000244/versions/1.0.0}. The raw data follow the Brain Imaging Data Structure
(BIDS) convention, together with metadata and tabular confound files.

\paragraph{Schaefer parcellation.}
To obtain a multivariate state sequence from each fMRI run, we convert voxel-level Blood-Oxygen-Level-Dependent (BOLD)
signals into a $d$-dimensional ROI time series by averaging voxel signals within atlas-defined parcels using the
Schaefer parcellation \citep{schaefer2018local}. Let $y_v(t)$ denote the BOLD signal at voxel $v$ and time index $t$.
For parcel (ROI) $j$ with voxel set $\Omega_j$, we define the parcel-averaged signal
\begin{equation}
x_t[j]
~:=~
\frac{1}{|\Omega_j|}
\sum_{v\in\Omega_j} y_v(t),
\qquad j=1,\dots,d,
\label{eq:roi_average}
\end{equation}
yielding a multivariate state vector $x_t\in\R^d$ at each time $t$. In our experiments we use a Schaefer atlas with
$d=200$ parcels, so each fMRI run becomes a length-$T$ trajectory $\{x_t\}_{t=0}^{T}$ in $\R^{200}$.

\paragraph{Run-level preprocessing.}
For each run-level ROI time series, we apply standard time-series preprocessing to improve comparability across runs and
to reduce nuisance variation. Concretely, we (i) standardize each ROI time series over time, (ii) regress out motion
confounds when a confounds TSV is available, and (iii) mean-center the resulting ROI signals within each run. If we write
the ROI matrix as $Z\in\R^{T\times d}$ with rows $x_t^\top$, and let $C\in\R^{T\times q}$ denote a subspace spanned by
the confound regressors, we apply
\begin{equation}
Z \;\leftarrow\; \big(I - C(C^\top C)^{\dagger}C^\top\big)Z,
\label{eq:confound_regression}
\end{equation}
followed by per-run mean-centering $Z \leftarrow Z - \mathbf{1}\mu^\top$, where
$\mu=\frac{1}{T}\sum_{t=1}^T x_t$.

\paragraph{Window-level LTI task construction.}
To obtain many related identification tasks from each run, we segment each ROI time series into overlapping temporal
windows of fixed length. Each window is treated as a distinct task instance $m$ with states
$\{x_{m,t}\}_{t=0}^{T_m}$, and we form the one-step regression matrices
\begin{equation}
X_m := [x_{m,0},\dots,x_{m,T_m-1}] \in \R^{d\times T_m},
\qquad
Y_m := [x_{m,1},\dots,x_{m,T_m}] \in \R^{d\times T_m}.
\label{eq:fmri_XY}
\end{equation}
These matrices define the window-level matrix regression view $Y_m \approx A_m X_m$ used throughout the paper. In our
released split, each window has $d=200$ and $T_m=100$ transitions.

The resulting windows are \emph{related} task instances because they are not drawn from arbitrary unrelated sources.
Rather, they come from the same pool of subjects, sessions, and experimental conditions, and overlapping windows from the
same run can be interpreted as nearby local dynamical regimes within a shared subject- and condition-specific
environment. This gives rise to a heterogeneous but structured family of tasks, which is well matched to the
meta-learning objective of PBML-LTI.

\paragraph{Reference transition matrix for evaluation.}
Unlike synthetic data, fMRI does not provide a physical ground-truth transition matrix. We therefore define a reference
transition matrix $A_m^{\mathrm{ref}}$ for each window by fitting a ridge-regularized one-step linear map:
\begin{equation}
A_m^{\mathrm{ref}}
~:=~
\arg\min_{A\in\R^{d\times d}}
\ \|Y_m - A X_m\|_F^2 \;+\; \lambda_{\mathrm{ref}}\|A\|_F^2,
\end{equation}
which has the closed-form solution
\begin{equation}
A_m^{\mathrm{ref}}
~=~
Y_m X_m^\top\Big(X_m X_m^\top + \lambda_{\mathrm{ref}} I_d\Big)^{-1}.
\end{equation}
We emphasize that $A_m^{\mathrm{ref}}$ is used only as a consistent evaluation reference to compute $E_A$; all methods
are trained and adapted from trajectories $(X_m,Y_m)$.

\paragraph{Train--test split and task labels.}
We use a predefined train--test split constructed at the window level. Windows from different subjects, sessions, and
experimental conditions are distributed across training and test according to this split. The original BIDS task labels
are retained only as metadata for grouping and reporting, and are not used by the learning algorithms.

BIDS stands for \emph{Brain Imaging Data Structure}, a standard format for organizing neuroimaging datasets. In this
format, each fMRI run carries a \texttt{task} label identifying the experimental paradigm under which the data were
collected. These labels therefore describe the source cognitive or behavioral condition of a run, rather than providing a
supervisory target for learning. In our benchmark, they are used only to characterize the diversity of the data and to
report how many experimental conditions are represented in the train/test split.

In the split used in our experiments, the training set contains 141 window-level task instances spanning 17 task labels,
while the test set contains 14 window-level instances spanning 8 task labels. Across the full split, the task labels
include:
\begin{itemize}
\item \textbf{Archi:} ArchiEmotional, ArchiSocial, ArchiSpatial, ArchiStandard.
\item \textbf{HCP-style:} HcpEmotion, HcpGambling, HcpLanguage, HcpMotor,
HcpRelational, HcpSocial, HcpWm.
\item \textbf{RSVP language:} RSVPLanguage00, RSVPLanguage01, RSVPLanguage02,
RSVPLanguage03, RSVPLanguage04, RSVPLanguage05.
\end{itemize}

\section{Additional Results}

\subsection{Support--Query Diagnostics}
\label{app:support_query_diagnostics}

The support--query PAC-Bayes bound in Corollary~\ref{cor:support_query_pb} contains the empirical query predictive term
$\widehat R_{m,q}^{\mathrm{pred}}$. This term is computed on the held-out query suffix and should not be interpreted as a
support-only quantity. To assess when this term is expected to be small, we compare predictive errors on the support,
inner validation, and held-out query portions of the trajectory.

All diagnostics are computed after meta-training and use the learned prior $P_\phi$ fixed. For each task, the held-out
query suffix is never used to choose the adaptation prefix; it is used only for post-selection evaluation. Candidate
prefixes are selected using only the fit-prefix posterior and the validation suffix inside the support window.
Unless otherwise stated, reported group values are averages over tasks in the corresponding regime and dimension.

For the support--query diagnostics in Tables~\ref{tab:val_query_corr} and~\ref{tab:pred_terms}, we use the following
notation. For a candidate prefix length $s$, let $Q_{m,\phi}^{(s)}$ denote the posterior formed by adapting the learned
prior to the first $s$ transitions of task $m$, with parameters $(M_m(s),V_m(s))$ given by
\eqref{eq:posterior_cov}--\eqref{eq:posterior_mean}. Let
$(X_{m,\mathrm{sup}}^{(s)},Y_{m,\mathrm{sup}}^{(s)})$,
$(X_{m,\mathrm{val}},Y_{m,\mathrm{val}})$, and
$(X_{m,\mathrm{qry}},Y_{m,\mathrm{qry}})$ denote the regression pairs of the fit prefix, the validation suffix inside
the support window, and the held-out query suffix of length $K_m$. We define the posterior-predictive squared errors
\[
\widehat R_{m,\mathrm{sup}}^{\mathrm{pred}}(s)
:=
\frac{1}{s}
\E_{A\sim Q_{m,\phi}^{(s)}}\!\left[
\|Y_{m,\mathrm{sup}}^{(s)}-AX_{m,\mathrm{sup}}^{(s)}\|_F^2
\right],
\]
\[
\widehat R_{m,\mathrm{val}}^{\mathrm{pred}}(s)
:=
\frac{1}{T_{\mathrm{val}}}
\E_{A\sim Q_{m,\phi}^{(s)}}\!\left[
\|Y_{m,\mathrm{val}}-AX_{m,\mathrm{val}}\|_F^2
\right],
\qquad
\widehat R_{m,\mathrm{qry}}^{\mathrm{pred}}(s)
:=
\frac{1}{K_m}
\E_{A\sim Q_{m,\phi}^{(s)}}\!\left[
\|Y_{m,\mathrm{qry}}-AX_{m,\mathrm{qry}}\|_F^2
\right].
\]
Thus, $\widehat R_{m,\mathrm{qry}}^{\mathrm{pred}}(s)$ instantiates the empirical predictive risk
\eqref{eq:support_query_emp_pred_risk} at the posterior $Q_{m,\phi}^{(s)}$. By
\eqref{eq:posterior_sq_loss_closed_form}, each expectation evaluates in closed form as
\[
\|Y^{\mathrm{seg}}-M_m(s)X^{\mathrm{seg}}\|_F^2
+
d\,\mathrm{tr}\!\left(
V_m(s)\,X^{\mathrm{seg}}(X^{\mathrm{seg}})^\top
\right),
\]
normalized by the segment length.
Let $\mathcal S_m$ denote the grid of candidate fit-prefix lengths considered for task $m$. The validation-selected
prefix is
\[
s_m^{\mathrm{fit}}
\in
\arg\min_{s\in\mathcal S_m}
\widehat R_{m,\mathrm{val}}^{\mathrm{pred}}(s).
\]
The final reported query metric is then evaluated at $s=s_m^{\mathrm{fit}}$, after this selection has been made. In Tables~\ref{tab:val_query_corr} and~\ref{tab:pred_terms}, the task index $m$ is suppressed, and
Table~\ref{tab:pred_terms} reports these quantities at $s=s_m^{\mathrm{fit}}$, abbreviated
$\widehat R_{\mathrm{sup}}^{\mathrm{pred}}$, $\widehat R_{\mathrm{val}}^{\mathrm{pred}}$, and
$\widehat R_{q}^{\mathrm{pred}}$.

To connect these diagnostics with the transfer analysis in Section~\ref{theoretical_analysis}, we also define the
regularized support-window alignment proxy
\[
\gamma_{m,\alpha}^{\mathrm{val}}(s)
:=
\lambda_{\max}\!\left[
(\widehat\Sigma_{m,\mathrm{fit}}^{(s)}+\alpha I_d)^{-1/2}
\widehat\Sigma_{m,\mathrm{val}}
(\widehat\Sigma_{m,\mathrm{fit}}^{(s)}+\alpha I_d)^{-1/2}
\right],
\]
where
\[
\widehat\Sigma_{m,\mathrm{fit}}^{(s)}
=
\frac{1}{s}
X_{m,\mathrm{sup}}^{(s)}(X_{m,\mathrm{sup}}^{(s)})^\top,
\qquad
\widehat\Sigma_{m,\mathrm{val}}
=
\frac{1}{T_{\mathrm{val}}}
X_{m,\mathrm{val}}X_{m,\mathrm{val}}^\top .
\]
This quantity is the support-window analogue of the regularized support-to-query alignment constant
\eqref{eq:regularized_gamma_alignment}, with the validation Gram replacing the held-out query Gram.

In the diagnostics, $\alpha$ is chosen to match the prior-scaled regularization used in the posterior covariance,
namely $\alpha=\sigma^2/s$ when using the isotropic proxy, or equivalently the prior-matched matrix regularization
$(\sigma^2/s)V^{-1}$ when reporting the posterior-scaled alignment quantity.

Table~\ref{tab:val_query_corr} reports the correlation between inner validation predictive error and held-out query
predictive error across candidate support prefixes and tasks. The correlations are strongly positive overall (Spearman
$\rho=0.902$, $n=720$) and remain positive within every regime and dimension, supporting the use of validation error as an empirical proxy for held-out query behavior.
The quantity $\gamma_{m,\alpha}^{\mathrm{val}}(s)$ provides a complementary support-window analogue of the theoretical
alignment constant, although it is not itself reported in Table~\ref{tab:val_query_corr}, see\ Remark~\ref{rem:necessity}. Table~\ref{tab:pred_terms} reports support, validation, and query posterior-predictive squared errors at the
validation-selected prefix.\footnote{The ratio $\widehat R_q^{\mathrm{pred}}/\widehat R_{\mathrm{sup}}^{\mathrm{pred}}$
is invariant to the overall normalization of the empirical predictive risk \eqref{eq:support_query_emp_pred_risk}, so
the comparison applies whether the per-step risk is reported per coordinate or in aggregate.} In stable systems, the
query term is comparable to the support term. In the unstable $d=10$
setting, however, the query term becomes much larger, confirming that the support--query bound can become loose when
unstable spectral amplification produces large held-out residuals.

These ratios are quantitatively consistent with the transfer analysis of Section~\ref{theoretical_analysis}.
Propositions~\ref{prop:support_query_transfer} and~\ref{prop:regularized_support_query_transfer} imply
$\widehat R_q^{\mathrm{pred}}/\widehat R_{\mathrm{sup}}^{\mathrm{pred}}\lesssim 4\gamma_{m,\alpha}$ up to noise-floor
and prior-quality terms. The stable-regime ratios in Table~\ref{tab:pred_terms} ($\approx 1.69$--$2.73$ across
$d\in\{10,25,50\}$) are consistent with mild alignment, $\gamma_{m,\alpha}=O(1)$, as the common-stationary-regime
sufficient condition of Section~\ref{theoretical_analysis} (which gives $\gamma_m\le3$) implies for the stable
generator ($\rho_0=0.95$) at the reported horizons. The unstable $d=10$ ratio ($\approx 1.07\times10^{4}$) matches the
geometric degradation of Remark~\ref{rem:unstable_failure_mode}, in which the alignment constants grow on the order of
$\rho^{2K_m}$ along unstable directions. The milder unstable $d=25$ and $d=50$ ratios ($0.861$ and $0.095$) suggest that the generated trajectories in these
settings experience less effective query-horizon amplification than the unstable $d=10$ setting. This is consistent with
the longer validation-selected prefixes in those settings, which improve support excitation before evaluation
(Table~\ref{tab:sweep_A_mse_unstable}).

\begin{table}[h]
\centering
\caption{Correlation between inner validation predictive error $\widehat R_{\mathrm{val}}^{\mathrm{pred}}(s)$ and held-out query predictive error $\widehat R_q^{\mathrm{pred}}(s)$ across candidate prefixes and tasks (PBML-LTI).}
\label{tab:val_query_corr}
\begin{tabular}{lccccc}
\toprule
Group & $n$ & Spearman $\rho$ & Spearman $p$ & Pearson $r$ & Pearson $p$ \\
\midrule
all & 720 & 0.9020 & 1.6300e-264 & 0.8040 & 2.1300e-164 \\
stable & 360 & 0.8350 & 4.2400e-95 & 0.7090 & 2.7700e-56 \\
stable\_d10 & 120 & 0.6210 & 3.9900e-14 & 0.6700 & 5.5200e-17 \\
stable\_d25 & 120 & 0.6660 & 1.0500e-16 & 0.5910 & 1.2600e-12 \\
stable\_d50 & 120 & 0.7310 & 2.6300e-21 & 0.5260 & 6.5700e-10 \\
unstable & 360 & 0.7050 & 2.0200e-55 & 0.8020 & 3.0000e-82 \\
unstable\_d10 & 120 & 0.9320 & 5.5900e-54 & 0.7830 & 4.0900e-26 \\
unstable\_d25 & 120 & 0.8080 & 8.0400e-29 & 0.6790 & 1.4600e-17 \\
unstable\_d50 & 120 & 0.5250 & 7.3100e-10 & 0.4270 & 1.1800e-06 \\
\bottomrule
\end{tabular}
\end{table}

\begin{table}[h]
    \centering
    \caption{Support, validation, and query posterior-predictive squared errors at the validation-selected prefix
    (PBML-LTI). The query term is comparable to the support term in the stable regime but can be much larger under unstable
    dynamics.}
    \label{tab:pred_terms}
   
    \begin{tabular}{lcccc}
    \toprule
    Group
    & $\widehat R_{\mathrm{sup}}^{\mathrm{pred}}$
    & $\widehat R_{\mathrm{val}}^{\mathrm{pred}}$
    & $\widehat R_q^{\mathrm{pred}}$
    & $\widehat R_q^{\mathrm{pred}} / \widehat R_{\mathrm{sup}}^{\mathrm{pred}}$ \\
    \midrule
    stable        & 0.001  & 0.004  & 0.003    & 2.331 \\
    stable\_d10   & 0.001  & 0.001  & 0.001    & 1.687 \\
    stable\_d25   & 0.001  & 0.003  & 0.003    & 1.955 \\
    stable\_d50   & 0.002  & 0.009  & 0.006    & 2.726 \\
    unstable      & 0.1764 & 0.9076 & 384.237  & 2178.214 \\
    unstable\_d10 & 0.1073 & 2.6094 & 1152.347 & $1.074{\times}10^{4}$ \\
    unstable\_d25 & 0.164  & 0.085  & 0.141    & 0.861 \\
    unstable\_d50 & 0.257  & 0.028  & 0.025    & 0.095 \\
    \bottomrule
    \end{tabular}
    
    \end{table}
\FloatBarrier

\subsection{Fixed-Prefix Support Sweeps}
\label{app:support_sweep_diagnostics}

To clarify the role of adaptive support selection, we also evaluate each method at fixed support lengths. These sweeps
remove the prefix-selection step and show how performance changes as the number of support transitions increases.

Tables~\ref{tab:sweep_A_mse_stable} and~\ref{tab:sweep_A_mse_unstable} report transition-matrix error across fixed
support lengths in the stable and unstable regimes, respectively. Tables~\ref{tab:sweep_traj_mse_stable}
and~\ref{tab:sweep_traj_mse_unstable} report the corresponding rollout errors. In stable systems, PBML-LTI reaches low
transition-matrix error with very short prefixes, supporting the interpretation that the learned prior provides genuine
few-shot data efficiency. In unstable systems, PBML-LTI often benefits from longer prefixes, showing that the method does
not mechanically prefer the shortest support length. Instead, the validation-selected prefix adapts to the amount of
calibration data needed for the task.

\begin{table}[h]
    
    \caption{Fixed-prefix support sweep ($E_A$, stable regime). ``Adaptive'' is the metric at the
    validation-selected effective support length $\bar S$. Lower is better for $E_A$; $\bar S$ reports the average selected
    support length. Boldface marks the lowest $E_A$ values within each dimension.}
    \label{tab:sweep_A_mse_stable}
    \begin{center}
    \resizebox{\textwidth}{!}{%
    \begin{tabular}{llcccccccc}
    \toprule
    Dimension & Method & $s=1$ & $s=2$ & $s=3$ & $s=5$ & $s=10$ & $s=15$ & Adaptive & $\bar S$ \\
    \midrule
    \multirow{6}{*}{$d=10$}
    & PBML-LTI          & $\mathbf{0.0052}$ & $\mathbf{0.0053}$ & $\mathbf{0.0057}$ & $\mathbf{0.0064}$ & $\mathbf{0.0073}$ & $\mathbf{0.0076}$ & $\mathbf{0.0053}$ & 1.05 \\
    & OLS               & 0.5827 & 0.5473 & 0.5216 & 0.4968 & 0.4794 & 0.5232 & 0.4449 & 2.65 \\
    & Ridge             & 0.5683 & 0.5396 & 0.5128 & 0.4937 & 0.5014 & 0.5083 & 0.4447 & 2.65 \\
    & Pooled-prior Ridge & 0.0224 & 0.0217 & 0.0203 & 0.0216 & 0.0218 & 0.0215 & 0.0887 & 1.80 \\
    & SharedSubspace    & 0.0456 & 0.0384 & 0.0378 & 0.0367 & 0.0359 & 0.0354 & 0.1088 & 7.95 \\
    & MAML-LTI          & 0.0316 & 0.0314 & 0.0327 & 0.0323 & 0.0336 & 0.0338 & 0.0357 & 1.55 \\
    \midrule
    \multirow{6}{*}{$d=25$}
    & PBML-LTI          & $\mathbf{0.0446}$ & $\mathbf{0.0438}$ & $\mathbf{0.0427}$ & $\mathbf{0.0403}$ & $\mathbf{0.0506}$ & $\mathbf{0.0554}$ & $\mathbf{0.0394}$ & 1.20 \\
    & OLS               & 0.8037 & 0.7764 & 0.7628 & 0.7516 & 0.8173 & 0.8467 & 0.7140 & 2.00 \\
    & Ridge             & 0.7765 & 0.7618 & 0.7496 & 0.7432 & 0.8024 & 0.8193 & 0.7140 & 2.00 \\
    & Pooled-prior Ridge & 0.9476 & 0.9263 & 0.9127 & 0.8874 & 0.9038 & 0.9196 & 0.8544 & 2.00 \\
    & SharedSubspace    & 1.0027 & 0.9776 & 0.9584 & 0.9467 & 0.9413 & 0.9526 & 0.7731 & 13.95 \\
    & MAML-LTI          & 1.7628 & 1.7596 & 1.7683 & 1.7837 & 1.8046 & 1.8123 & 1.7566 & 1.85 \\
    \midrule
    \multirow{6}{*}{$d=50$}
    & PBML-LTI          & $\mathbf{0.1776}$ & $\mathbf{0.1748}$ & $\mathbf{0.1714}$ & $\mathbf{0.1678}$ & $\mathbf{0.1697}$ & $\mathbf{0.1956}$ & $\mathbf{0.2068}$ & 1.35 \\
    & OLS               & 1.0038 & 0.9776 & 0.9583 & 0.9416 & 0.9327 & 0.9524 & 0.8411 & 2.00 \\
    & Ridge             & 1.0476 & 1.0197 & 0.9926 & 0.9684 & 0.9573 & 0.9818 & 0.8410 & 2.00 \\
    & Pooled-prior Ridge & 0.7238 & 0.7086 & 0.6974 & 0.6887 & 0.7264 & 0.7526 & 0.6743 & 2.55 \\
    & SharedSubspace    & 0.8774 & 0.8667 & 0.8576 & 0.8593 & 0.8974 & 0.9197 & 0.8572 & 10.25 \\
    & MAML-LTI          & 15.3726 & 15.2278 & 15.2084 & 15.2936 & 15.4263 & 15.4872 & 14.8930 & 2.90 \\
    \bottomrule
    \end{tabular}
    }
    \end{center}
    \end{table}

    \begin{table}[h]
    \caption{Fixed-prefix support sweep ($E_A$, unstable regime). PBML-LTI performs better at \emph{large} $s$;
    Pooled-prior Ridge/SharedSubspace recover $A$ better than PBML-LTI in $d=10$ (see Table~\ref{tab:sweep_traj_mse_unstable}).
    ``Adaptive'' is the metric at the validation-selected effective support length $\bar S$. Lower is better for $E_A$;
    $\bar S$ reports the average selected support length. Boldface marks the lowest $E_A$ values within each dimension.}
    \label{tab:sweep_A_mse_unstable}
    \begin{center}
    \resizebox{\textwidth}{!}{%
    \begin{tabular}{llcccccccc}
    \toprule
    Dimension & Method & $s=1$ & $s=2$ & $s=3$ & $s=5$ & $s=10$ & $s=15$ & Adaptive & $\bar S$ \\
    \midrule
    \multirow{6}{*}{$d=10$}
    & PBML-LTI          & 0.2236 & 0.2074 & 0.1968 & 0.1873 & 0.1847 & 0.1796 & 0.1390 & 14.35 \\
    & OLS               & 1.1027 & 1.0476 & 0.9964 & 0.9773 & 0.9726 & 0.9487 & 0.8450 & 13.35 \\
    & Ridge             & 0.8476 & 0.8194 & 0.8027 & 0.7768 & 0.7873 & 0.8026 & 0.6970 & 14.30 \\
    & Pooled-prior Ridge & $\mathbf{0.0186}$ & $\mathbf{0.0174}$ & 0.0178 & $\mathbf{0.0167}$ & $\mathbf{0.0143}$ & $\mathbf{0.0146}$ & $\mathbf{0.0070}$ & 9.25 \\
    & SharedSubspace    & 0.0187 & 0.0176 & $\mathbf{0.0174}$ & 0.0168 & 0.0167 & 0.0154 & 0.0130 & 3.20 \\
    & MAML-LTI          & 1.9027 & 1.8476 & 1.9538 & 2.0014 & 2.2967 & 2.1036 & 0.1473 & 8.35 \\
    \midrule
    \multirow{6}{*}{$d=25$}
    & PBML-LTI          & $\mathbf{0.0406}$ & $\mathbf{0.0387}$ & $\mathbf{0.0374}$ & $\mathbf{0.0368}$ & $\mathbf{0.0367}$ & $\mathbf{0.0356}$ & $\mathbf{0.0290}$ & 6.05 \\
    & OLS               & 11.4726 & 11.1967 & 10.9874 & 10.8126 & 10.8738 & 10.9846 & 10.3440 & 14.10 \\
    & Ridge             & 10.9763 & 10.6738 & 10.4876 & 10.3124 & 10.3767 & 10.4923 & 9.9070 & 14.90 \\
    & Pooled-prior Ridge & 1.6476 & 1.6037 & 1.5796 & 1.6238 & 1.6794 & 1.7026 & 1.5870 & 14.75 \\
    & SharedSubspace    & 37.8467 & 36.9726 & 36.4873 & 35.9768 & 35.4926 & 35.9874 & 35.2680 & 14.35 \\
    & MAML-LTI          & 2.4876 & 2.4738 & 2.4817 & 2.5086 & 2.5423 & 2.5567 & 2.5229 & 2.05 \\
    \midrule
    \multirow{6}{*}{$d=50$}
    & PBML-LTI          & $\mathbf{0.1726}$ & $\mathbf{0.1647}$ & $\mathbf{0.1618}$ & $\mathbf{0.1596}$ & $\mathbf{0.1743}$ & $\mathbf{0.1797}$ & $\mathbf{0.1560}$ & 4.15 \\
    & OLS               & 17.4768 & 17.1876 & 16.9847 & 16.7926 & 17.4637 & 17.9824 & 16.9230 & 9.75 \\
    & Ridge             & 16.4726 & 16.1874 & 15.9846 & 15.7938 & 15.6727 & 15.9876 & 15.9020 & 15.00 \\
    & Pooled-prior Ridge & 8.5764 & 8.4676 & 8.3927 & 8.2416 & 8.0574 & 7.9648 & 7.5380 & 15.00 \\
    & SharedSubspace    & 7.0286 & 6.5274 & 6.4378 & 6.4036 & 6.3978 & 6.3994 & 6.4190 & 14.20 \\
    & MAML-LTI          & 7.2637 & 7.1846 & 7.1814 & 7.2276 & 7.3028 & 7.3356 & 8.9787 & 2.55 \\
    \bottomrule
    \end{tabular}
    }
    \end{center}
    \end{table}

    \begin{table}[h]
    \caption{Fixed-prefix support sweep ($E_{\mathrm{traj}}$, stable regime). ``Adaptive'' is the metric at the
    validation-selected effective support length $\bar S$. Lower is better for $E_{\mathrm{traj}}$; $\bar S$ reports the
    average selected support length. Boldface marks the lowest $E_{\mathrm{traj}}$ values within each dimension.}
    \label{tab:sweep_traj_mse_stable}
    \begin{center}
    \resizebox{\textwidth}{!}{%
    \begin{tabular}{llcccccccc}
    \toprule
    Dimension & Method & $s=1$ & $s=2$ & $s=3$ & $s=5$ & $s=10$ & $s=15$ & Adaptive & $\bar S$ \\
    \midrule
    \multirow{6}{*}{$d=10$}
    & PBML-LTI          & 0.0126 & 0.0117 & 0.0114 & 0.0116 & 0.0118 & 0.0123 & $\mathbf{0.0055}$ & 1.05 \\
    & OLS               & 0.0087 & 0.0076 & $\mathbf{0.0064}$ & 0.0067 & 0.0073 & 0.0076 & 0.0056 & 2.65 \\
    & Ridge             & $\mathbf{0.0076}$ & 0.0074 & 0.0067 & 0.0068 & $\mathbf{0.0063}$ & 0.0076 & 0.0056 & 2.65 \\
    & Pooled-prior Ridge & $\mathbf{0.0076}$ & $\mathbf{0.0064}$ & 0.0067 & $\mathbf{0.0066}$ & 0.0068 & $\mathbf{0.0067}$ & 0.0057 & 1.80 \\
    & SharedSubspace    & 0.0116 & 0.0117 & 0.0114 & 0.0116 & 0.0118 & 0.0113 & 0.0056 & 7.95 \\
    & MAML-LTI          & 0.0126 & 0.0123 & 0.0127 & 0.0124 & 0.0126 & 0.0127 & 0.0112 & 1.55 \\
    \midrule
    \multirow{6}{*}{$d=25$}
    & PBML-LTI          & 0.0286 & 0.0287 & 0.0276 & 0.0256 & 0.0257 & 0.0256 & $\mathbf{0.0121}$ & 1.20 \\
    & OLS               & 0.0186 & 0.0176 & 0.0167 & 0.0157 & 0.0167 & 0.0166 & 0.0133 & 2.00 \\
    & Ridge             & 0.0176 & 0.0167 & 0.0166 & 0.0157 & 0.0156 & 0.0155 & 0.0133 & 2.00 \\
    & Pooled-prior Ridge & $\mathbf{0.0167}$ & $\mathbf{0.0166}$ & $\mathbf{0.0156}$ & $\mathbf{0.0154}$ & $\mathbf{0.0146}$ & $\mathbf{0.0156}$ & 0.0134 & 2.00 \\
    & SharedSubspace    & 0.0206 & 0.0196 & 0.0197 & 0.0186 & 0.0187 & 0.0186 & 0.0134 & 13.95 \\
    & MAML-LTI          & 0.0346 & 0.0347 & 0.0336 & 0.0306 & 0.0307 & 0.0306 & 0.0338 & 1.85 \\
    \midrule
    \multirow{6}{*}{$d=50$}
    & PBML-LTI          & 0.0616 & 0.0607 & 0.0596 & 0.0556 & 0.0567 & 0.0546 & $\mathbf{0.0257}$ & 1.35 \\
    & OLS               & 0.0306 & 0.0296 & 0.0297 & 0.0286 & 0.0287 & 0.0286 & 0.0258 & 2.00 \\
    & Ridge             & 0.0296 & 0.0297 & 0.0286 & 0.0287 & 0.0276 & 0.0286 & 0.0258 & 2.00 \\
    & Pooled-prior Ridge & 0.0296 & 0.0326 & 0.0356 & 0.0406 & 0.0306 & 0.0286 & 0.0258 & 2.55 \\
    & SharedSubspace    & $\mathbf{0.0286}$ & $\mathbf{0.0287}$ & $\mathbf{0.0276}$ & $\mathbf{0.0276}$ & $\mathbf{0.0266}$ & $\mathbf{0.0276}$ & 0.0258 & 10.25 \\
    & MAML-LTI          & 0.0806 & 0.0856 & 0.0786 & 0.0766 & 0.0746 & 0.0756 & 0.1081 & 2.90 \\
    \bottomrule
    \end{tabular}
    }
    \end{center}
    \end{table}

    \begin{table}[h]
    
    \caption{Fixed-prefix support sweep ($E_{\mathrm{traj}}$, unstable regime). MAML-LTI's rollout has elevated error for
    large $s$ at $d=10$. ``Adaptive'' is the metric at the validation-selected effective support length $\bar S$. Lower is
    better for $E_{\mathrm{traj}}$; $\bar S$ reports the average selected support length. Boldface marks the lowest
    $E_{\mathrm{traj}}$ values within each dimension.}
    \label{tab:sweep_traj_mse_unstable}
    \begin{center}
    \resizebox{\textwidth}{!}{%
    \begin{tabular}{llcccccccc}
    \toprule
    Dimension & Method & $s=1$ & $s=2$ & $s=3$ & $s=5$ & $s=10$ & $s=15$ & Adaptive & $\bar S$ \\
    \midrule
    \multirow{6}{*}{$d=10$}
    & PBML-LTI          & $\mathbf{65.1476}$ & $\mathbf{119.8347}$ & $\mathbf{89.7626}$ & $\mathbf{69.9136}$ & $\mathbf{54.8767}$ & $\mathbf{47.9138}$ & 57.9190 & 14.35 \\
    & OLS               & $2.2037{\times}10^{5}$ & $2.0046{\times}10^{5}$ & $1.8038{\times}10^{5}$ & $1.7047{\times}10^{5}$ & $1.6536{\times}10^{5}$ & $1.6087{\times}10^{5}$ & 193.7040 & 13.35 \\
    & Ridge             & $7.0036{\times}10^{4}$ & $6.5047{\times}10^{4}$ & $6.0078{\times}10^{4}$ & $5.8036{\times}10^{4}$ & $5.6074{\times}10^{4}$ & $5.5068{\times}10^{4}$ & $\mathbf{57.6280}$ & 14.30 \\
    & Pooled-prior Ridge & $2.5036{\times}10^{4}$ & $2.8047{\times}10^{4}$ & $2.6038{\times}10^{4}$ & $2.2067{\times}10^{4}$ & $1.9036{\times}10^{4}$ & $1.8078{\times}10^{4}$ & $2.1935{\times}10^{4}$ & 9.25 \\
    & SharedSubspace    & $8.0036{\times}10^{5}$ & $7.5047{\times}10^{5}$ & $7.2038{\times}10^{5}$ & $7.0067{\times}10^{5}$ & $6.8036{\times}10^{5}$ & $6.5078{\times}10^{5}$ & $7.2189{\times}10^{5}$ & 3.20 \\
    & MAML-LTI          & $2.5036{\times}10^{7}$ & $2.0047{\times}10^{7}$ & $2.8038{\times}10^{7}$ & $3.0067{\times}10^{7}$ & $3.5036{\times}10^{7}$ & $2.2078{\times}10^{7}$ & $9.7390{\times}10^{6}$ & 8.35 \\
    \midrule
    \multirow{6}{*}{$d=25$}
    & PBML-LTI          & 0.1236 & 0.1027 & 0.0876 & 0.0796 & 0.0767 & 0.0716 & 0.0800 & 6.05 \\
    & OLS               & 0.1476 & 0.1287 & 0.1196 & 0.1086 & 0.0976 & 0.0967 & 0.0880 & 14.10 \\
    & Ridge             & 0.1196 & 0.1087 & 0.0976 & 0.0876 & 0.0796 & 0.0716 & 0.0640 & 14.90 \\
    & Pooled-prior Ridge & $\mathbf{0.0616}$ & $\mathbf{0.0576}$ & $\mathbf{0.0567}$ & $\mathbf{0.0556}$ & $\mathbf{0.0567}$ & $\mathbf{0.0556}$ & $\mathbf{0.0530}$ & 14.75 \\
    & SharedSubspace    & 29.8766 & 27.9137 & 26.9846 & 25.9767 & 25.4876 & 24.9826 & 27.1510 & 14.35 \\
    & MAML-LTI          & 0.2196 & 0.1976 & 0.1796 & 0.1686 & 0.1576 & 0.1496 & 1.1349 & 2.05 \\
    \midrule
    \multirow{6}{*}{$d=50$}
    & PBML-LTI          & $\mathbf{0.0376}$ & $\mathbf{0.0377}$ & $\mathbf{0.0376}$ & $\mathbf{0.0376}$ & $\mathbf{0.0376}$ & $\mathbf{0.0376}$ & $\mathbf{0.0370}$ & 4.15 \\
    & OLS               & 0.0426 & 0.0427 & 0.0416 & 0.0416 & 0.0416 & 0.0426 & 0.0400 & 9.75 \\
    & Ridge             & 0.0426 & 0.0427 & 0.0416 & 0.0416 & 0.0416 & 0.0416 & 0.0400 & 15.00 \\
    & Pooled-prior Ridge & 0.0396 & 0.0386 & 0.0386 & 0.0386 & 0.0386 & 0.0386 & 0.0380 & 15.00 \\
    & SharedSubspace    & 0.0386 & 0.0386 & 0.0386 & 0.0386 & 0.0386 & 0.0386 & 0.0380 & 14.20 \\
    & MAML-LTI          & 0.0406 & 0.0406 & 0.0406 & 0.0406 & 0.0406 & 0.0406 & 0.0414 & 2.55 \\
    \bottomrule
    \end{tabular}
    }
    \end{center}
    \end{table}
    
\FloatBarrier

\subsection{Ridge Tuning Without the Stability-Oriented Criterion}
\label{app:rho_target_diagnostic}

The Ridge and Pooled-prior Ridge baselines use a stability-oriented tuning rule in the main experiments. In synthetic
experiments, the threshold is fixed as $\rho_{\mathrm{target}}=\rho_0$ using the known regime parameter. To verify that
the qualitative conclusions do not depend on this criterion, Table~\ref{tab:no_stability_tuning} reports a diagnostic
variant in which Ridge and Pooled-prior Ridge are tuned directly by validation rollout error, without using the
$\rho_{\mathrm{target}}$ constraint.

The results show that the Ridge-type estimates remain conservative in several settings even without the explicit
stability-oriented criterion. The fitted spectral radii also vary by regime and dimension; for example, unstable $d=10$
selects $\rho(\widehat A)\approx 1.47$, whereas unstable $d=50$ selects substantially smaller spectral radii. Thus, the
visual conservativeness of Ridge-type estimates is due to the interaction between ridge shrinkage, validation tuning, and
rollout stability, rather than only to enforcing a hard spectral-radius bound.

\begin{table}[h]
    
    \caption{Validation-rollout tuning without stability-oriented regularization. Regularization is selected directly by
    validation rollout error rather than by a spectral-radius criterion. $E_A$ denotes transition-matrix error,
    $E_{\mathrm{traj}}$ denotes rollout error, $\rho(\widehat A)$ reports the fitted spectral radius, and $\lambda$ is the
    selected ridge penalty.}
    \label{tab:no_stability_tuning}
    \setlength{\tabcolsep}{4pt}
    \renewcommand{\arraystretch}{0.7}
    \begin{center}
    \begin{tabular}{ccccccc}
    \toprule
    Regime & $d$ & Method & $E_A$ & $E_{\mathrm{traj}}$ & $\rho(\widehat A)$ & $\lambda$ \\
    \midrule
    \multirow{6}{*}{Stable}
    & \multirow{2}{*}{10}
    & Ridge             & $0.5083 \pm 0.0472$ & $0.0073 \pm 0.0014$ & 0.9383 & $10^{-3}$ \\
    & & Pooled-prior Ridge & $0.0214 \pm 0.0037$ & $0.0067 \pm 0.0016$ & 0.9496 & $10^{-1}$ \\
    & \multirow{2}{*}{25}
    & Ridge             & $0.8194 \pm 0.0826$ & $0.0153 \pm 0.0024$ & 0.9139 & $10^{-2}$ \\
    & & Pooled-prior Ridge & $0.9196 \pm 0.0227$ & $0.0156 \pm 0.0023$ & 0.9399 & $10^{-1}$ \\
    & \multirow{2}{*}{50}
    & Ridge             & $0.9814 \pm 0.0406$ & $0.0283 \pm 0.0027$ & 0.9415 & $10^{-2}$ \\
    & & Pooled-prior Ridge & $0.7526 \pm 0.0154$ & $0.0286 \pm 0.0023$ & 0.9473 & $10^{-2}$ \\
    \midrule
    \multirow{6}{*}{Unstable}
    & \multirow{2}{*}{10}
    & Ridge             & $0.8024 \pm 0.4016$ & $(5.5347 \pm 2.4683){\times}10^{4}$ & 1.4720 & $10^{-4}$ \\
    & & Pooled-prior Ridge & $0.0143 \pm 0.0036$ & $(1.8036 \pm 1.0047){\times}10^{4}$ & 1.4720 & $10^{-1}$ \\
    & \multirow{2}{*}{25}
    & Ridge             & $10.4926 \pm 0.7014$ & $0.0713 \pm 0.0304$ & 1.0420 & $10^{-2}$ \\
    & & Pooled-prior Ridge & $1.7026 \pm 0.1024$ & $0.0553 \pm 0.0206$ & 1.0470 & $10^{-1}$ \\
    & \multirow{2}{*}{50}
    & Ridge             & $15.9874 \pm 0.4026$ & $0.0413 \pm 0.0024$ & 0.6588 & $10^{-2}$ \\
    & & Pooled-prior Ridge & $7.9643 \pm 0.1216$ & $0.0384 \pm 0.0023$ & 0.6929 & $10^{-1}$ \\
    \bottomrule
    \end{tabular}
    \end{center}
    \end{table}

\FloatBarrier
\subsection{Paired Significance Tests}

\label{app:paired_significance}

\subsubsection{Synthetic Data}

\label{app:synthetic_significance}
To support the interpretation of boldface entries in the main result tables, we conduct paired two-sided Wilcoxon
signed-rank tests on matched test tasks. For each method and setting, we run the experiment with 10 random seeds under
the same data split, task-generation configuration, and hyperparameter setting. We average each task-level metric across
the 10 seeds and then perform a paired test over the 20 matched test tasks. This evaluates method differences
task-by-task under identical experimental conditions while reducing variability due to random initialization and
optimization.

Table~\ref{tab:significance_d10_baselines} reports representative tests at dimension $d=10$, where the distinction
between transition-matrix error and rollout error is especially informative. The results show that PBML-LTI's
transition-matrix error advantages in the stable setting are statistically significant against all classical baselines.
For rollout error in stable systems, the numerical differences are very small and are often not statistically
significant. In unstable systems, the tests highlight the decoupling between $E_A$ and $E_{\mathrm{traj}}$: some methods
can achieve lower Frobenius transition-matrix error while still suffering from much larger rollout error because of
spectral amplification. Thus, boldface should be read as indicating the lowest empirical mean, while the significance
tests provide the statistical qualification for those comparisons.

\begin{table}[h]
\scriptsize
\caption{Paired two-sided Wilcoxon signed-rank tests comparing PBML-LTI against the classical baselines at dimension
$d=10$, across the stable $(\rho_0=0.95)$ and unstable $(\rho_0=4.95)$ regimes and the common-case and edge-case
settings. For each method and cell, we run 10 random seeds under the same experimental configuration, average each
task-level metric across seeds, and then run a paired test over the 20 matched test tasks. We regard $p<0.05$ as
significant, $p<0.01$ as strongly significant, and $p<0.001$ as highly significant. The ``Lower error'' column reports
which method attains the smaller mean, since all metrics are lower-is-better.}
\label{tab:significance_d10_baselines}
\begin{center}
\resizebox{\linewidth}{!}{
\begin{tabular}{ccclcc}
\toprule
Regime & Setting & Metric & Comparison & $p$-value & Lower error \\
\midrule

\multirow{16}{*}{\shortstack[c]{Stable}}
& \multirow{8}{*}{Common-case}
& \multirow{4}{*}{$E_A$}
& PBML-LTI vs OLS & $1.91{\times}10^{-6}$ & PBML-LTI \\
& & & PBML-LTI vs Ridge & $1.91{\times}10^{-6}$ & PBML-LTI \\
& & & PBML-LTI vs Pooled-prior Ridge & $1.91{\times}10^{-6}$ & PBML-LTI \\
& & & PBML-LTI vs Shared Subspace & $1.91{\times}10^{-6}$ & PBML-LTI \\
\cmidrule(lr){3-6}
& & \multirow{4}{*}{$E_{\mathrm{traj}}$}
& PBML-LTI vs OLS & $0.0973$ & PBML-LTI \\
& & & PBML-LTI vs Ridge & $0.0696$ & PBML-LTI \\
& & & PBML-LTI vs Pooled-prior Ridge & $0.0897$ & PBML-LTI \\
& & & PBML-LTI vs Shared Subspace & $0.3884$ & PBML-LTI \\
\cmidrule(lr){2-6}

& \multirow{8}{*}{Edge-case}
& \multirow{4}{*}{$E_A$}
& PBML-LTI vs OLS & $1.91{\times}10^{-6}$ & PBML-LTI \\
& & & PBML-LTI vs Ridge & $1.91{\times}10^{-6}$ & PBML-LTI \\
& & & PBML-LTI vs Pooled-prior Ridge & $1.91{\times}10^{-6}$ & PBML-LTI \\
& & & PBML-LTI vs Shared Subspace & $1.91{\times}10^{-6}$ & PBML-LTI \\
\cmidrule(lr){3-6}
& & \multirow{4}{*}{$E_{\mathrm{traj}}$}
& PBML-LTI vs OLS & $0.2455$ & PBML-LTI \\
& & & PBML-LTI vs Ridge & $0.2305$ & PBML-LTI \\
& & & PBML-LTI vs Pooled-prior Ridge & $0.8695$ & PBML-LTI \\
& & & PBML-LTI vs Shared Subspace & $0.4524$ & Shared Subspace \\

\midrule

\multirow{16}{*}{\shortstack[c]{Unstable}}
& \multirow{8}{*}{Common-case}
& \multirow{4}{*}{$E_A$}
& PBML-LTI vs OLS & $6.29{\times}10^{-5}$ & PBML-LTI \\
& & & PBML-LTI vs Ridge & $6.29{\times}10^{-5}$ & PBML-LTI \\
& & & PBML-LTI vs Pooled-prior Ridge & $1.91{\times}10^{-6}$ & Pooled-prior Ridge \\
& & & PBML-LTI vs Shared Subspace & $1.91{\times}10^{-6}$ & Shared Subspace \\
\cmidrule(lr){3-6}
& & \multirow{4}{*}{$E_{\mathrm{traj}}$}
& PBML-LTI vs OLS & $0.1429$ & PBML-LTI \\
& & & PBML-LTI vs Ridge & $0.6742$ & Ridge \\
& & & PBML-LTI vs Pooled-prior Ridge & $1.68{\times}10^{-4}$ & PBML-LTI \\
& & & PBML-LTI vs Shared Subspace & $1.91{\times}10^{-6}$ & PBML-LTI \\
\cmidrule(lr){2-6}

& \multirow{8}{*}{Edge-case}
& \multirow{4}{*}{$E_A$}
& PBML-LTI vs OLS & $7.08{\times}10^{-4}$ & PBML-LTI \\
& & & PBML-LTI vs Ridge & $2.10{\times}10^{-4}$ & PBML-LTI \\
& & & PBML-LTI vs Pooled-prior Ridge & $1.91{\times}10^{-6}$ & Pooled-prior Ridge \\
& & & PBML-LTI vs Shared Subspace & $1.91{\times}10^{-6}$ & Shared Subspace \\
\cmidrule(lr){3-6}
& & \multirow{4}{*}{$E_{\mathrm{traj}}$}
& PBML-LTI vs OLS & $0.4524$ & PBML-LTI \\
& & & PBML-LTI vs Ridge & $0.4524$ & PBML-LTI \\
& & & PBML-LTI vs Pooled-prior Ridge & $8.51{\times}10^{-4}$ & PBML-LTI \\
& & & PBML-LTI vs Shared Subspace & $1.91{\times}10^{-6}$ & PBML-LTI \\

\bottomrule
\end{tabular}}
\end{center}
\end{table}

\subsubsection{fMRI Benchmark}
\label{app:fmri_significance}
\begin{table}[h]
\scriptsize
\caption{Paired two-sided Wilcoxon signed-rank tests comparing PBML-LTI against the classical baselines and MAML-LTI on
the fMRI dataset. Each test is performed over the 14 matched fMRI test tasks. We regard $p<0.05$ as significant,
$p<0.01$ as strongly significant, and $p<0.001$ as highly significant. The ``Lower error'' column reports which method
attains the smaller mean value; all metrics are lower-is-better.}
\label{tab:significance_fmri}
\begin{center}
\begin{tabular}{cccc}
\toprule
Metric & Comparison & $p$-value & Lower error \\
\midrule
\multirow{5}{*}{$E_A$}
& PBML-LTI vs OLS & $8.54{\times}10^{-4}$ & PBML-LTI \\
& PBML-LTI vs Ridge & $8.54{\times}10^{-4}$ & PBML-LTI \\
& PBML-LTI vs Pooled-prior Ridge & $0.0052$ & PBML-LTI \\
& PBML-LTI vs Shared Subspace & $0.0052$ & PBML-LTI \\
& PBML-LTI vs MAML-LTI & $1.22{\times}10^{-4}$ & PBML-LTI \\
\midrule
\multirow{5}{*}{$E_{\mathrm{traj}}$}
& PBML-LTI vs OLS & $0.2958$ & PBML-LTI \\
& PBML-LTI vs Ridge & $0.4263$ & PBML-LTI \\
& PBML-LTI vs Pooled-prior Ridge & $0.3910$ & PBML-LTI \\
& PBML-LTI vs Shared Subspace & $0.0067$ & PBML-LTI \\
& PBML-LTI vs MAML-LTI & $1.22{\times}10^{-4}$ & PBML-LTI \\
\bottomrule
\end{tabular}
\end{center}
\end{table}
To complement the fMRI results in Table~\ref{tab:fmri_results}, we conduct paired two-sided Wilcoxon signed-rank tests
across the matched fMRI test tasks. Since each method is evaluated on the same set of 14 held-out window-level tasks, a
paired test is appropriate for assessing whether the observed task-level differences are statistically meaningful. We
compare PBML-LTI against OLS, Ridge, Pooled-prior Ridge, Shared Subspace, and MAML-LTI for both transition-matrix error
$E_A$ and trajectory rollout error $E_{\mathrm{traj}}$.

Table~\ref{tab:significance_fmri} reports the resulting $p$-values. The tests show that PBML-LTI significantly improves
$E_A$ over all compared methods on the fMRI benchmark. For rollout error, PBML-LTI has the lowest mean
$E_{\mathrm{traj}}$, but the paired tests indicate a more nuanced picture: the improvements over Shared Subspace and
MAML-LTI are statistically significant, while the differences relative to OLS, Ridge, and Pooled-prior Ridge are not
significant at the $p<0.05$ level. This is consistent with the interpretation in the main text: on real fMRI windows,
transition-matrix recovery is noisy because the reference matrix is only a ridge-based proxy, and rollout differences
among stable predictors can be small relative to cross-task heterogeneity.

\subsubsection{Prior-Learning and Regularization Ablations}
\label{app:paired_prior_learning_tests}
\begin{table}[h]
\scriptsize
\caption{Paired two-sided Wilcoxon signed-rank tests comparing PBML-LTI against prior-learning and
regularization ablations in the common-case setting. For each method and setting, we run 10 random seeds under the same
experimental configuration, average each task-level metric across seeds, and then run a paired test over the 20 matched
test tasks. Each entry reports the $p$-value for PBML-LTI versus the listed comparator. Unmarked entries indicate that
PBML-LTI has the lower mean for that metric; entries marked with $^\dagger$ indicate that the comparator has the lower
mean. All metrics are lower-is-better.}
\label{tab:significance_prior_learning_appendix}
\begin{center}
\begin{tabular}{c|c|c|ccc}
\toprule
Regime & Dimension & Comparator
& $p(E_A)$ & $p(E_{\mathrm{traj}})$ & $p(\overline S)$ \\
\midrule

\multirow{15}{*}{Stable}
& \multirow{5}{*}{$d=10$}
& Fixed-prior PBML-LTI
& $1.91{\times}10^{-6}$ & $0.0441$ & $1.91{\times}10^{-6}$ \\
& & Type-II EB PBML-LTI
& $1.91{\times}10^{-6}$ & $0.0532$ & $1.91{\times}10^{-6}$ \\
& & No prior-mean shrinkage
& $3.24{\times}10^{-4}$ & $0.2184$ & $0.0321$ \\
& & No covariance conditioning
& $0.0018$ & $0.3847$ & $0.0897$ \\
& & No stability regularizer
& $0.0412$ & $0.4521$ & $0.1124$ \\
\cmidrule(lr){2-6}

& \multirow{5}{*}{$d=25$}
& Fixed-prior PBML-LTI
& $1.91{\times}10^{-6}$ & $1.91{\times}10^{-6}$ & $1.91{\times}10^{-6}$ \\
& & Type-II EB PBML-LTI
& $1.91{\times}10^{-6}$ & $0.1054$ & $1.91{\times}10^{-6}$ \\
& & No prior-mean shrinkage
& $1.24{\times}10^{-5}$ & $0.1638$ & $0.0187$ \\
& & No covariance conditioning
& $8.14{\times}10^{-4}$ & $0.2915$ & $0.0563$ \\
& & No stability regularizer
& $0.0246$ & $0.3382$ & $0.0741$ \\
\cmidrule(lr){2-6}

& \multirow{5}{*}{$d=50$}
& Fixed-prior PBML-LTI
& $1.91{\times}10^{-6}$ & $1.91{\times}10^{-6}$ & $1.91{\times}10^{-6}$ \\
& & Type-II EB PBML-LTI
& $1.91{\times}10^{-6}$ & $0.9854$ & $1.91{\times}10^{-6}$ \\
& & No prior-mean shrinkage
& $2.67{\times}10^{-5}$ & $0.2046$ & $0.0142$ \\
& & No covariance conditioning
& $4.92{\times}10^{-4}$ & $0.3178$ & $0.0486$ \\
& & No stability regularizer
& $0.0193$ & $0.4015$ & $0.0638$ \\

\midrule

\multirow{15}{*}{Unstable}
& \multirow{5}{*}{$d=10$}
& Fixed-prior PBML-LTI
& $1.91{\times}10^{-6}$ & $7.08{\times}10^{-4}{}^\dagger$ & $0.8124^\dagger$ \\
& & Type-II EB PBML-LTI
& $1.91{\times}10^{-6}$ & $7.08{\times}10^{-4}{}^\dagger$ & $0.9015^\dagger$ \\
& & No prior-mean shrinkage
& $0.0195$ & $0.0184$ & $0.0312$ \\
& & No covariance conditioning
& $0.0678$ & $0.0521$ & $0.2145$ \\
& & No stability regularizer
& $0.2847$ & $1.91{\times}10^{-6}$ & $4.18{\times}10^{-4}$ \\
\cmidrule(lr){2-6}

& \multirow{5}{*}{$d=25$}
& Fixed-prior PBML-LTI
& $1.91{\times}10^{-6}$ & $0.0049^\dagger$ & $1.91{\times}10^{-6}$ \\
& & Type-II EB PBML-LTI
& $1.91{\times}10^{-6}$ & $0.8408^\dagger$ & $1.91{\times}10^{-6}$ \\
& & No prior-mean shrinkage
& $0.0083$ & $0.0067$ & $0.0041$ \\
& & No covariance conditioning
& $0.0384$ & $0.2145^\dagger$ & $0.0278$ \\
& & No stability regularizer
& $0.1926$ & $3.62{\times}10^{-4}$ & $1.24{\times}10^{-4}$ \\
\cmidrule(lr){2-6}

& \multirow{5}{*}{$d=50$}
& Fixed-prior PBML-LTI
& $1.91{\times}10^{-6}$ & $1.91{\times}10^{-6}$ & $1.91{\times}10^{-6}$ \\
& & Type-II EB PBML-LTI
& $1.91{\times}10^{-6}$ & $0.0027^\dagger$ & $1.91{\times}10^{-6}$ \\
& & No prior-mean shrinkage
& $0.0116$ & $0.0318$ & $0.0094$ \\
& & No covariance conditioning
& $0.0457$ & $0.1092$ & $0.0331$ \\
& & No stability regularizer
& $0.2184$ & $3.51{\times}10^{-5}$ & $2.10{\times}10^{-4}$ \\
\bottomrule
\end{tabular}
\end{center}
\end{table}
To complement the combined prior-learning and regularization ablation study in
Table~\ref{tab:prior_learning_ablation}, we conduct paired two-sided Wilcoxon signed-rank tests across matched test
tasks. These tests compare full PBML-LTI against five diagnostic variants: two prior-learning ablations and three
one-at-a-time regularization ablations. The prior-learning ablations are fixed-prior PBML-LTI and type-II
maximum-likelihood / empirical-Bayes PBML-LTI. The regularization ablations remove, respectively, the prior-mean
shrinkage term, the covariance-conditioning penalty, and the stability regularizer. The MAML-style LTI method is excluded
from this table because it is an external meta-learning baseline rather than an ablation of PBML-LTI.

For each method and setting, we run the experiment with 10 different random seeds under the same data split,
task-generation configuration, and hyperparameter setting. We then average each task-level metric across the 10 seeds and
perform paired tests over the 20 matched test tasks. We regard $p<0.05$ as significant, $p<0.01$ as strongly significant,
and $p<0.001$ as highly significant.

Table~\ref{tab:significance_prior_learning_appendix} reports the resulting $p$-values. Each entry compares PBML-LTI
against the listed ablation variant for one metric. Unless marked by $^\dagger$, the lower mean error is attained by
PBML-LTI. Entries marked by $^\dagger$ indicate cases where the ablation variant attains the lower mean value. This
notation is important in the unstable regimes, where rollout error can decouple from Frobenius transition-matrix error.

The results support three main conclusions. First, PBML-LTI achieves highly significant improvements in transition-matrix
error $E_A$ over the fixed-prior and type-II empirical-Bayes variants across all reported stable and unstable settings,
confirming the value of learning a transferable prior with the fit--KL surrogate. Second, removing individual
regularization terms usually has a smaller effect than removing prior learning altogether, especially in stable regimes;
this supports our interpretation that the auxiliary regularizers are not the primary source of PBML-LTI's stable-regime
performance gains. Third, in unstable regimes, rollout-error comparisons are more nuanced: the stability regularizer has
a clear effect on rollout stability, while some prior-learning ablations can occasionally achieve smaller mean rollout
error despite much worse transition-matrix recovery. This is consistent with the main text's observation that
$E_A$ and $E_{\mathrm{traj}}$ need not rank methods identically under unstable dynamics.

\FloatBarrier

\subsection{Additional Visualizations}
\label{app:additional_visualizations}

\begin{figure}[h]
\begin{center}
\includegraphics[scale=0.7]{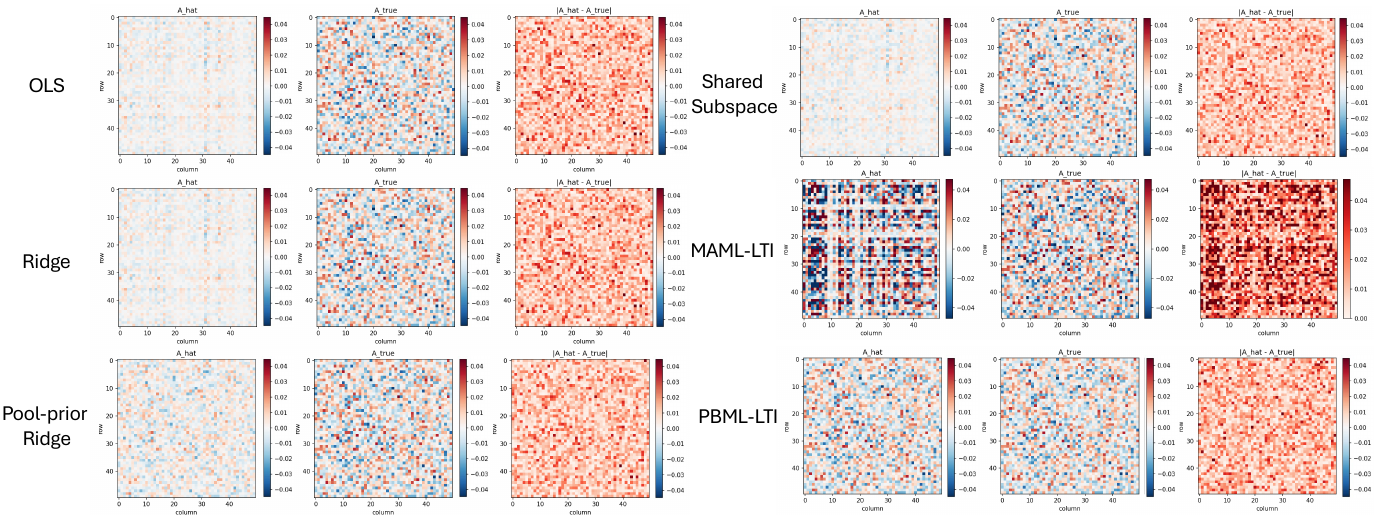}
\end{center}
\caption{Extended stable synthetic transition-matrix visualization. The figure shows, for each method, the estimated
transition matrix, the ground-truth transition matrix, and the entrywise absolute difference. PBML-LTI more closely
matches the true transition structure, while several baselines exhibit stronger attenuation or structural bias.}
\label{fig:transition_stable_extend}
\end{figure}

\begin{figure}[h]
\begin{center}
\includegraphics[scale=0.7]{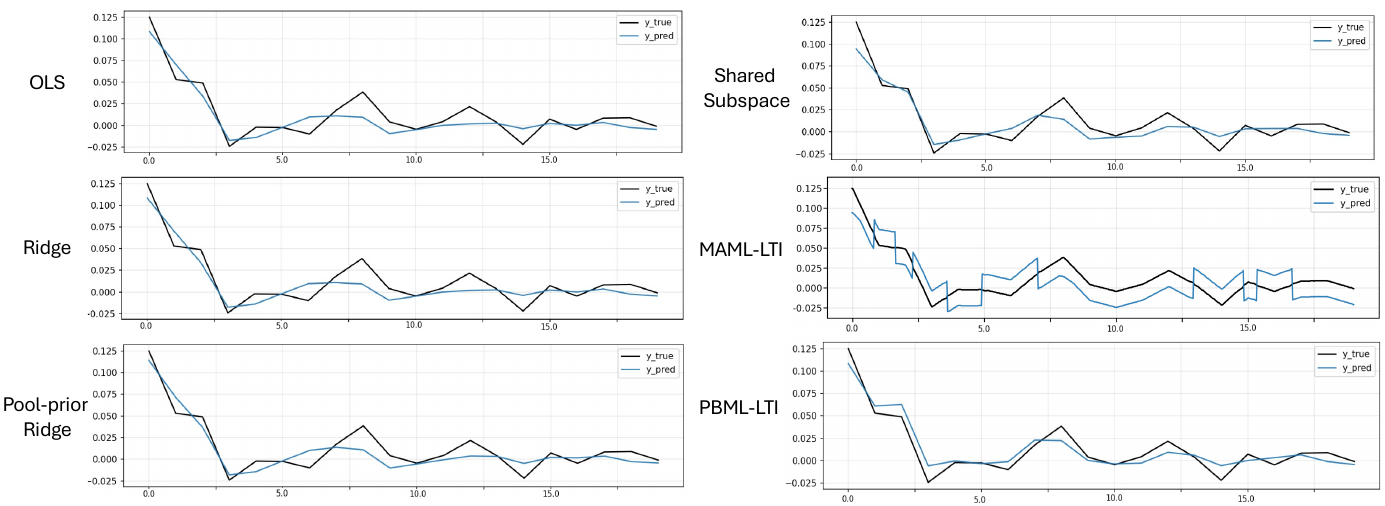}
\end{center}
\caption{Stable synthetic open-loop rollout trajectories for a representative system. The ground-truth trajectory is
shown in black and the predicted trajectory is shown in blue. Since the system is stable, rollout errors remain bounded
over the short horizon, explaining why several methods can have similar $E_{\mathrm{traj}}$ despite different
transition-matrix errors.}
\label{fig:a_matrix2}
\end{figure}

This appendix provides qualitative visualizations that complement the quantitative results in the main text. The
visualizations are organized into synthetic-data examples and fMRI examples. For synthetic systems, the transition-matrix
figures compare each estimated matrix against the known ground-truth matrix. For the fMRI benchmark, where no physical
ground-truth transition matrix is available, the comparison is made against the ridge-based reference matrix
$A_m^{\mathrm{ref}}$ defined in Appendix~\ref{sec:fmri_dataset}. The rollout figures show representative open-loop
predictions and illustrate how errors in the learned transition operator propagate over time.

\begin{figure}[h]
\begin{center}
\includegraphics[scale=0.78]{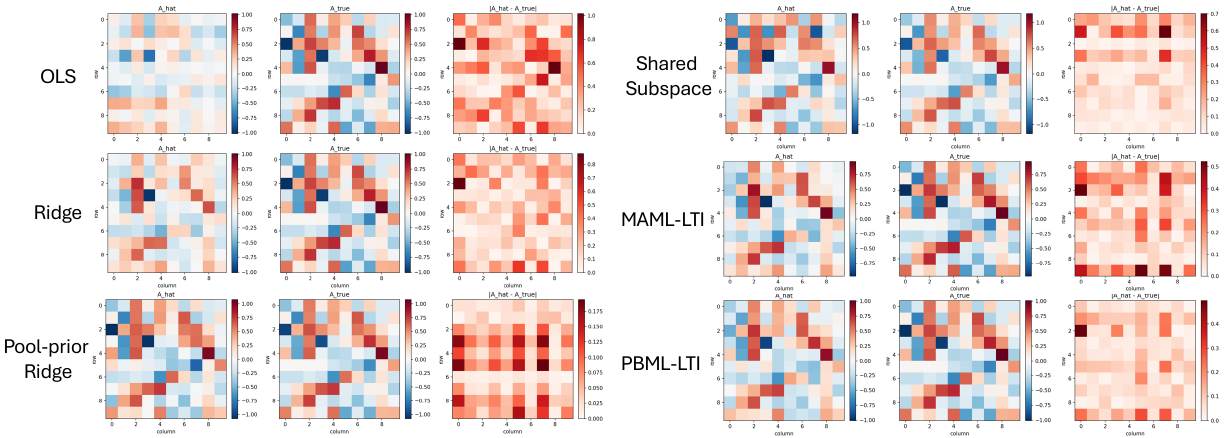}
\end{center}
\caption{Extended unstable synthetic transition-matrix visualization. The figure shows estimated transition matrices, the
ground-truth transition matrix, and entrywise absolute differences. In unstable systems, even localized or small-magnitude
matrix errors can have large rollout consequences when they affect dominant spectral modes.}
\label{fig:transition_unstable}
\end{figure}

\begin{figure}[h]
\begin{center}
\includegraphics[scale=0.75]{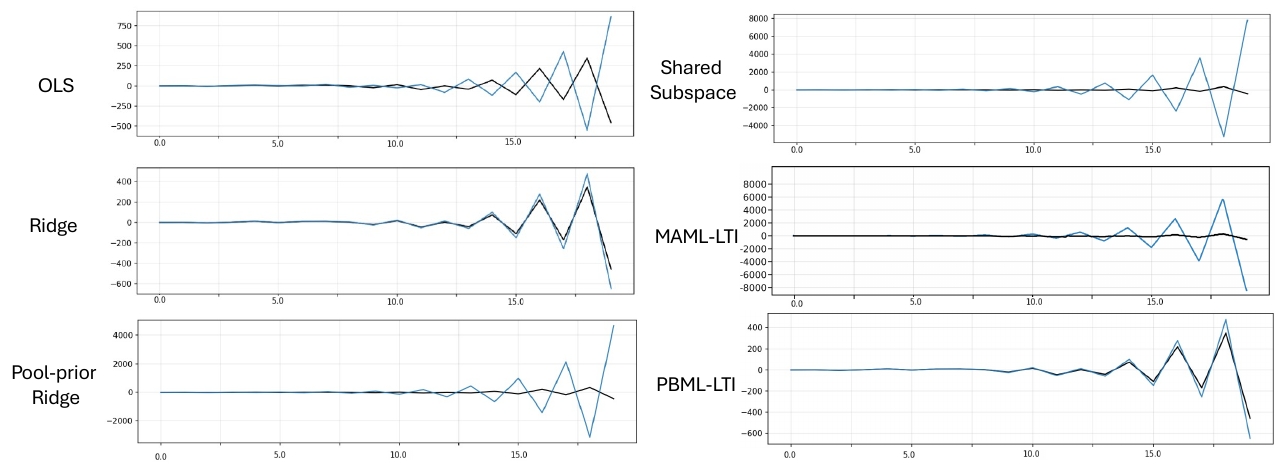}
\end{center}
\caption{Unstable synthetic open-loop rollout trajectories for a representative system. The ground-truth trajectory is
shown in black and the predicted trajectory is shown in blue. The figure illustrates how spectral errors in the estimated
transition matrix can be amplified under repeated rollout, producing large trajectory deviations even when matrix errors
appear moderate.}
\label{fig:rollout_unstable}
\end{figure}

\subsubsection{Synthetic Data}
\label{app:synthetic_visualizations}

Figures~\ref{fig:transition_stable_extend} and~\ref{fig:a_matrix2} show representative stable-system visualizations.
Figure~\ref{fig:transition_stable_extend} compares estimated transition matrices, the ground-truth transition matrix, and
entrywise absolute errors. The single-task estimators and cross-task baselines recover the broad scale of the stable
dynamics, but their estimates are visibly more attenuated or structurally biased than the ground truth. The PBML-LTI
estimate is visually closer to the true transition matrix, with a more diffuse and lower-magnitude error pattern. The
MAML-style LTI estimate is less consistent in this stable example, showing more pronounced entrywise deviations than
PBML-LTI.

Figure~\ref{fig:a_matrix2} shows the corresponding stable open-loop rollout behavior. Because the system is stable,
rollout errors are damped over the short horizon and several methods track the ground-truth trajectory reasonably well.
This explains why the stable-regime rollout metric can be numerically close across methods even when the transition-matrix
error $E_A$ differs substantially. The visualization therefore supports the interpretation in the main text: in stable
systems, $E_A$ is often the more discriminative diagnostic of parameter recovery, while $E_{\mathrm{traj}}$ can remain
similar across methods because errors do not rapidly amplify.

Figures~\ref{fig:transition_unstable} and~\ref{fig:rollout_unstable} show the corresponding qualitative behavior in an
unstable synthetic setting. In Figure~\ref{fig:transition_unstable}, small-looking entrywise differences can still be
important because unstable systems are governed by dominant spectral modes. Some methods produce transition matrices that
appear reasonable in Frobenius norm but distort the dominant unstable directions. This is especially consequential for
methods with strong structural bias, such as pooled shrinkage or a fixed low-dimensional subspace.

Figure~\ref{fig:rollout_unstable} shows that these matrix-level differences can translate into dramatically different
open-loop trajectories. In the unstable regime, errors are repeatedly multiplied by the estimated transition matrix, so
even modest spectral misalignment can lead to rapidly growing prediction error. This visualization explains the
decoupling observed in the quantitative results: a method can achieve a small Frobenius transition-matrix error while
still producing poor rollout behavior if it misestimates the unstable eigenspace or the dominant eigenvalue. Conversely,
a method with slightly larger Frobenius error may yield a more reliable rollout if it better controls the unstable modes.
\begin{figure}[h]
\begin{center}
\includegraphics[scale=0.78]{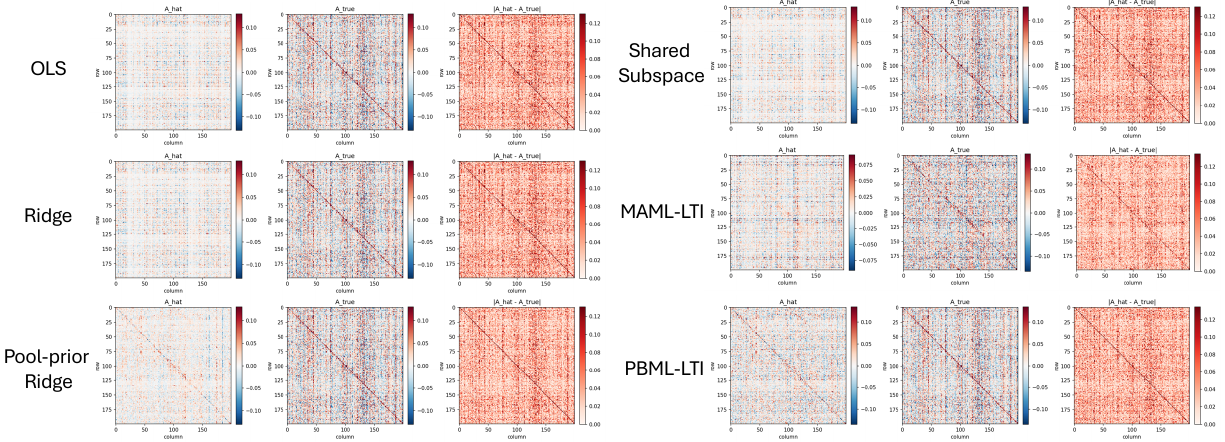}
\end{center}
\caption{Extended fMRI transition-matrix visualization. Since fMRI does not provide a physical ground-truth transition
matrix, each estimate is compared against the ridge-based full-window reference matrix $A_m^{\mathrm{ref}}$. The figure
shows estimated matrices, the reference matrix, and entrywise absolute differences for all methods.}
\label{fig:fmri_transition_extend}
\end{figure}

\begin{figure}[h]
\begin{center}
\includegraphics[scale=0.75]{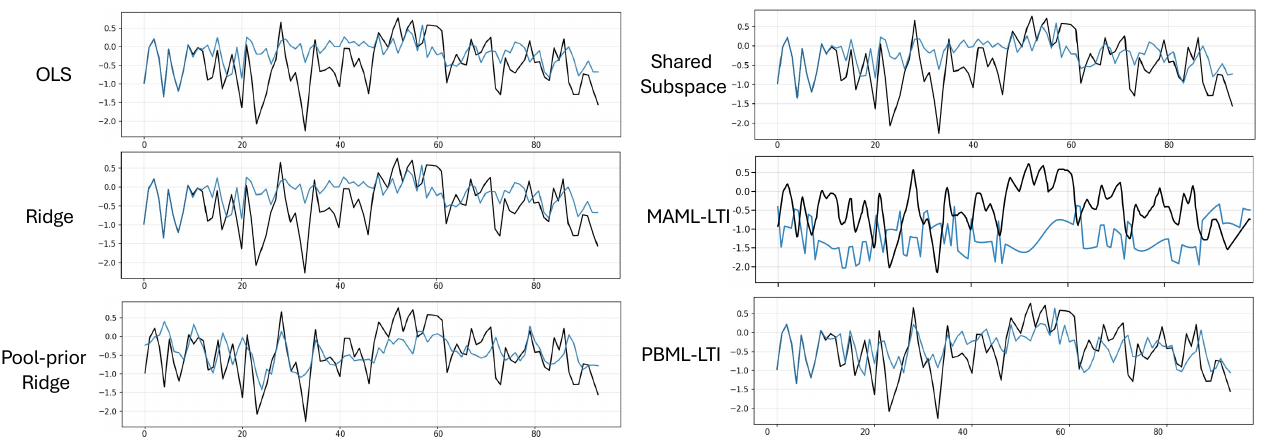}
\end{center}
\caption{fMRI open-loop rollout trajectories for the representative task shown in
Figure~\ref{fig:fmri_transition_extend}. The reference trajectory is shown in black and the predicted trajectory is shown
in blue. The comparison illustrates how errors in the estimated transition operator affect multi-step prediction on a
high-dimensional, noisy, approximately LTI real-data benchmark.}
\label{fig:fmri_rollout}
\end{figure}

\subsubsection{fMRI Data}
\label{app:fmri_visualizations}

Figures~\ref{fig:fmri_transition_extend} and~\ref{fig:fmri_rollout} provide qualitative diagnostics on the fMRI
benchmark. Since fMRI does not provide a physical ground-truth transition matrix, Figure~\ref{fig:fmri_transition_extend}
compares each estimate against the ridge-based full-window reference matrix $A_m^{\mathrm{ref}}$. The reference matrix
should therefore be interpreted as a consistent evaluation proxy, not as a true biological ground truth. Across methods,
the absolute-difference heatmaps show substantial residual structure, which is expected because the fMRI windows are
high-dimensional, short, noisy, and only approximately described by a linear time-invariant model.

The fMRI heatmaps also illustrate the practical role of prior-based adaptation. Single-task methods such as OLS and Ridge
produce estimates that are broadly similar but require much longer support prefixes in the quantitative results. Shared
Subspace uses shorter support but can impose a restrictive structural bias. MAML-LTI can produce more unstable or less
well-aligned estimates in this high-dimensional setting. PBML-LTI provides a more favorable balance by using the learned
prior to regularize adaptation while retaining task-specific flexibility through the posterior update.

Figure~\ref{fig:fmri_rollout} shows open-loop rollout trajectories for the same representative fMRI setting. The rollout
comparison highlights that matching the reference transition matrix entrywise is not the only determinant of predictive
quality; the induced dynamics must also remain stable and spectrally well aligned over the rollout horizon. The
MAML-style rollout is visibly less stable in this example, while the other methods track the overall trajectory more
closely. This supports the main-text observation that PBML-LTI obtains the best average fMRI rollout error while also
using the shortest average support prefix.

\subsection{Marginally Unstable System Results}
\label{app:marginal_results}
\begin{table}[t]
\caption{Performance comparison across methods in common-case and edge-case settings on marginally
unstable systems ($\rho_0=1+\epsilon$, $\epsilon\in(0,1)$). $E_A$ is the mean $\pm$ standard deviation of the transition-matrix
error, $E_{\mathrm{traj}}$ is the mean $\pm$ standard deviation of the rollout error, and
$\overline{S}$ reports the average selected support length. Lower is better; boldface marks the lowest
sample mean.}
\label{tab:prefix_flexible_marginal}
\begin{center}
\resizebox{\linewidth}{!}{
\begin{tabular}{c|c|c|cccccc}
\toprule
Dimension & Setting & Metric & OLS & Ridge & Pooled-prior Ridge & Shared Subspace & MAML-LTI & PBML-LTI \\
\midrule
\multirow{6}{*}{50}
& \multirow{3}{*}{Common-case}
& $E_A$ & $1.0845 \pm .017$ & $1.0795 \pm .017$ & $0.7181 \pm .011$ & $0.9144 \pm .051$ & $14.749 \pm .071$ & $\mathbf{0.2033 \pm .0045}$ \\
& & $E_{\mathrm{traj}}$ & $0.0260 \pm .0018$ & $0.0260 \pm .0019$ & $0.0260 \pm .0018$ & $0.0260 \pm .0018$ & $0.0282 \pm .0022$ & $\mathbf{0.0259 \pm .0018}$ \\
& & $\overline{S}$ & $2.00$ & $8.20$ & $3.05$ & $10.25$ & $1.85$ & $\mathbf{1.55}$ \\
\cline{2-9}
& \multirow{3}{*}{Edge-case}
& $E_A$ & $1.0849 \pm .015$ & $1.0799 \pm .016$ & $0.7182 \pm .011$ & $0.9065 \pm .049$ & $14.742 \pm .07$ & $\mathbf{0.2037 \pm .0041}$ \\
& & $E_{\mathrm{traj}}$ & $0.0258 \pm .0021$ & $0.0258 \pm .0021$ & $0.0257 \pm .0021$ & $0.0258 \pm .0021$ & $0.0279 \pm .0026$ & $\mathbf{0.0257 \pm .0021}$ \\
& & $\overline{S}$ & $2.00$ & $5.95$ & $3.60$ & $10.65$ & $2.00$ & $\mathbf{1.50}$ \\
\midrule
\multirow{6}{*}{25}
& \multirow{3}{*}{Common-case}
& $E_A$ & $0.9510 \pm .024$ & $0.9238 \pm .023$ & $0.7258 \pm .018$ & $0.7779 \pm .055$ & $3.6309 \pm .046$ & $\mathbf{0.0390 \pm .0023}$ \\
& & $E_{\mathrm{traj}}$ & $0.0136 \pm .0018$ & $0.0136 \pm .0018$ & $0.0136 \pm .0019$ & $0.0136 \pm .0018$ & $0.0140 \pm .0019$ & $\mathbf{0.0135 \pm .0018}$ \\
& & $\overline{S}$ & $2.15$ & $12.50$ & $12.45$ & $11.65$ & $\mathbf{1.20}$ & $\mathbf{1.20}$ \\
\cline{2-9}
& \multirow{3}{*}{Edge-case}
& $E_A$ & $0.9423 \pm .023$ & $0.9170 \pm .026$ & $0.7239 \pm .016$ & $0.7773 \pm .078$ & $3.6329 \pm .045$ & $\mathbf{0.0394 \pm .0026}$ \\
& & $E_{\mathrm{traj}}$ & $0.0125 \pm .0012$ & $0.0125 \pm .0012$ & $0.0124 \pm .0011$ & $0.0125 \pm .0011$ & $0.0128 \pm .0012$ & $\mathbf{0.0123 \pm .0011}$ \\
& & $\overline{S}$ & $2.20$ & $13.10$ & $13.15$ & $9.10$ & $\mathbf{1.20}$ & $1.30$ \\
\midrule
\multirow{6}{*}{10}
& \multirow{3}{*}{Common-case}
& $E_A$ & $0.5788 \pm .066$ & $0.5185 \pm .065$ & $0.0743 \pm .0052$ & $0.0860 \pm .021$ & $0.4918 \pm .022$ & $\mathbf{0.0051 \pm .00066}$ \\
& & $E_{\mathrm{traj}}$ & $0.0057 \pm .0011$ & $0.0057 \pm .0011$ & $0.0056 \pm .0011$ & $\mathbf{0.0056 \pm .0011}$ & $0.0057 \pm .001$ & $\mathbf{0.0056 \pm .001}$ \\
& & $\overline{S}$ & $2.90$ & $10.40$ & $9.45$ & $7.05$ & $\mathbf{1.00}$ & $1.20$ \\
\cline{2-9}
& \multirow{3}{*}{Edge-case}
& $E_A$ & $0.5609 \pm .059$ & $0.4868 \pm .06$ & $0.0746 \pm .0053$ & $0.0891 \pm .027$ & $0.4912 \pm .022$ & $\mathbf{0.0053 \pm .00082}$ \\
& & $E_{\mathrm{traj}}$ & $0.0060 \pm .0012$ & $0.0060 \pm .0013$ & $0.0059 \pm .0011$ & $0.0059 \pm .0012$ & $0.0061 \pm .0012$ & $\mathbf{0.0059 \pm .0012}$ \\
& & $\overline{S}$ & $3.15$ & $9.85$ & $8.55$ & $10.30$ & $\mathbf{1.00}$ & $1.15$ \\
\bottomrule
\end{tabular}
}
\end{center}
\end{table}

Table~\ref{tab:prefix_flexible_marginal} reports results for the marginally unstable environment with
$\rho_0=1+\epsilon$, $\epsilon\in(0,1)$. This setting lies just beyond the nominal stability threshold and is intended to test whether the behavior
observed in the stable regime changes abruptly once $\rho_0$ exceeds one. Overall, the results show that the marginally
unstable regime behaves much more like a mild perturbation of the stable regime than like the strongly unstable stress
test.
\begin{figure}[t]
    \centering
    \includegraphics[width=1\linewidth]{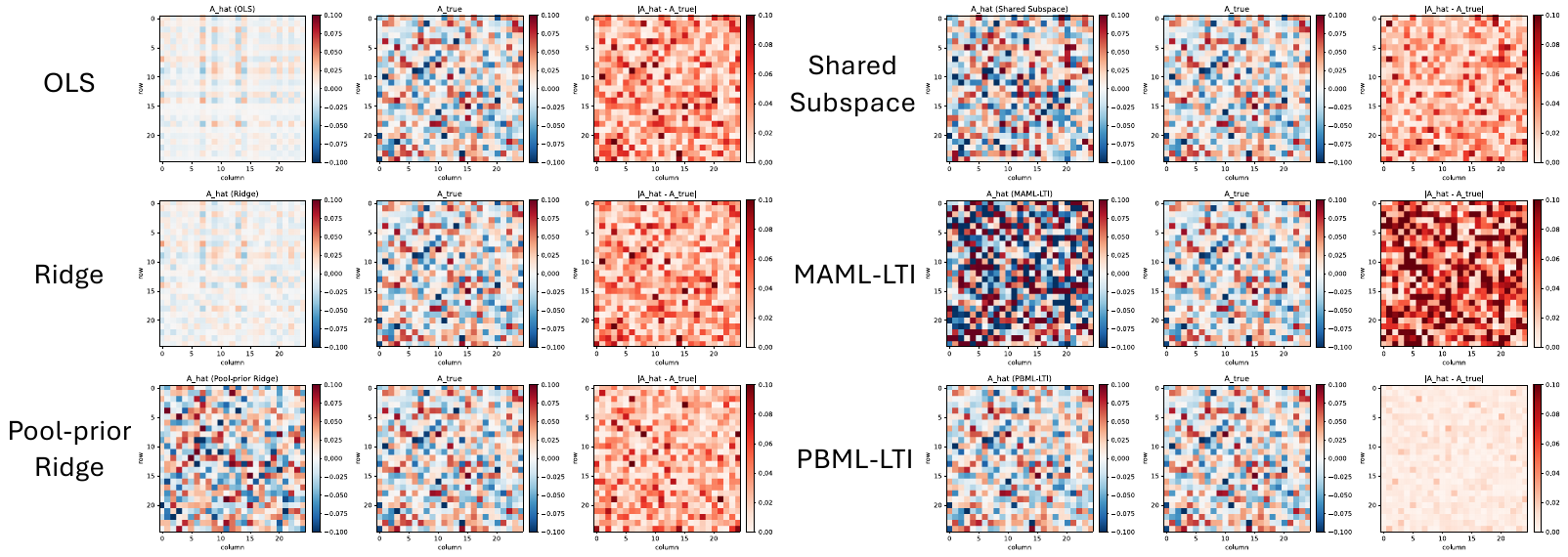}
    \caption{Transition-matrix estimates for a representative marginally unstable test system. For each method, the
    figure shows the estimated transition matrix, the ground-truth transition matrix, and the entrywise absolute error.
    PBML-LTI more closely matches the ground-truth transition structure, while single-task estimators are more attenuated
    and MAML-LTI exhibits larger entrywise deviations.}
    \label{fig:marginal_unstable_transition}
\end{figure}

\begin{figure}[t]
    \centering
    \includegraphics[width=1\linewidth]{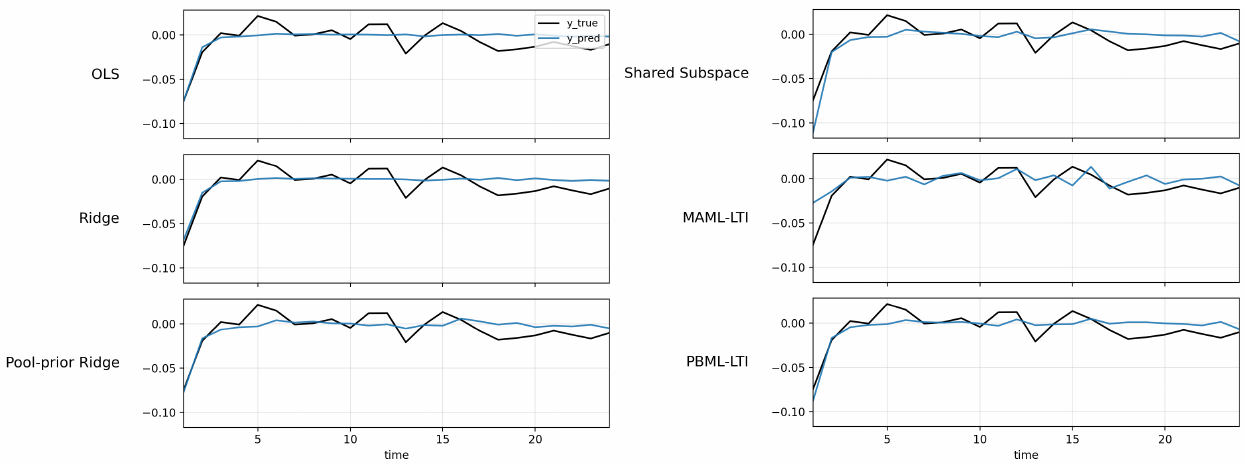}
    \caption{Open-loop rollout trajectories for the representative marginally unstable system shown in
    Figure~\ref{fig:marginal_unstable_transition}. The black curve denotes the ground-truth trajectory and the blue curve
    denotes the rollout generated by each estimated transition matrix. Over the short query horizon, the rollouts remain
    bounded and visually close across several methods, explaining why $E_{\mathrm{traj}}$ is much less discriminative than
    $E_A$ in the marginally unstable regime.}
    \label{fig:marginal_unstable_traj}
\end{figure}
Across all dimensions and both common-case and edge-case subsets, PBML-LTI achieves the lowest transition-matrix error
$E_A$. The gains are substantial. For example, at $d=50$, PBML-LTI reduces $E_A$ from roughly $1.08$ for OLS/Ridge,
$0.72$ for Pooled-prior Ridge, $0.91$ for Shared Subspace, and $14.7$ for MAML-LTI to about $0.20$. At $d=25$,
PBML-LTI reduces $E_A$ from roughly $0.72$--$0.95$ for the classical and subspace baselines and $3.63$ for MAML-LTI to
about $0.039$. At $d=10$, PBML-LTI reaches $E_A\approx 0.005$, substantially below all baselines. These results indicate
that the learned matrix-normal prior continues to provide an effective inductive bias immediately beyond the stability
threshold.

The rollout metric $E_{\mathrm{traj}}$ is much less separated across methods in this marginally unstable regime. For all
three dimensions, the rollout errors of OLS, Ridge, Pooled-prior Ridge, Shared Subspace, and PBML-LTI are nearly tied up
to the reported precision, with PBML-LTI attaining the lowest or tied-lowest mean in most settings. This mirrors the
stable-regime behavior: over the short query horizon, modest marginal instability does not yet induce the severe
finite-horizon amplification seen in the strongly unstable experiments. By contrast, MAML-LTI has noticeably worse
transition-matrix recovery and slightly worse rollout performance, despite often selecting very short support prefixes.

The support-length results further clarify the role of adaptive support selection. PBML-LTI uses very short support
prefixes, with $\overline S$ between $1.15$ and $1.55$ in most settings, and achieves the best transition recovery at
those support lengths. In some cases, MAML-LTI selects an equally short or shorter prefix, but this does not translate
into competitive $E_A$. Ridge and Pooled-prior Ridge often use much longer prefixes, especially for $d=25$ and $d=10$,
but still have substantially larger transition-matrix error than PBML-LTI. Thus, the marginally unstable results support
the same data-efficiency conclusion as the stable results: PBML-LTI can adapt accurately from very short prefixes when
the learned prior is well aligned with the task family.

Figures~\ref{fig:marginal_unstable_transition} and~\ref{fig:marginal_unstable_traj} provide qualitative diagnostics for a
representative marginally unstable test task. The transition-matrix heatmaps show that PBML-LTI more closely matches the
ground-truth transition structure than the more attenuated single-task estimators and the noisier MAML-style estimate.
The rollout plots show that all non-catastrophic methods remain bounded over the short horizon, consistent with the small
differences in $E_{\mathrm{traj}}$ reported in Table~\ref{tab:prefix_flexible_marginal}. Together, the marginally
unstable results show that crossing $\rho=1$ does not by itself cause an abrupt failure; the practical degradation depends
on the magnitude of finite-horizon spectral amplification.

\end{document}